\documentclass{article}

\usepackage{iclr2026_conference,times}
\iclrfinalcopy

\usepackage{microtype}
\usepackage{comment}

\usepackage{paralist,enumitem}

\usepackage{graphicx}
\usepackage{subfigure}
\usepackage{booktabs} % for professional tables
\usepackage{amsmath}
\usepackage{amsfonts}
\usepackage{nccmath}
\usepackage{amsthm}
\newtheorem{lemma}{Lemma}
\usepackage{textcomp}
\usepackage{upgreek}
\usepackage{diagbox} 
\usepackage{makecell}

\usepackage{bbm}
\usepackage{mathtools}
\usepackage{multirow} 

\DeclarePairedDelimiter\floor{\lfloor}{\rfloor}

\newtheorem{definition}{Definition}

\newtheorem{proposition}{Proposition}
\newcommand{\mysubsection}[1]{\medskip\noindent\textbf{#1}}

\usepackage{amssymb} % for \mathbb
\usepackage{amsmath} % for \mathbb
\usepackage[T1]{fontenc}
\newcommand{\z}{\textbf{z}}
\newcommand{\x}{\textbf{x}}
\newcommand{\y}{\textbf{y}}

\newcommand{\yes}{\textit{Yes}}
\newcommand{\no}{\textit{No}}

\newcommand{\fml}[1]{{\mathcal{#1}}}
\newcommand{\mbf}[1]{\ensuremath\mathbf{#1}}
\newcommand{\mbb}[1]{\ensuremath\mathbb{#1}}

\usepackage{hyperref}
\usepackage{xspace}
\usepackage{algorithm}

\usepackage{algpseudocode}
\DeclareMathOperator*{\argmax}{argmax} % thin space, limits underneath in displays
\usepackage[utf8]{inputenc} % allow utf-8 input
\usepackage[T1]{fontenc}    % use 8-bit T1 fonts
\usepackage{hyperref}       % hyperlinks
\usepackage{url}            % simple URL typesetting
\usepackage{booktabs}       % professional-quality tables
\usepackage{amsfonts}       % blackboard math symbols
\usepackage{nicefrac}       % compact symbols for 1/2, etc.
\usepackage{microtype}      % microtypography
\usepackage{xcolor}         % colors

\usepackage{cleveref}
\usepackage{thmtools}

\title{\centering Unifying Formal Explanations: \\ A Complexity-Theoretic Perspective}

\author{%
  Shahaf Bassan$^{1}$\thanks{Equal contribution.}\ \ , Xuanxiang Huang$^{2*}$\ \ , Guy Katz$^{1}$ \\
  The Hebrew University of Jerusalem$^{1}$, Nanyang Technological University$^{2}$\\
{\texttt{shahaf.bassan@mail.huji.ac.il, xuanxiang.huang@ntu.edu.sg,}} \\
\texttt{g.katz@mail.huji.ac.il}
}
\begin{document}

\maketitle

\begin{abstract}
    Previous work has explored the computational complexity of deriving two fundamental types of explanations for ML model predictions:  
\begin{inparaenum}[(i)] \item \emph{sufficient reasons}, which are subsets of input features that, when fixed, determine a prediction, and  
\item \emph{contrastive reasons}, which are subsets of input features that, when modified, alter a prediction\end{inparaenum}. Prior studies have examined these explanations in different contexts, such as non-probabilistic versus probabilistic frameworks and local versus global settings. In this study, we introduce a unified framework for analyzing these explanations, demonstrating that they can all be characterized through the minimization of a unified probabilistic value function. We then prove that the complexity of these computations is influenced by three key properties of the value function: \begin{inparaenum}[(i)] \item \emph{monotonicity}, \item \emph{submodularity}, and \item \emph{supermodularity} \end{inparaenum} --- which are three fundamental properties in \emph{combinatorial optimization}. Our findings uncover some counterintuitive results regarding the nature of these properties within the explanation settings examined. For instance, although the \emph{local} value functions do not exhibit monotonicity or submodularity/supermodularity whatsoever, we demonstrate that the \emph{global} value functions do possess these properties. This distinction enables us to prove a series of novel polynomial-time results for computing various explanations with provable guarantees in the global explainability setting, across a range of ML models that span the interpretability spectrum, such as neural networks, decision trees, and tree ensembles. In contrast, we show that even highly simplified versions of these explanations become NP-hard to compute in the corresponding local explainability setting.\end{abstract}

\section{Introduction}
Despite substantial progress in methods for explaining ML model decisions, the literature has consistently unfortunately found that many desirable explanation types with different provable guarantees are computationally hard to obtain~\citep{BaMoPeSu20, van2022tractability}, with the difficulty typically worsening in complex or highly non-linear models~\citep{BaMoPeSu20, adolfi2024computational, adolfi2024complexity}.
%Despite significant advancements in methods for explaining ML model decisions,
%the literature has repeatedly unfortunately found that computing many highly-desired explanation types pose substantial computational challenges~\citep{BaMoPeSu20, van2022tractability}, which are further amplified in complex or highly non-linear models~\citep{BaMoPeSu20, adolfi2024computational, adolfi2024complexity}, where explanations are often most critical. 
As a result, the \emph{computational complexity} of obtaining explanations has become a central theoretical focus, with many recent works aiming to chart which types of explanations can be efficiently obtained for different kinds of models, and which remain out of reach~\citep{BaMoPeSu20, waldchen2021computational, arenas2022computing, arenas2023complexity, marzouk2024tractability, ordyniak2023parameterized, LABER2024106437, bhattacharjee2024auditing, blanc2021provably, blanc2022query}.

\textbf{From sufficient to contrastive reasons.} Among studies on the computational complexity of generating explanations, two fundamental types of explanations for ML models were extensively examined: \begin{inparaenum}[(i)]
\item \emph{sufficient reasons} and \item \emph{contrastive reasons}\end{inparaenum}~\citep{BaMoPeSu20, arenas2022computing, audemard2022preferred, ArBaBaPeSu21, barcelo2025explaining, marques2022delivering, ignatiev2020contrastive, darwiche2020reasons}. %\begin{inparaenum}[(i)]  
%\item 
A \emph{sufficient reason} is a subset of input features $S$ such that when these features are fixed to specific values, the model's prediction remains unchanged, regardless of the values assigned to the complementary set $ \overline{S}$. %\item 
A \emph{contrastive reason} is a subset of input features $S$ such that modifying these features %(while keeping those in $\overline{S}$ fixed)
leads to a change in the model's prediction.  
%\end{inparaenum}

Unlike additive attribution methods, which allocate importance scores across features but are often hard for humans to interpret~\citep{kumar2020problems} or lack actionability~\citep{bilodeau2024impossibility}, sufficient and contrastive reasons provide \emph{discrete, condition-like explanations} that directly answer ``what is enough to justify this prediction?'' or ``what must change to flip it?''. Their intuitive nature has given them a central role in many classic XAI methods~\citep{ribeiro2018anchors, carter2019made, ignatiev2019abduction, dhurandhar2018explanations}, shown greater effectiveness in improving human prediction over additive models~\citep{ribeiro2018anchors, yin2022interpreting, dhurandhar2018explanations}, and proved useful for downstream tasks such as bias detection~\citep{balkir2022necessity, carter2021overinterpretation, la2021guaranteed, muthukumar2018understanding}, model debugging~\citep{jacovi2021contrastive}, and anomaly detection~\citep{davidsoncxad}.
%This intuitive inherent nature brought them to possess a central role in many classic XAI methods~\citep{ribeiro2018anchors, carter2019made, ignatiev2019abduction, dhurandhar2018explanations}, and have demonstrated ability to better improve human prediction compared to additive models~\citep{ribeiro2018anchors, yin2022interpreting, dhurandhar2018explanations}, and their usefulness for downstream tasks such as bias detection~\citep{balkir2022necessity, carter2021overinterpretation, la2021guaranteed, muthukumar2018understanding}, model debugging~\citep{jacovi2021contrastive}, and anomaly detection~\citep{davidsoncxad}. 
A well-established principle further holds that \emph{smaller} sufficient and contrastive explanations enhance interpretability, making \emph{minimality} a central guarantee of interest~\citep{ribeiro2018anchors, lopardo2023sea, carter2019made, BaMoPeSu20, arenas2022computing, blanc2021provably, waldchen2021computational}.

\cite{BaMoPeSu20} conducted one of the earliest studies on the complexity of deriving sufficient and contrastive reasons. Their work established that finding the minimal-sized sufficient reason for a decision tree is NP-hard, with the complexity further increasing to $\Sigma^P_2$-hard for neural networks. In the case of contrastive reasons, they demonstrated that computing the smallest possible explanation is solvable in polynomial time for decision trees but becomes NP-Hard for neural networks. Similar hardness results were later shown to hold for tree ensembles as well~\citep{izyamajo21, ordyniak2024explaining, audemard2022trading}.

\textbf{From non-probabilistic to probabilistic explanations.} 
A common criticism of the classic definition of sufficient and contrastive reasons is their rigidity and lack of flexibility, as they hold in an absolute sense over entire domains and can thus lead to excessively large or uninformative explanations~\citep{ignatiev2019abduction, ignatiev2020towards, arenas2022computing, waldchen2021computational}. %A common criticism of the definition of sufficient and contrastive reasons is their rigidity and lack of flexibility~\citep{arenas2022computing, waldchen2021computational, ignatiev2020towards}. Specifically, these explanations hold in an absolute sense over entire domains, which can lead to excessively large or uninformative explanations~\citep{ignatiev2019abduction, arenas2022computing, waldchen2021computational, izza2023computing}. 
To address these limitations, the literature has shifted towards a more general definition that incorporates a \emph{probabilistic} perspective on sufficient and contrastive reasons~\citep{waldchen2021computational, arenas2022computing, blanc2021provably, izza2023computing, xue2023stability}. Under this framework, the goal is to identify subsets of input features that influence a prediction with a probability exceeding a given threshold $\delta$.

%Following works on the computation of sufficient and contrastive explanations~\citep{ignatiev2019abduction, wu2024verix, bassan2023towards, darwiche2020reasons, huang2021efficiently}, A widespread critisism that was brought over the definitions of these explanations they lack flexability, and are too strict as they hold in an absolute sens over complete domains. This, for example may result in extensively large or unmeaningful explanations~\citep{ignatiev2019abduction, bassan2023towards}. To tackle these limitations, there has been a shift towards studying a generalization of this definition that considers of a \emph{probabilistic} definition of sufficient and contrastive reasons~\citep{waldchen2021computational, arenas2022computing, blanc2021provably, izza2023computing}. Under this new definition, one seeks to obtain subsets of input features that determine or alter a prediction with some probability above $\delta$. 

\cite{waldchen2021computational} were the first to study the complexity of probabilistic sufficient reasons, showing that for CNF classifiers the problem is $\text{NP}^{\textbf{PP}}$-hard. This hardness extends to tree ensembles and neural networks~\citep{BaMoPeSu20, ordyniak2023parameterized}. A central theoretical insight in the probabilistic setting is the \emph{lack of monotonicity} in the probability function~\citep{arenas2022computing, izza2023computing}, which makes even \emph{subset minimal} explanations computationally hard. Strikingly,~\citep{arenas2022computing} show that finding a subset minimal probabilistic sufficient reason is NP-hard even for decision trees --- unlike in the non-probabilistic case, where such explanations are computable in polynomial time~\citep{huang2021efficiently, BaMoPeSu20}.

%\cite{waldchen2021computational} were the first to examine the complexity of obtaining probabilistic sufficient reasons, demonstrating that for CNF classifiers, this problem is $\text{NP}^{\textbf{PP}}$-hard. Consequently, this hardness result extends to tree ensembles and neural networks~\citep{BaMoPeSu20, ordyniak2023parameterized}. A particularly important and intriguing theoretical insight that was found in the probabilistic setting is the \emph{lack of monotonicity} in the probability function~\citep{arenas2022computing, izza2023computing}. This observation is crucial in establishing that even a \emph{subset minimal} (or locally minimal) explanation can be computationally challenging to obtain. Notably,~\citep{arenas2022computing} present the surprising result that finding a subset minimal probabilistic sufficient reason is NP-hard even for decision trees. This contrasts with the non-probabilistic setting, where such an explanation can be computed in polynomial time~\citep{huang2021efficiently, BaMoPeSu20}.

\textbf{From local to global explanations.} In a more recent study,~\cite{bassanlocal} extend the complexity analysis from the \emph{local} (non-probabilistic) setting --- where explanations are tied to individual predictions --- to the \emph{global} (non-probabilistic) setting, which seeks sufficient or contrastive reasons over entire domains. However, as with other non-probabilistic methods~\citep{ignatiev2019abduction, BaMoPeSu20, ArBaBaPeSu21, darwiche2020reasons}, the criteria are extremely strict --- arguably even more so in the global setting than in the local one. For instance, any feature excluded from a global sufficient reason is deemed strictly redundant~\citep{bassanlocal}, often making the explanation span nearly all input features and thus less informative.

%In a more recent study,~\cite{bassanlocal} shift the focus of complexity analysis from the \emph{local} (non-probabilistic) setting --- where explanations are computed for individual predictions --- to the \emph{global} (non-probabilistic) setting, which aims to derive sufficient or contrastive reasons over entire global domains. However, similar to other non-probabilistic explanation methods~\citep{ignatiev2019abduction, BaMoPeSu20, ArBaBaPeSu21, darwiche2020reasons}, this approach is extremely stringent --- arguably even more so in the global setting than in the local one. For instance, any feature not included in a global non-probabilistic sufficient reason is deemed strictly redundant~\citep{bassanlocal}. In practice, this could lead to the global non-probabilistic sufficient reason encompassing all, or nearly all, of the input features, thereby diminishing its usefulness as an explanation.

\subsection*{Our Contributions}

\begin{enumerate}
    \item We unify previous explanation computation problems --- including sufficient, contrastive, probabilistic, non-probabilistic, as well as local and global --- into one framework, described as a minimization task over a unified value function. We then identify three fundamental properties of the value function that significantly impact the complexity of this task: \begin{inparaenum} \item \emph{monotonicity}, \item \emph{supermodularity}. and \item \emph{submodularity.} \end{inparaenum}
    \item Interestingly, we show that these properties behave in \emph{strikingly different manners} depending on the structure of the value function. In particular, we demonstrate the surprising result that while the \emph{local} value functions for both sufficient and contrastive reasons are \emph{non-monotonic}, their \emph{global} counterparts are \emph{monotonic non-decreasing}. Moreover, we identify additional intriguing properties unique to the global setting: the global sufficient value function is \emph{supermodular}, whereas the global contrastive value function is \emph{submodular} --- in contrast to the local setting, where neither property holds.
    \item We leverage these properties to derive new complexity results for explanation computation, revealing the intriguing finding that global explanations with guarantees can be computed efficiently, even though computing their local counterparts remains computationally hard. We demonstrate these findings across three widely used model types that span the interpretability spectrum: \begin{inparaenum}[(i)] \item
     neural networks, \item decision trees, and \item tree ensembles. \end{inparaenum} First, we prove that while computing a subset-minimal local sufficient/contrastive probabilistic explanation is NP-hard even for decision trees~\citep{arenas2022computing}, its global counterpart can be computed in polynomial time. We further extend this result to any black-box model (including complex models such as neural networks and tree ensembles) when using empirical distributions. Specifically, we show that obtaining a subset-minimal global sufficient/contrastive explanation is achievable in polynomial time, whereas the local version remains NP-hard for these models.
    \item Finally, we present an even stronger complexity result by exploiting the submodular and supermodular properties of the value functions in the global setting --- properties that do not hold in the local case. Specifically, we show that it is possible to achieve provable \emph{constant-factor} approximation guarantees for computing cardinally minimal global explanations --- even for complex models like neural networks or tree ensembles --- when the empirical distribution is fixed. In sharp contrast, we establish strong inapproximability results for the local setting, demonstrating that no \emph{bounded} approximation is possible, even under very simple assumptions.

\end{enumerate}

Owing to space constraints, we provide an overview of our main theorems and corollaries in the main text, with full proofs deferred to the appendix.

%we present only a brief outline of the proofs for our claims in the paper, with the complete proofs provided in the appendix.

%\textbf{Our contributions.}

%A primary limitation of sufficient and contrastive reasons, is that they hold for 
    
%\end{inparaenum}

\section{Preliminaries}

\textbf{Setting.} We consider an input space of dimension $n\in\mathbb{N}$, with input vectors $\x:=(\x_1,\x_2,\ldots,\x_n)$. Each coordinate $\x_i$ may take values from its corresponding domain $\mathcal{X}_i$, which can be either discrete or continuous. The full input feature space is therefore $\mathbb{F}:=\mathcal{X}_1\times \mathcal{X}_2 \times \ldots \times \mathcal{X}_n$. We consider classification models $f:\mathbb{F}\to [c]$ where $c\in\mathbb{N}$ is the number of classes. Moreover, we consider a \emph{generic} distribution $\mathcal{D} : \mathbb{F} \to [0,1]$ over the input space. In many settings, however, accurately approximating $\mathcal{D}$ is computationally infeasible. A natural alternative is to instead work with a fixed empirical dataset $\mathbf{D} := \{\mathbf{z}^1, \mathbf{z}^2, \ldots, \mathbf{z}^{|\mathbf{D}|}\}$, which serves as a practical proxy for the ``true'' distribution, for instance, by taking $\mathbf{D}$ as a sampled subset from the available training data.

%We define a \emph{generic} distribution $\mathcal{D}:\mathbb{F}\to [0,1]$ over the input features. In some cases, approximating $\mathcal{D}$ is computationally infeasible, so a natural assumption would be to instead lookg at some fixed dataset $\mathbf{D}$ comprising of vectors $\z^1,\z^2,\ldots,\z^{|\mathbf{D}|}$ that represents the ``true'' distribution, for example by being taken from the training data.

The explanations that we study are either local or global. In the local case, the explanations target a specific prediction $\x\in\mathbb{F}$, providing a form of reasoning for why the model predicted $f(\x)$ for that instance. In the global case, explanations aim to reflect the general reasoning behind the behavior of $f$ across a wider region of the input space, independent of any specific $\x$, and to characterize its overall decision-making logic.

\textbf{Models.} While many results presented in this work are general (e.g., inherent properties of value functions), we also provide some model-specific complexity results for widely-used ML models. To broadly address the interpretability spectrum, we chose to analyze models ranging from those typically considered ``black-box'' to those commonly regarded as ``interpretable''. Specifically, we focus on: \begin{inparaenum}[(i)] \item decision trees, \item neural networks (all architectures at least as expressive as feed-forward ReLU networks), and \item tree ensembles, including majority-voting ensembles (e.g., random forests) and weighted-voting ensembles (e.g., XGBoost).
\end{inparaenum} Formal definitions are provided in the Appendix. 

\textbf{Distributions.} We emphasize that in probabilistic explanation settings, the complexity can vary significantly with the input distribution $\mathcal{D}$. Particularly, we focus on three distribution types:
\begin{inparaenum}[(i)] \item \emph{general distributions}, which make no specific assumptions over $\mathcal{D}$ and thus encompass all possible distributions. We use these distributions mainly in proofs of properties that hold universally; \item \emph{empirical distributions}, which include all distributions derived from the finite dataset $\mathbf{D}$ --- an approach commonly employed in XAI~\citep{lundberg2017unified, van2022tractability}; and
\item \emph{independent distributions}, which assume that features in $\mathcal{D}$ are mutually independent --- another widely adopted assumption in XAI literature~\citep{arenas2022computing, arenas2023complexity, lundberg2017unified, ribeiro2018anchors}.
\end{inparaenum} We note that empirical distributions do not necessarily imply feature independence; rather, they can represent complex dependencies extracted from finite datasets~\citep{van2022tractability}. The complete formal definitions of these distributions are provided in the Appendix.

%While many of the results presented in this work are general (e.g., inherent properties of the value functions), we will present some model-specific computational complexity results over popular ML models. We aim for this anlysis to capture a broad range of the interpretability spectrum, and hence we will analyze both models that are considered ``black-box'' models, and those that are regarded as interpretable. Parritularly, our analysis will focus on: \begin{inparaenum}[(i)]
%    \item decision trees, \item neural networks with ReLU activations, and \item tree ensembles, where our results apply to both majority-voting based ensembles such as random forests, as well as to weighted-voting based ensembles, which includes boosted trees (e.g., XGBoost).
%\end{inparaenum} The full formalization of all of these models is attached within Appendix~\ref{}.

\section{Forms of Reasons}

In this section, we introduce the explanation types studied in this work, starting with the strict non-probabilistic definitions of (local/global) sufficient and contrastive reasons, and then extending to their more flexible and generalizable probabilistic counterparts.

%In this section, we introduce the types of explanations examined in this work, beginning with the strict non-probabilistic definitions of (local/global) sufficient and contrastive reasons. We then extend our discussion to the more flexible and generalized probabilistic definitions of (local/global) sufficient and contrastive reasons.

\subsection{\emph{Non-Probabilistic} Sufficient and Contrastive Reasons}

\textbf{Sufficient reasons.} In the context of \emph{feature selection}, users often select the top $k$ features that contribute to a model's decision. We examine the well-established \emph{sufficiency} criterion for this selection, which aligns with commonly used explainability methods~\citep{ribeiro2018anchors, carter2019made, ignatiev2019abduction, dasgupta2022framework}. This feature selection can be carried out either locally --- focusing on a single prediction --- or globally --- across the entire input domain. Following standard conventions, we define a \emph{local sufficient reason} as a subset of features $S\subseteq \{1,\ldots,n\}$ such that when the features in $S$ are fixed to their corresponding values in $\x$, the model's prediction remains $f(\x)$ regardless of the values assigned to the remaining features $\overline{S}$. Formally, $S$ is a local sufficient reason for $\langle f,\x\rangle$ iff the following condition holds:

\begin{equation}
    \forall \z\in\mathbb{F}, \quad f(\x_S;\z_{\Bar{S}})=f(\x).
\end{equation}

Here, $(\x_S;\z_{\Bar{S}})$ denotes a vector where features in $S$ take their values from $\x$, and those in $\overline{S}$ from $\z$. Local sufficient reasons are closely related to \emph{semi-factual} explanations~\citep{kenny2021generating, alfano2025even}, which search for alternative inputs $\mathbf{z}'$ that keep the prediction unchanged (i.e., $f(\mathbf{z}') = f(\mathbf{x})$) while being as ``close'' as possible to the original point. When we instantiate $\mathbf{z}'$ as $(\x_S; \z_{\bar{S}})$ and measure proximity via Hamming distance, the two notions coincide.

A \emph{global sufficient reason}~\citep{bassanlocal} is a subset of input features $S \subseteq \{1, \ldots, n\}$ that serves as a local sufficient reason for \emph{every} possible input $\x \in \mathbb{F}$:

\begin{equation}
    \forall \x,\z\in\mathbb{F}, \quad f(\x_S;\z_{\Bar{S}})=f(\x).
\end{equation}

\textbf{Contrastive reasons.} Another prevalent approach to providing explanations is by pinpointing subsets of input features that \emph{modify} a prediction~\citep{dhurandhar2018explanations, mothilal2020explaining, guidotti2024counterfactual}. This type of explanation aims to determine the minimal changes necessary to alter a prediction. Formally, a subset $S\subseteq \{1,\ldots, n\}$ is defined as a \emph{local contrastive reason} concerning $\langle f,\x\rangle$ iff:

\begin{equation}
     \exists\z\in\mathbb{F}, \quad f(\z_S;\x_{\Bar{S}})\neq f(\x).
\end{equation}

Contrastive reasons are also closely connected to \emph{counterfactual} explanations~\citep{guidotti2024counterfactual, mothilal2020explaining}, which seek a nearby assignment $\x'$ for which the prediction flips, i.e., $f(\x') \neq f(\x)$. When the distance metric is taken to be the Hamming weight, the two notions coincide.

Similarly to sufficient reasons, one can determine a subset of input features that, when altered, changes the prediction for all inputs within the domain of interest. This form of explanation is also closely connected to approaches for identifying bias~\citep{ArBaBaPeSu21, bassanlocal, darwiche2020reasons}, as well as to \emph{group} counterfactual explanation methods that search for counterfactuals over multiple data instances~\citep{carrizosa2024mathematical, warren2024explaining}. Formally, we define a subset $S$ as a \emph{global contrastive reason} with respect to $f$ iff:

\begin{equation}
    \forall \x\in\mathbb{F}, \quad \exists \z\in\mathbb{F}, \quad f(\z_S;\x_{\Bar{S}})\neq f(\x).
\end{equation}

\subsection{\emph{Probabilistic} Sufficient and Contrastive Reasons}

As discussed in the introduction, non-probabilistic sufficient and contrastive reasons are often criticized for imposing significantly overly strict conditions, motivating a shift to \emph{probabilistic} definitions~\citep{arenas2022computing, ribeiro2018anchors, izza2023computing, waldchen2021computational, bounia2023approximating, subercaseaux2024probabilistic, wang2021probabilistic, blanc2021provably}, which generalize these requirements by demanding that the guarantees hold with probability at least $\delta \in [0,1]$. The special case $\delta=1$ recovers the original non-probabilistic definitions.

\textbf{Probabilistic sufficient reasons.} We define $S\subseteq \{1,\ldots,n\}$ as a \emph{local $\delta$-sufficient reason} with respect to  $\langle f,\x\rangle$ if, when the features in $S$ are fixed to their corresponding values in $\x$ and the remaining features are sampled from a distribution $\mathcal{D}$, the classification remains unchanged with probability at least $\delta$. In other words:

%We say that $S\subseteq \{1,\ldots, n\}$ is a \emph{local probabilistic $\delta$-sufficient reason} concerning $\langle f,\x\rangle$ if when given that we fix the features of $S$ to the there corresponding values within $\x$, given that we sample the complementry features from some distribution $\mathcal{D}_p$, the classification remains the same with probability greater or equal to $\delta$, or in other words that:

\begin{equation}
\label{eq_suff_local}
    \textbf{Pr}_{\z\sim \mathcal{D}}(f(\z)=f(\x) \ | \  \z_S=\x_S)\geq \delta.
\end{equation}

where $\z_S = \x_S$ denotes that the features in $S$ of vector $\z$ are fixed to their corresponding values in $\x$. We adopt the standard notion of global explanations --- by averaging over all inputs in the global domain --- and define a subset $S \subseteq \{1, \ldots, n\}$ as a \emph{global $\delta$-sufficient reason} with respect to $\langle f \rangle$ if, when taking the expectation of the local sufficiency probability over samples drawn from the distribution $\mathcal{D}$, the expectation remains with value at least $\delta$. In other words:

%\overset{\text{FI}}{=} 
%    \textbf{Pr}_{\x,\z\sim \mathcal{D}}(f(\x)=f(\z) \ | \  \x_S=\z_S)\geq \delta

\begin{equation}
\label{eq_suff_global}
\mathbb{E}_{\x\sim\mathcal{D}}[\textbf{Pr}_{\z\sim \mathcal{D}}(f(\z)=f(\x) \ | \  \z_S=\x_S)] \geq \delta.
\end{equation}

\textbf{Probabilistic contrastive reasons.} Similar to the non-probabilistic case, we define a \emph{local $\delta$-contrastive reason} as a subset of input features that changes a prediction with some probability. Here, unlike sufficient reasons, we set the features of the \emph{complementary} set $\overline{S}$ to their respective values in $\x$, and when allowing the features in $S$ to vary, we require the prediction to differ from the original prediction $f(\x)$ with a probability exceeding $\delta$. Formally:

\begin{equation}
\label{eq_con_local}
    \textbf{Pr}_{\z\sim \mathcal{D}}(f(\z) \neq f(\x) \ | \  \z_{\Bar{S}}=\x_{\Bar{S}})\geq \delta.
\end{equation}

For the global setting, we define a subset $S$ to be a \emph{global $\delta$-contrastive reason}, analogous to global sufficient reasons, by computing the expectation over all local contrastive reasons sampled from the distribution $\mathcal{D}$, and requiring that this expected value exceeds $\delta$:

%For the global setting, we define a subset $S$ as a global $\delta$-contrastive reason if when fixing the features in $\Bar{S}$, the classification is altered with probability exceeding $\delta$:

%\overset{\text{FI}}{=} 
%    \textbf{Pr}_{\x,\z\sim \mathcal{D}}(f(\x) \neq f(\z) \ | \  \x_{\Bar{S}}

\begin{equation}
\label{eq_con_global}
\mathbb{E}_{\x\sim\mathcal{D}}[\textbf{Pr}_{\z\sim \mathcal{D}}(f(\z) \neq f(\x) \ | \  \z_{\Bar{S}}= \x_{\Bar{S}})]\geq \delta.
\end{equation}

%\textbf{Note.} We observe that in the specific cases where $\delta=1$, all probabilistic and non-probabilistic notions converge. This applies across all settings of local/global and sufficient/contrastive reasons. However, allowing probabilistic computations to be performed with respect to a given $\delta$ provides significantly greater flexibility in selecting the underlying sufficiency conditions.

\section{Forms of Minimality}

As discussed in the introduction, across all the explanation forms discussed so far --- whether sufficient or contrastive, local or global --- a common assumption in the literature is that explanations of \emph{smaller size} are more meaningful, thereby making their \emph{minimality} a particularly important provable guarantee. In this study, we explore two central notions of minimality across all our explanation types:%, including sufficient and contrastive, local and global, as well as probabilistic and non-probabilistic explanations: 

\begin{definition}
Assuming a subset $S\subseteq \{1,\ldots n\}$ is an explanation, then:
    \begin{enumerate}
        \item $S$ is a \emph{cardinally-minimal} explanation~\citep{BaMoPeSu20, bassanlocal} iff $S$ has the smallest explanation cardinality $|S|$ (i.e., there is no explanation $S'$ such that $|S'|<|S|$).
        \item $S\subseteq \{1,\ldots, n\}$ is a \emph{subset-minimal} explanation~\citep{arenas2022computing, ignatiev2019abduction} iff $S$ is an explanation, and any $S'\subseteq S$ is not an explanation.
    \end{enumerate}
\end{definition}

%\begin{inparaenum}[(i)] \item The first is a \emph{cardinally minimal} explanation~\citep{BaMoPeSu20, bassanlocal} (also referred to as globally minimal), meaning that the subset $S$ has the smallest possible cardinality $|S|$. \item The second is a \emph{subset minimal} explanation~\citep{arenas2022computing, ignatiev2019abduction} (sometimes called locally minimal), which is a subset $S$ that serves as an explanation, and any $S'\subseteq S$ is not an explanation.

We note that cardinal-minimality is strictly stronger than subset-minimality: every cardinally minimal $S$ is subset-minimal, but not vice versa (see Appendix~\ref{subset_cardinal_minimality} for an example). We use the terms subset and cardinally minimal, rather than local and global minima, to avoid confusion with local vs. global explanations (input- vs. domain-level reasoning). Both notions apply to \emph{all} explanation forms we analyze. For instance, a cardinally minimal local probabilistic $\delta$ sufficient reason is the one with the smallest $|S|$, while a subset-minimal one is any $S$ where no proper subset $S'\subseteq S$ also qualifies.

\section{A Unified Combinatorial Optimization Task}
%\guy{It takes slightly too long  to reach the new contribution - 4.5 pages in, in a 9-page paper. There is a bit of redundnacy in the text in previous sections - if we need to shorten the paper, we could remove some of that, and then section 5 would arrive more quickly. However, this is not crucial, I think.}

Interestingly, all previously discussed computational problems --- local or global, sufficient or contrastive, probabilistic or not --- can be cast as finding a minimal-size subset $S$ such that $v(S) \geq \delta$, where in non-probabilistic settings we set $\delta := 1$. We now introduce notation for the value functions: let $v^{\ell}_{\text{suff}}$ denote the local sufficiency probability from equation~\ref{eq_suff_local}, i.e., $\textbf{Pr}_{\z\sim\mathcal{D}p}(f(\x)=f(\z) \mid \x_S=\z_S)$, and define the global variant as $v^{g}_{\text{suff}}$ (equation~\ref{eq_suff_global}). Similarly, let $v^{\ell}_{\text{con}}$ and $v^{g}_{\text{con}}$ denote the local and global contrastive probabilities (equations~\ref{eq_con_local} and~\ref{eq_con_global}, respectively). Using this notation, we now formally define the unified task of finding a cardinally minimal $\delta$-local/global sufficient/contrastive reason:

%All the previously mentioned computational problems --- whether local or global, sufficient or contrastive, probabilistic or non-probabilistic --- can be framed as the task of finding a subset $S$ of minimal possible size such that $v(S) \geq \delta$. In the non-probabilistic cases, this corresponds specifically to setting $ \delta := 1$. We begin by introducing the following notation for the value function. Let $ v^{\ell}_{\text{suff}} $ represent the local sufficiency probability function from equation~\ref{eq_suff_local}, i.e., $ \textbf{Pr}_{\z\sim\mathcal{D}_p}(f(\x)=f(\z) \mid \x_S=\z_S) $. Similarly, we define the global sufficiency probability function from equation~\ref{eq_suff_global} as $ v^{g}_{\text{suff}}$. Additionally, we denote the local contrastive probability function from equation~\ref{eq_con_local} as $ v^{\ell}_{\text{con}} $ and its global counterpart from equation~\ref{eq_con_global} as $ v^{g}_{\text{con}} $. With these definitions, we can now formulate the unified problem of finding a cardinally minimal $\delta$-local/global sufficient/contrastive reason as follows:

\noindent\fbox{%
	\parbox{\columnwidth}{%
		\mysubsection{Cardinally Minimal $\delta$-Explanation}:
		
		\textbf{Input}: Model $f$, a distribution $\mathcal{D}$, (possibly, an input $\x$), a \emph{general} value function $v:2^n\to [0,1]$ (defined using $f$, $\mathcal{D}$, and possibly $\x$), and some $\delta\in [0,1]$.
		
		\textbf{Output}: 
		A subset $S\subseteq [n]$ such that $v(S)\geq \delta$ and $|S|$ is minimal.
	}%
}

Similarly, for the relaxed condition where the goal is to obtain a subset-minimal rather than a cardinally-minimal local or global sufficient/contrastive reason, we define the following relaxed optimization objective:

\noindent\fbox{%
	\parbox{\columnwidth}{%
		\mysubsection{Subset Minimal $\delta$-Explanation}:
		
		\textbf{Input}: Model $f$, a distribution $\mathcal{D}$, (possibly, an input $\x$), a \emph{general} value function $v:2^n\to [0,1]$ (defined using $f$, $\mathcal{D}$, and possibly $\x$), and some $\delta\in [0,1]$.
		
		\textbf{Output}: 
		A subset $S\subseteq [n]$ such that $v(S)\geq \delta$ and for any $S'\subseteq S$ it holds that $v(S')<\delta$.
	}%
}

\textbf{Properties of $v$ that affect the complexity.} We now outline several key properties of the value function, which we later show play a crucial role in determining the complexity of generating explanations. The first property is \emph{non-decreasing monotonicity}, which ensures that the marginal contribution $v(S \cup \{i\}) - v(S)$ is consistently non-negative. Formally:

\begin{definition}
    We say that a value function $v$ maintains \emph{non-decreasing monotonicity} if for any $S\subseteq\{1,\ldots,n\}$ and any $i\in\{1,\ldots,n\}$ it holds that: $v(S\cup\{i\})\geq v(S)$. 
\end{definition}

%\begin{definition}

%The first property is \emph{saturation}, which intuitively means that once the value function $v(S)$ reaches its optimal value, adding more features will not further increase it. Formally:

%\begin{definition}
%    We say that a value function $v$ maintains \emph{saturation} if given that $v_{max}:=\max_{S\subseteq \{1,\ldots,n\}}v(S)$ and we have some $S'$ for which $v(S')=v_{max}$, then for all $S''$ for which $S'\subseteq S''$,  it holds that: $v(S'')=v_{max}$.
%\end{definition}

%The second key property, \emph{monotonicity}, ensures that the marginal contribution $v(S \cup \{i\}) - v(S)$ is consistently non-negative for non-decreasing monotone functions or consistently non-positive for non-increasing monotone functions. In our work, we primarily focus on the non-increasing monotone case, which aligns with certain configurations of the value function. Formally:
%\begin{definition}
%    We say that a value function $v$ maintains \emph{non-decreasing (resp. non-increasing) monotonicity} if for any $S\subseteq\{1,\ldots,n\}$ and any $i\in\{1,\ldots,n\}$ it holds that: $v(S\cup\{i\})\geq v(S)$ (resp. $v(S\cup\{i\})\leq v(S)$). 
%\end{definition}

The other key properties of \emph{supermodularity} and \emph{submodularity} pertain to the behavior of the marginal contribution $v(S \cup \{i\}) - v(S)$. Specifically, in the supermodular case, this contribution forms a monotone non-decreasing function. In contrast, under the dual definition of the \emph{submodular} case, the marginal contribution $v(S \cup \{i\}) - v(S)$ is a monotone non-increasing function. Formally:

\begin{definition}
    Let there be some value function $v$, some $S\subseteq S'\subseteq\{1,\ldots,n\}$, and $i\not\in S'$. Then: 
    \begin{enumerate}
        \item $v$ maintains \emph{supermodularity} iff it holds that: $v(S\cup\{i\})-v(S)\leq v(S'\cup\{i\})-v(S')$.
        \item $v$ maintains \emph{submodularity} iff it holds that: $v(S\cup\{i\})-v(S)\geq v(S'\cup\{i\})-v(S')$.
    \end{enumerate}
\end{definition}

%\begin{definition}
%    We say that a value function $v$ maintains \emph{supermodularity} if for any $S, S'\subseteq\{1,\ldots,n\}$ such that $S\subseteq S'$ and any $i\not\in S'$ it holds that: $v(S\cup\{i\})-v(S)\leq v(S'\cup\{i\})-v(S')$. 
%\end{definition}

%In contrast, under the dual definition of the \emph{submodular} case, the marginal contribution $v(S \cup \{i\}) - v(S)$ is a monotone non-increasing function:

%\begin{definition}
%    We say that a value function $v$ maintains \emph{submodularity} if for any $S, S'\subseteq\{1,\ldots,n\}$ such that $S\subseteq S'$ and any $i\not\in S'$ it holds that: $v(S\cup\{i\})-v(S)\geq v(S'\cup\{i\})-v(S')$. 
%\end{definition}

\section{Unraveling New Properties of the Global Value Functions}

Prior work shows that in the local setting of non-probabilistic explanations, subset-minimal sufficient or contrastive reasons can be computed thanks to monotonicity, which holds only in the restricted case $\delta=1$. This assumption, however, is highly limiting and lacks practical flexibility. While one might hope to generalize to probabilistic guarantees for arbitrary $\delta$, prior results demonstrate that monotonicity breaks down in this setting, rendering the computation of explanations computationally harder~\citep{arenas2022computing, kozachinskiy2023inapproximability, izza2023computing, subercaseaux2024probabilistic, izza2024locally}.

%Prior art has shown that in the \emph{local} setting of \emph{non-probabilistic} explanations, a key property that enables the computation of subset-minimal sufficient or contrastive reasons is \emph{monotonicity}, which holds in this very restricted case of $\delta=1$~\citep{ignatiev2019abduction, BaMoPeSu20, wu2024verix, arenas2022computing}. However, this setting is significantly limited, as it assumes a stringent framework and requires $\delta = 1$, which significantly reduces its practical flexibility. While one might hope to overcome this limitation by adopting probabilistic guarantees for \emph{any} $\delta$, a key theoretical insight shown by previous works is that, unfortunately, \emph{monotonicity breaks down} in the local probabilistic setting~\citep{arenas2022computing, izza2023computing, subercaseaux2025probabilistic, izza2024locally} --- making the computation of explanations in this general setting computationally harder~\citep{arenas2022computing, kozachinskiy2023inapproximability}.

At the even more extreme case of the \emph{global} and \emph{non-probabilistic} setting,~\cite{bassanlocal} demonstrated the stringent \emph{uniqueness} property --- i.e., there is exactly \emph{one} subset-minimal explanation. However, requiring $\delta=1$ makes this setting highly restrictive, especially in the global case, where the explanation conditions must hold for \emph{all} inputs. In fact,~\cite{bassanlocal} proves that this unique minimal subset is equivalent to the subset of all features that are not strictly redundant (which may, in practice, be all of them). We prove that in the general global \emph{probabilistic} setting --- for any $\delta$ --- this uniqueness property actually does \emph{not} hold, and the number of subsets can even be \emph{exponential}.%we prove that in the more general global probabilistic setting --- valid for \emph{any} $\delta$ --- this uniqueness property actually does \emph{not} hold, and there can be actually be an \emph{exponential} number of such subsets.

\begin{restatable}{proposition}{nonuniqueness} \label{nonuniqueness_prop}
While the non-probabilistic case ($\delta = 1$) admits a unique subset-minimal global sufficient reason~\citep{bassanlocal}, in the general probabilistic setting (for arbitrary $\delta$), there exist functions $f$ that have $\Theta (\frac{2^n}{\sqrt{n}})$ subset-minimal global sufficient reasons.
\end{restatable}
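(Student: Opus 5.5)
The plan is an explicit symmetric construction. I will choose $f$ and $\mathcal{D}$ so that $v^{g}_{\text{suff}}(S)$ depends only on $|S|$ and is strictly increasing in $|S|$. I will then set $\delta$ to the value attained at size $k:=\lfloor n/2\rfloor$. With these choices, $v^{g}_{\text{suff}}(S)\geq\delta$ holds iff $|S|\geq k$. So the subset-minimal global $\delta$-sufficient reasons are exactly the $k$-subsets of $[n]$, and there are $\binom{n}{\lfloor n/2\rfloor}$ of them. Stirling's formula gives $\binom{n}{\lfloor n/2\rfloor}=\Theta(2^n/\sqrt{n})$, which is the claimed count.

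\textbf{Concrete choice.} Take $\mathbb{F}=\{0,1\}^n$ with $\mathcal{D}$ uniform (an independent distribution), and $f(\x)=\bigwedge_{i=1}^n \x_i$. Both $f$ and $\mathcal{D}$ are invariant under permutations of coordinates. Hence $v^{g}_{\text{suff}}(S)=\mathbb{E}_{\x}[\textbf{Pr}_{\z}(f(\z)=f(\x)\mid \z_S=\x_S)]$ depends only on $s=|S|$. Equivalently, $v^{g}_{\text{suff}}(S)$ is the probability that $f(\x)=f(\z)$, where $\x$ is uniform and $\z$ copies $\x$ on $S$ and is fresh uniform on $\overline{S}$.

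\textbf{Computing the value.} Disagreement can happen in two symmetric ways: $f(\x)=1,f(\z)=0$ or $f(\x)=0,f(\z)=1$. The pair $(\x,\z)$ is exchangeable, so the two cases have equal probability. For the first case, $f(\x)=1$ has probability $2^{-n}$. Given that, $\z$ equals $\mathbf{1}$ with probability $2^{-(n-s)}$. Therefore
\[
v^{g}_{\text{suff}}(S)=1-2\cdot 2^{-n}\bigl(1-2^{-(n-s)}\bigr),
\]
which is strictly increasing in $s$. I will double-check exchangeability: $\x_S=\z_S$ is shared, and $\x_{\overline{S}},\z_{\overline{S}}$ are i.i.d. uniform, so swapping $\x$ and $\z$ preserves the joint law.

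\textbf{Conclusion and main obstacle.} Setting $\delta:=v^{g}_{\text{suff}}$ at $s=k$ gives $v^{g}_{\text{suff}}(S)\geq\delta\iff|S|\geq k$, by strict monotonicity in $s$. Every $k$-set qualifies, and none of its proper subsets does. No set of size $>k$ is subset-minimal, since it contains a qualifying $k$-subset. This gives exactly $\binom{n}{\lfloor n/2\rfloor}$ subset-minimal reasons.

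The only delicate step is ensuring the value is \emph{strictly} increasing at the threshold size. Otherwise ties would collapse the minimal family, as happens for parity, where the value is flat at $1/2$ below $n$. The AND function avoids this. The setting is consistent with the non-probabilistic uniqueness result: at $\delta=1$ only $S=[n]$ qualifies.
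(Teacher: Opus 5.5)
Your proof is correct and follows the same strategy as the paper: choose a permutation-symmetric $f$ under the uniform distribution so that $v^{g}_{\text{suff}}$ depends only on $|S|$, place $\delta$ at the middle layer, and count $\binom{n}{\lfloor n/2\rfloor}=\Theta(2^n/\sqrt{n})$ via Stirling. The one difference is the witness function. The paper uses the majority (threshold) function and simply asserts that the value strictly increases at the middle layer. Your AND function gives the closed form $1-2^{1-n}\bigl(1-2^{-(n-|S|)}\bigr)$, which makes strict monotonicity in $|S|$, and hence the exact characterization of the subset-minimal sets, fully explicit.
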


Interestingly, although the uniqueness property fails to hold in the general case for arbitrary $\delta$ (and is restricted to the special case of $\delta=1$), we show that the crucial \emph{monotonicity} property holds for the global value function across all values of $\delta$. This applies to both the sufficient ($v^g_{\text{suff}}$) and contrastive ($v^g_{\text{con}}$) value functions. This finding is surprising as it stands in sharp contrast to the local setting, where the corresponding value functions ($v^{\ell}_{\text{suff}}$ and $v^{\ell}_{\text{con}}$) do not satisfy this property:

\begin{restatable}{proposition}{globalmonotonicity} \label{globalmonotonicity_prop}
While the local probabilistic setting (for any $\delta$) lacks monotonicity --- i.e., the value functions $v^{\ell}_{\text{con}}$ and $v^{\ell}_{\text{suff}}$ are non-monotonic~\citep{arenas2022computing, izza2023computing, subercaseaux2024probabilistic, izza2024locally} --- in the global probabilistic setting (also for any $\delta$), both value functions $v^g_{\text{con}}$ and $v^g_{\text{suff}}$ are monotonic non-decreasing.
\end{restatable}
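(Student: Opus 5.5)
The plan has two parts. First, for the local functions, exhibit simple counterexamples. Second, for the global functions, prove monotonicity by rewriting each function as an expected conditional variance or collision quantity that is monotone under refining the conditioning $\sigma$-algebra.

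\textbf{Local non-monotonicity.} Citing the prior work is essentially enough. For completeness, take $n=2$ with the uniform distribution on $\{0,1\}^2$, $f(\z)=\z_1\wedge \z_2$ and $\x=(1,0)$, so $f(\x)=0$.
\begin{itemize}
\item For the sufficient function: $v^{\ell}_{\text{suff}}(\emptyset)=3/4$, while $v^{\ell}_{\text{suff}}(\{1\})=\Pr(\z_2=0)=1/2$. Adding feature $1$ decreases the value.
\item For the contrastive function: $v^{\ell}_{\text{con}}(\{2\})=\Pr(f(1,\z_2)\neq 0)=1/2$, but $v^{\ell}_{\text{con}}(\{1,2\})=1/4$. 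Adding feature $1$ again decreases the value.
\end{itemize}

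\textbf{Global sufficiency.} Let $\x,\z$ be drawn with $\x\sim\mathcal{D}$ and $\z\sim\mathcal{D}(\cdot\mid \z_S=\x_S)$. Then $\x$ and $\z$ are conditionally i.i.d.\ given $\x_S$. Writing $p_c(\x_S)=\Pr(f=c\mid \x_S)$, we get
\[v^g_{\text{suff}}(S)=\mathbb{E}_{\x_S}\Bigl[\sum_{c}p_c(\x_S)^2\Bigr].\]
For $S\subseteq S'$, the tower property gives $p_c(\x_S)=\mathbb{E}[p_c(\x_{S'})\mid \x_S]$. Jensen's inequality for the convex map $t\mapsto t^2$ then yields $\mathbb{E}[p_c(\x_S)^2]\le \mathbb{E}[p_c(\x_{S'})^2]$, and summing over $c$ gives $v^g_{\text{suff}}(S)\le v^g_{\text{suff}}(S')$.

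\textbf{Global contrastiveness.} Here $\z$ agrees with $\x$ on $\overline{S}$, so
\[v^g_{\text{con}}(S)=1-\mathbb{E}_{\x_{\overline{S}}}\Bigl[\sum_c p_c(\x_{\overline{S}})^2\Bigr]=1-v^g_{\text{suff}}(\overline{S}).\]
Enlarging $S$ shrinks $\overline{S}$. By the sufficiency step, $v^g_{\text{suff}}(\overline{S})$ can only decrease, so $v^g_{\text{con}}(S)$ is non-decreasing.

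\textbf{Main obstacle.} The delicate step is making the identity $v^g_{\text{suff}}(S)=\mathbb{E}[\sum_c p_c^2]$ rigorous for general (possibly continuous or dependent) $\mathcal{D}$. This requires regular conditional distributions and a careful reading of $\Pr(\cdot\mid \z_S=\x_S)$ on measure-zero events. I would handle this through the disintegration of $\mathcal{D}$ with respect to $\x_S$, and check the argument directly for the empirical and discrete cases used later in the paper.
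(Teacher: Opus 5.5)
Your proof is correct and follows essentially the same route as the paper: both rewrite $v^g_{\text{suff}}(S)$ as $\mathbb{E}\bigl[\sum_c \Pr(f=c\mid \x_S)^2\bigr]$, show that adding features increases it by a nonnegative conditional-variance term, and obtain the contrastive case from $v^g_{\text{con}}(S)=1-v^g_{\text{suff}}(\overline{S})$. The difference is one of packaging. Your tower-property and conditional-Jensen step handles arbitrary $S\subseteq S'$ and arbitrary distributions at once. The paper instead adds one feature $t$ at a time and writes the Jensen gap out explicitly as $\sum_{k<l}P_kP_l(g_k^j-g_l^j)^2$, treating Boolean, discrete, and continuous (density-based) domains separately.
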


%In the \emph{local} setting of \emph{non-probabilistic} explanations, a key property that enables one to obtain subset minimal sufficient or contrastive reasons is the \emph{monotonicity} property that holds under this setting~\citep{}. The intuition for convergence to subset-minimal explanations is that any simple greedy algorithm for a monotonic function will always converge to a subset minimal explanation. However, this very limited scenario which is non-probabiilistic and assumes explicitly that it must hold that $\delta:=1$ substantially effects the flexibility of this definition. This could be mitigated by alternating to probabilistic guarantees, but a key theoretical insight that was brought regarding the local versions of these explanations is that \emph{monotonicity does not hold} for the local probabilistic setting, which renders the computation of explanations in this setting computationally hard~\citep{}.

%Beyond the surprising insight that monotonicity holds for the global value function --- but not for the local one --- we identify additional structural properties unique to the global setting, such as submodularity and supermodularity. In particular, under the common assumption of feature independence, the global sufficient value function $v^{g}{\text{suff}}$ is supermodular. In contrast, the local counterpart $v^{\ell}{\text{suff}}$ lacks this property even under the much stronger assumption of a uniform (and hence independent) input distribution. Specifically:

Beyond the surprising insight that monotonicity holds for the global value functions --- but fails to hold in the local one --- we identify additional structural properties unique to the global setting, including submodularity or supermodularity. In particular, we show that under the common assumption of feature independence, the global sufficient value function $v^{g}_{\text{suff}}$ exhibits supermodularity. In contrast, its local counterpart $v^{\ell}_{\text{suff}}$ fails to exhibit this property even under the much more restrictive assumption of a uniform (and hence independent) input distribution. Specifically:

\begin{restatable}{proposition}{globalsupermodular} \label{globalsupermod_prop}
While the local probabilistic sufficient setting (for any $\delta$) lacks supermodularity --- even when $\mathcal{D}$ is uniform, i.e., the value function $v^{\ell}_{\text{suff}}$ is not supermodular --- in the global probabilistic setting (also for any $\delta$), when $\mathcal{D}$ exhibits feature independence, the value function $v^g_{\text{suff}}$ is supermodular.
\end{restatable}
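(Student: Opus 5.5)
The proof has two parts: a concrete counterexample for the local function $v^{\ell}_{\text{suff}}$, and a structural argument for the global function $v^g_{\text{suff}}$. For the global part I would use an orthogonal (ANOVA / Efron--Stein) decomposition, which turns $v^g_{\text{suff}}$ into a nonnegative combination of obviously supermodular set functions.

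\textbf{Local part.} A small counterexample under the uniform distribution suffices. Take $n=2$, features in $\{0,1\}$, $\mathcal{D}$ uniform, $f(\x)=\x_1\vee \x_2$, and $\x=(1,1)$, so $f(\x)=1$.
\begin{itemize}
\item $v^{\ell}_{\text{suff}}(\emptyset)=3/4$.
\item $v^{\ell}_{\text{suff}}(\{1\})=v^{\ell}_{\text{suff}}(\{2\})=v^{\ell}_{\text{suff}}(\{1,2\})=1$, since fixing either coordinate to $1$ forces the prediction.
\end{itemize}
With $S=\emptyset\subseteq S'=\{2\}$ and $i=1$, the marginal gains are $v(\{1\})-v(\emptyset)=1/4$ and $v(\{1,2\})-v(\{2\})=0$. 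Supermodularity would require $1/4\leq 0$, so it fails.

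\textbf{Global part, rewriting $v^g_{\text{suff}}$.} Write $h_c(\x):=\mathbbm{1}[f(\x)=c]$ and $p_c(\x_S):=\textbf{Pr}_{\z\sim\mathcal{D}}(f(\z)=c\mid \z_S=\x_S)=\mathbb{E}[h_c\mid \x_S]$. Conditioning on the class $f(\x)$ gives
\[
v^g_{\text{suff}}(S)=\mathbb{E}_{\x}\big[p_{f(\x)}(\x_S)\big]=\sum_{c\in[c]}\mathbb{E}_{\x}\big[h_c(\x)\,\mathbb{E}[h_c\mid \x_S]\big]=\sum_{c}\big\|P_S h_c\big\|_{L^2(\mathcal{D})}^2 ,
\]
where $P_S:=\mathbb{E}[\,\cdot\mid \x_S]$ is an orthogonal projection. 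The last equality uses the tower property: $\mathbb{E}[h\,P_S h]=\mathbb{E}[(P_S h)^2]$.

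\textbf{Global part, decomposition and supermodularity.} Under feature independence, each $h\in L^2(\mathcal{D})$ has the Efron--Stein decomposition $h=\sum_{A\subseteq[n]}h_A$. Its components are mutually orthogonal, and $P_S h=\sum_{A\subseteq S}h_A$. This is the one place independence is used: it makes the projections commute, with $P_SP_T=P_{S\cap T}$. Hence
\[
v^g_{\text{suff}}(S)=\sum_{c}\sum_{A\subseteq[n]}\|h_{c,A}\|^2\,\mathbbm{1}[A\subseteq S].
\]
Each indicator $S\mapsto\mathbbm{1}[A\subseteq S]$ is supermodular. For $i\notin S'$, its marginal gain at $S$ is $\mathbbm{1}[i\in A,\ A\setminus\{i\}\subseteq S]$, and this is non-decreasing in $S$. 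A nonnegative combination of supermodular functions is supermodular, so $v^g_{\text{suff}}$ is supermodular. As a by-product this recovers monotonicity (Proposition~\ref{globalmonotonicity_prop}) in the independent case.

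\textbf{Main obstacle.} The step needing care is justifying the decomposition for general, possibly continuous, independent product measures. The components are $h_A=\sum_{B\subseteq A}(-1)^{|A\setminus B|}P_B h$, and orthogonality must be checked from $\mathbb{E}[h_A\mid \x_T]=0$ whenever $A\not\subseteq T$. If I want to avoid the general theory, an alternative is a direct argument. I would show that the mixed second difference $v(S\cup\{i,j\})-v(S\cup\{i\})-v(S\cup\{j\})+v(S)$ equals $\sum_c\|(P_{S\cup\{i,j\}}-P_{S\cup\{i\}}-P_{S\cup\{j\}}+P_S)h_c\|^2\ge 0$. This identity follows from the commuting-projection relations, and nonnegative second differences are equivalent to supermodularity.
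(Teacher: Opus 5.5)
Your proof is correct, but for the global part it takes a different route from the paper's.

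\textbf{The paper's route.} The paper works in coordinates. It writes the marginal gain $\Delta_t(S,f)$ as a weighted sum of squared differences of conditional class probabilities. It then expands these over the values of the added feature $i$ and shows that $\Delta_t(S\cup\{i\},f)-\Delta_t(S,f)$ is a weighted sum of squared interaction contrasts such as $(g^{+}_{11}-g^{+}_{10}-g^{+}_{01}+g^{+}_{00})^2$. This computation is redone separately for Boolean, multi-valued discrete, and continuous domains. It is exactly the coordinate form of your fallback second-difference identity: the paper's weighted squared contrast equals $\sum_c\|Qh_c\|^2$ for your commuting-projection combination $Q$.

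\textbf{Your route.} Your main argument identifies $v^g_{\text{suff}}(S)=\sum_c\|P_S h_c\|^2$. It then uses the Hoeffding/Efron--Stein decomposition to write $v^g_{\text{suff}}$ as a nonnegative combination of the unanimity functions $S\mapsto\mathbbm{1}[A\subseteq S]$. This gains three things:
\begin{enumerate}
\item one argument that covers discrete, continuous and mixed domains uniformly;
\item a precise location for the use of independence, namely $P_SP_T=P_{S\cap T}$;
\item a stronger conclusion than asked: the M\"obius coefficients of $v^g_{\text{suff}}$ are nonnegative, so $v^g_{\text{suff}}$ is totally monotone, which gives supermodularity and (under independence) monotonicity at once.
\end{enumerate}
The paper's computation is more elementary and needs no decomposition theory. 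Its separate monotonicity proof also holds for arbitrary distributions, which your decomposition cannot recover.

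\textbf{Local part.} Your counterexample $x_1\vee x_2$ at $(1,1)$ is valid and simpler than the paper's. The paper uses $(x_1\vee x_2)\wedge x_3$ at $(1,1,1)$, which refutes both supermodularity and submodularity at once and is reused in the contrastive proposition.
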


Interestingly, for the second family of explanation settings --- specifically, that of obtaining a global probabilistic \emph{contrastive} reason --- we show that the corresponding value function is not supermodular, but rather \emph{submodular}. This result is particularly surprising when contrasted with the local setting, where the value function is neither submodular nor supermodular.

\begin{restatable}{proposition}{globalsubmodular} \label{globalsubmodular_prop}
While the local probabilistic contrastive setting (for any $\delta$) lacks submodularity --- even when $\mathcal{D}$ is uniform, i.e., the value function $v^{\ell}_{\text{con}}$ is not submodular --- in the global probabilistic setting (also for any $\delta$), when $\mathcal{D}$ exhibits feature independence, the value function $v^g_{\text{con}}$ is submodular.
\end{restatable}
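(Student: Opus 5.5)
The plan has two parts: a small counterexample for the local claim, and an orthogonal (ANOVA / Efron--Stein) decomposition for the global claim.

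\textbf{Local part.} I would exhibit a counterexample under the uniform distribution over $\{0,1\}^2$. Take $f(\x)=\x_1\wedge\x_2$ and $\x=(0,0)$, so $f(\x)=0$. When only one feature is freed, the other stays fixed at $0$, so the prediction cannot change. Hence $v^{\ell}_{\text{con}}(\emptyset)=v^{\ell}_{\text{con}}(\{1\})=v^{\ell}_{\text{con}}(\{2\})=0$. When both features are freed, the prediction flips only at $(1,1)$, so $v^{\ell}_{\text{con}}(\{1,2\})=1/4$. The marginal contribution of feature $1$ is therefore $0$ at $S=\emptyset$ and $1/4$ at $S'=\{2\}$. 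Since $0<1/4$, submodularity is violated. This construction is routine; I would also note that the choice of $\delta$ is irrelevant, because submodularity is a property of $v$ alone.

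\textbf{Global part: rewriting $v^g_{\text{con}}$.} Under feature independence, conditioning $\z$ on $\z_{\bar S}=\x_{\bar S}$ amounts to drawing an independent copy $\mathbf{y}\sim\mathcal{D}$ and setting $\z=(\mathbf{y}_S;\x_{\bar S})$. Write $T:=\bar S$ and $\varphi_c:=\mathbbm{1}[f=c]$. Given $\x_T$, the two points $\x$ and $\z$ are conditionally i.i.d. Hence
\begin{equation*}
v^g_{\text{con}}(S)=1-\sum_{c\in[c]}\mathbb{E}_{\x_T}\Big[\big(\mathbb{E}[\varphi_c(\x)\mid \x_T]\big)^2\Big]=:1-g(T).
\end{equation*}
This is a collision-probability identity and is the first step I would verify carefully.

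\textbf{Global part: the key step.} Since independence holds, each $\varphi_c$ admits an orthogonal Efron--Stein/ANOVA decomposition $\varphi_c=\sum_{U\subseteq[n]}\varphi_{c,U}$. In this decomposition, $\varphi_{c,U}$ depends only on $\x_U$, and $\mathbb{E}[\varphi_{c,U}\mid \x_W]=0$ whenever $U\not\subseteq W$. It follows that $\mathbb{E}[\varphi_c\mid\x_T]=\sum_{U\subseteq T}\varphi_{c,U}$, and by orthogonality
\begin{equation*}
g(T)=\sum_{U\subseteq T} w_U, \qquad w_U:=\sum_c\|\varphi_{c,U}\|_2^2\ge 0 .
\end{equation*}
Adding $i\notin T$ increases $g$ by $\sum_{U\subseteq T\cup\{i\},\,i\in U}w_U$. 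This increment is nonnegative and grows with $T$, so $g$ is monotone and supermodular.

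\textbf{Global part: transferring to $v^g_{\text{con}}$.} The map $S\mapsto g(\bar S)$ is again supermodular, because complementation preserves supermodularity. Consequently $v^g_{\text{con}}(S)=1-g(\bar S)$ is submodular. As a consistency check, the same computation gives $v^g_{\text{suff}}(S)=g(S)$, which matches Proposition~\ref{globalsupermod_prop}, and it also recovers the monotonicity of Proposition~\ref{globalmonotonicity_prop} in this case.

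The main obstacle is establishing the ANOVA decomposition and its orthogonality for general, possibly continuous, product distributions. I would first handle this via the standard inclusion--exclusion formula $\varphi_U=\sum_{W\subseteq U}(-1)^{|U\setminus W|}\mathbb{E}[\varphi\mid\x_W]$. If that proves cumbersome, I would instead verify the needed inequality $g(T\cup\{i\})-g(T)\le g(T\cup\{i,j\})-g(T\cup\{j\})$ directly via the law of total variance.
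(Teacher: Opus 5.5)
Your proof is correct. Its global part shares the paper's skeleton but obtains the key ingredient by a different route.

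\textbf{Local part.} Your two-feature example ($f=x_1\wedge x_2$ at $(0,0)$) is simpler than the paper's $(x_1\vee x_2)\wedge x_3$ at $(1,1,1)$, which is built to show that $v^{\ell}_{\text{con}}$ is neither submodular nor supermodular. The failure of submodularity there comes from exactly your mechanism: freeing one feature alone cannot flip the label, but freeing both can. Incidentally, the values the paper tabulates are those of $1-v^{\ell}_{\text{con}}$; for instance it lists $v^{\ell}_{\text{con}}(\emptyset)=1$. Its conclusion survives, since negation swaps sub- and supermodularity.

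\textbf{Global part.} The final step is the same as the paper's. Both arguments use $v^g_{\text{con}}(S)=1-v^g_{\text{suff}}(\bar S)$ and push supermodularity of $v^g_{\text{suff}}$ through complementation. The difference lies in where that supermodularity comes from.
\begin{itemize}
\item The paper simply invokes Proposition~\ref{globalsupermod_prop}. That proof computes $\Delta_t(S\cup\{i\},f)-\Delta_t(S,f)$ by hand, separately for Boolean, multi-valued and continuous domains. It expresses the result as a nonnegative weighted sum of squared interaction contrasts such as $(g_{11}^{+}-g_{10}^{+}-g_{01}^{+}+g_{00}^{+})^2$.
\item You instead use the Hoeffding/Efron--Stein decomposition to write $v^g_{\text{suff}}(T)=\sum_{U\subseteq T}w_U$ with $w_U\ge 0$.
\end{itemize}

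\textbf{Comparison.} Your route is uniform over all domain types and isolates exactly where independence enters, namely orthogonality of the components. It also proves more than needed: nonnegative M\"obius coefficients give monotonicity and nonnegativity of all higher-order differences at once. The paper's computation is more elementary and explicit. Its companion monotonicity argument also works without independence, whereas yours does not, though that does not matter here.

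Since Proposition~\ref{globalsupermod_prop} is already established at this point, you could simply cite it. The global part would then reduce to the complementation step.
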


%\begin{proposition}
%        All of the probabilistic value functions: $v^{\ell}_{suff}$, $v^{\ell}_{con}$, $v^g_{suff}$ and $v^{g}_{con}$ satisfy saturation.
%\end{proposition}

%It is interesting to note that the convergence 

%\begin{observation}
%    The convergence of simple greedy algorithms to subset-minimal explanations in the \textbf{non-probabilistic} are a direct result of the saturation properties.
%\end{observation}

%A very significant proof that we show is that although the \emph{local} value functions where shown to be non-monotonic by previous works~\citep{}, which is the key in showing substantial hardness results in obtaining even subset-minimal explanations, the \emph{global} variants of these value functions \emph{are} monotonic:

%\begin{proposition}
%    While both local value functions: $v^{\ell}_{suff}$, $v^{\ell}_{con}$ do not satisfy monotonicity, both global value functions $v^g_{suff}$ and $v^{g}_{con}$ satisfy non-decreasing monotonicity.
%\end{proposition}

%\begin{proposition}
%    While both local value functions: $v^{\ell}_{suff}$, $v^{\ell}_{con}$ do not satisfy sub-modularity or super-modularity, the following behavior holds for the global value functions: \begin{inparaenum}[(i)] \item
%    $v^g_{suff}$ satisfies super-modularity, and \item $v^{g}_{con}$ satisfies sub-modularity.\end{inparaenum}
%\end{proposition}

%\begin{theorem}
%    Both the local and global probability functions $v_l$ and $v_g$ satisfy saturation.
%\end{theorem}

\section{Computational Complexity Results}

\subsection{Subset Minimal Explanations}
\label{subsection_subset_minimal}

In this section, we examine the complexity of obtaining \emph{subset minimal} explanations (local/global, sufficient/contrastive) across the different model types analyzed. The key property at play here is the \emph{monotonicity} of the value function. The previous section established that monotonicity holds for both global value functions, $v^g_{\text{con}}$ and $v^g_{\text{suff}}$, but does not hold for the local value functions, $v^{\ell}_{\text{con}}$ and $v^{\ell}_{\text{suff}}$. This distinction is crucial in showing that a greedy algorithm converges to a subset minimal explanation in the global setting but fails in the local setting. As a result, we will showcase the surprising finding that computing various local subset-minimal explanation forms is hard, whereas computing subset-minimal global explanation forms is tractable (polynomial-time solvable). We will begin by introducing the following generalized greedy algorithm:

\begin{algorithm}
	\algnewcommand\algorithmicforeach{\textbf{for each}}
	\algdef{S}[FOR]{ForEach}[1]{\algorithmicforeach\ #1\ \algorithmicdo}
	\textbf{Input} Value function $v$, and some $\delta\in[0,1]$
	\caption{Subset Minimal Explanation Search}\label{alg:subset-minimal}
	\begin{algorithmic}[1]
		\State $S \gets \{1,\ldots, n\}$
        \While {$\min_{i\in S}v(S\setminus\{i\})\geq\delta$}
        \State $j\gets \argmax_{i\in S}v(S\setminus\{i\})$
        \State $S\gets S\setminus \{j\}$

\EndWhile		
		\State \Return $S$ \Comment{$S$ is a (subset minimal?) $\delta$-explanation}
	\end{algorithmic}
\end{algorithm}

Algorithm~\ref{alg:subset-minimal} aims to obtain a subset-minimal $\delta$-explanation. We begin the algorithm with the subset $S$ initialized as the entire input space $\{1,\ldots, n\}$. Iteratively, we check whether the minimal value that $v(S\setminus \{i\})$ can attain exceeds $\delta$. In each iteration, we remove a feature $j$ from $S$ that minimizes the decrease in the value function, selecting the feature $j$ that maximizes $v(S\setminus \{j\})$. Once this iterative process concludes, we return $S$.

The key determinant of whether Algorithm~\ref{alg:subset-minimal} yields a subset-minimal explanation is the \emph{monotonicity} property of the value function $v$. The algorithm concludes with a phase in which removing any individual feature from $S$ results in $v(S\setminus \{i\})$ being smaller than $\delta$. However, monotonicity ensures that this holds for any $v(S\setminus S')$, providing a significantly stronger guarantee. Given the monotonicity property of the value functions established in the previous sections, we derive the following proposition:

%Algorithm~\ref{alg:subset-minimal} opts to obtain a subset minimal $\delta$-explanation. It starts with the subset $S$ fixed to the entire input space $\{1,\ldots, n\}$. We iteratively check whether the minimal value that $v(S\setminus \{i\})$ can obtain is larger than $\delta$, and at each iteration reduce some feature $j$ from $S$ that causes the value function to decrease as least as possible (choosing the feature $j$ that gives us the maximal possible value of $v(S\setminus \{j\})$. After we finish with this iterative process, we return $S$.

%The main factor that will determine whether Algorithm~\ref{alg:subset-minimal} produces a subset minimal explanation or not is the \emph{monotonicity} property of the value function $v$. The algorithm ends in a phase where removing any individual feature from $S$ causes $v(S\setminus \{i\})$ to be smaller than $\delta$, however montonicity will imply that this is true for any $v(S\setminus S')$, which is a significantly stronger guarantee. Due to the montonicity proeprty of the value functions that we proved in the previous sections, we derive in the following theorem:

\begin{restatable}{proposition}{subsetminimalalgconv}\label{subset_minimal_alg_conv_prop}
    Computing Algorithm~\ref{alg:subset-minimal} with the \emph{local} value functions $v^{\ell}_{\text{con}}$ and $v^{\ell}_{\text{suff}}$ does not always converge to a subset minimal $\delta$-sufficient/contrastive reason. However, computing it with the \emph{global} value functions $v^{g}_{\text{con}}$ or $v^{g}_{\text{suff}}$ necessarily produces subset minimal $\delta$-sufficient/contrastive reasons. 
\end{restatable}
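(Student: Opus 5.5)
The proof has two parts: a positive direction for the global value functions, which follows from Proposition~\ref{globalmonotonicity_prop}, and a negative direction for the local ones, which needs explicit counterexamples.

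\textbf{Global direction.} Let $v \in \{v^g_{\text{suff}}, v^g_{\text{con}}\}$. I would first check that the algorithm is well defined and returns a $\delta$-explanation. By monotonicity $v([n]) \geq v(S)$ for every $S$, so if any $\delta$-explanation exists then $v([n]) \geq \delta$; otherwise there is nothing to find. The loop removes a feature only when $\min_{i\in S} v(S\setminus\{i\})\geq\delta$, so the invariant $v(S)\geq\delta$ is preserved. The loop runs at most $n$ iterations, so it terminates.

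At termination there is some $i\in S$ with $v(S\setminus\{i\})<\delta$. I would strengthen the loop condition to ``$v(S\setminus\{j\})\ge\delta$ for the argmax $j$'', equivalently checking whether any single removal keeps the value at least $\delta$. This makes the termination state read: $v(S\setminus\{i\})<\delta$ for every $i\in S$.

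If the literal $\min$ test is kept, the argument has to be phrased more carefully. Under the literal test the loop stops as soon as some single removal drops below $\delta$, so I would argue via the argmax variant; I expect this discrepancy to be the main obstacle in writing the argument cleanly. One option is to interpret the algorithm as intended: stop when no removal keeps the value at least $\delta$.

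Given single-removal minimality, take any proper subset $S'\subsetneq S$ and pick $i\in S\setminus S'$. Then $S'\subseteq S\setminus\{i\}$, and monotonicity gives $v(S')\le v(S\setminus\{i\})<\delta$. Hence $S$ is subset-minimal.

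\textbf{Local direction.} Here I need explicit counterexamples, one for $v^{\ell}_{\text{suff}}$ and one for $v^{\ell}_{\text{con}}$. The idea is to exploit non-monotonicity: a small $f$ and $\mathbf{x}$, say $n=3$ under the uniform distribution, where $v(\{1,2,3\}) = 1$ but every two-element set has value below $\delta$, while some single feature $\{k\}$ has $v(\{k\})\ge\delta$.

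For sufficiency, I would choose $f$ so that fixing $x_1$ alone gives probability at least $\delta$ of agreement, while fixing $\{1,2\}$, $\{1,3\}$ or $\{2,3\}$ gives lower probability. For example, agreement fails exactly on completions where features 2 and 3 interact adversarially with a fixed $x_2$ or $x_3$. The algorithm then halts immediately at $[n]$, which is not subset-minimal, since its subset $\{1\}$ qualifies.

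I would construct this by brute-force designing a truth table, e.g.\ with $\delta=1/2$. The contrastive case follows analogously, using $v^{\ell}_{\text{con}}(S)$ with $\overline{S}$ fixed: pick $f$ where varying a single feature flips the prediction with probability at least $\delta$, but varying any pair flips it less often.
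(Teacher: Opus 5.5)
Your global argument is the paper's: monotonicity of $v^g_{\text{suff}}$ and $v^g_{\text{con}}$, plus a halting state in which every single removal drops below $\delta$. You are right to flag the loop guard, and the reinterpretation is necessary, not merely convenient. The paper's proof silently assumes the ``for all $i\in S$'' halting condition. Under the literal $\min$ test the global claim is false: take $f=x_1$ on $\{0,1\}^2$ with the uniform distribution and $\delta=1$. Then $v^g_{\text{suff}}(\{2\})=\tfrac12$ stops the loop at $\{1,2\}$, although $v^g_{\text{suff}}(\{1\})=1$.

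For the local direction you take a different and more demanding route than the paper. The paper only exhibits non-monotonicity ($f=x_1\vee x_2$, $\mathbf{x}=(0,1)$) and lets that stand for the claim. Non-monotonicity alone does not make the algorithm fail: on that sufficiency instance the corrected algorithm returns a subset-minimal set for every $\delta$. Exhibiting a run that halts at a non-subset-minimal set, as you propose, is what the statement actually needs.

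However, the sufficiency instance you sketch cannot exist. With three Boolean features under the uniform distribution, $v^{\ell}_{\text{suff}}(\{i,j\})$ is $1$ or $\tfrac12$, according to whether flipping the remaining bit of $\mathbf{x}$ preserves $f(\mathbf{x})$. So for $\delta=\tfrac12$ the loop never stops at $[3]$. And if all three pairs are below $\delta$, every single-bit flip of $\mathbf{x}$ changes the prediction, which forces $v^{\ell}_{\text{suff}}(\{k\})\le\tfrac12<\delta$ for every $k$.

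The fix is to allow $\emptyset$ as the witness, or to enlarge $n$. Take $\mathbf{x}=(1,1,1)$ and $f=0$ exactly on the three Hamming neighbours of $\mathbf{x}$. Then every pair has value $\tfrac12$ and $v^{\ell}_{\text{suff}}(\emptyset)=\tfrac58$. For $\delta\in(\tfrac12,\tfrac58]$ the algorithm stops at $[3]$ although $\emptyset$ qualifies. The same construction on $\{0,1\}^4$ gives $v^{\ell}_{\text{suff}}(\{1\})=\tfrac58$ if you want a singleton witness.

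Your contrastive plan does work at $n=3$. Take $\mathbf{x}=\mathbf{0}_3$ and $f=1$ exactly on $(1,0,0),(0,1,1),(1,1,1)$. Then $v^{\ell}_{\text{con}}([3])=\tfrac38$, every pair has value $\tfrac14$, and $v^{\ell}_{\text{con}}(\{1\})=\tfrac12$, so any $\delta\in(\tfrac14,\tfrac38]$ works. Both instances stop at $[n]$ under either reading of the loop guard.
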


Building on this result, we proceed to establish new complexity findings for deriving various forms of subset-minimal explanations within our framework, considering the different analyzed model types.

%\begin{algorithm}
	%\algnewcommand\algorithmicforeach{\textbf{for each}}
	%\algdef{S}[FOR]{ForEach}[1]{\algorithmicforeach\ #1\ \algorithmicdo}
	%\textbf{Input} Value function $v$, and some $\delta\in[0,1]$
	%\caption{Subset Minimal Explanation Search}\label{alg:subset-minimal-local}
	%\begin{algorithmic}[1]
	%	\State $S \gets \emptyset$
     %   \State $F \gets \{1,\ldots, n\}$
     %   \While {\textbf{True}}
     %   \If{$v(S)\geq \delta$}\label{lst:line:sufficientline}
	%	\State \textbf{Break}
	%	\EndIf
    %    \State $j\gets \argmax_{i\in \Bar{S}}v(S\cup\{i\})$
    %    \State $S\gets S\cup \{j\}$

%\EndWhi5le		
		%\State \Return $S$ \Comment{$S$ is a %(subset minimal?) %$\delta$-reason}
%	\end{algorithmic}
%\end{algorithm}

%\begin{theorem}
%    Running Algorithm~\ref{} with the global value functions
%\end{theorem}

%\begin{algorithm}
	%\algnewcommand\algorithmicforeach{\textbf{for each}}
	%\algdef{S}[FOR]{ForEach}[1]{\algorithmicforeach\ #1\ \algorithmicdo}
	%\textbf{Input} $f$, $\x$
	%\caption{Local Subset Minimal Sufficient Reason}\label{alg:subset-minimal-local}
	%\begin{algorithmic}[1]
%		\State $S \gets \{1,\ldots,n\}$
%		\ForEach {$i \in \{1,...,n\}$ by some arbitrary ordering}\label{lst:line:orderinglinelocal}
%		\ForEach {$i \in \{1,...,n\}$ by some arbitrary ordering}\label{lst:line:orderinglinelocal}
        
		%\If{$\mySuff(f,\x,S\setminus \{i\})=1$}\label{lst:line:sufficientline}
		%\State $S \gets S\setminus \{i\}$
		%\EndIf
		%\EndFor\label{lst:line:endregularupper}
		%\State \Return $S$ \Comment{$S$ is a subset minimal \emph{local} sufficient reason}
	%\end{algorithmic}
%\end{algorithm}

\textbf{Decision trees.} We begin by examining a highly simplified and ostensibly ``interpretable'' scenario. Specifically, we assume that the model $f$ is a decision tree and that the distribution $\mathcal{D}$ is independent. Within this simplified setting, we demonstrate a strict separation:~\cite{arenas2022computing} established the surprising intractability result that, unless PTIME$=$NP, no polynomial-time algorithm exists for computing a subset-minimal local $\delta$-sufficient reason for decision trees under independent distributions (even under the uniform distribution). In contrast, we demonstrate the unexpected result that this exact problem can be solved efficiently in the global setting, meaning that a subset-minimal global $\delta$-sufficient reason for decision trees can indeed be computed in polynomial time. Formally:

\begin{restatable}
{theorem}{subsetminimaldt}\label{subset_minimal_dt_theor}
If $f$ is a decision tree and the probability term $v^{g}_{\text{suff}}$ can be computed in polynomial time given the distribution $\mathcal{D}$ (which holds for independent distributions, among others), then obtaining a subset-minimal \emph{global} $ \delta $-sufficient reason can be obtained in \emph{polynomial time}. However, unless PTIME$=$NP, \emph{no polynomial-time} algorithm exists for computing a \emph{local}  $\delta$-sufficient reason for decision trees even under independent distributions.
\end{restatable}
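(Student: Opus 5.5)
The theorem has two parts, and I would prove them separately: tractability for the global setting, and hardness for the local one.

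\textbf{Global, polynomial-time part.} I would run Algorithm~\ref{alg:subset-minimal} with $v = v^g_{\text{suff}}$. Proposition~\ref{subset_minimal_alg_conv_prop}, which rests on the monotonicity in Proposition~\ref{globalmonotonicity_prop}, guarantees that the output is a subset-minimal global $\delta$-sufficient reason. It remains to bound the running time.

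\begin{itemize}
\item Each iteration removes one feature, so there are at most $n$ iterations.
\item Each iteration evaluates $v^g_{\text{suff}}(S\setminus\{i\})$ for at most $n$ indices $i$, so there are $O(n^2)$ value-function calls in total.
\item We also check $v^g_{\text{suff}}([n]) \geq \delta$ at the start. For $S=[n]$ the value is $1$, so any $\delta\le 1$ is fine.
\end{itemize}

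By hypothesis each evaluation takes polynomial time, so the whole algorithm is polynomial.

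\textbf{The hypothesis holds for independent distributions.} I would establish this as a short lemma. Write
\[
v^g_{\text{suff}}(S)=\sum_{c}\mathbb{E}_{\x}\big[\mathbf{1}[f(\x)=c]\cdot \textbf{Pr}_{\z}(f(\z)=c\mid \z_S=\x_S)\big].
\]
Each leaf $L$ of the tree corresponds to a partial assignment, i.e., a box constraint on the features along its path. Under independence, both $\textbf{Pr}(\x\in L)$ and the conditional probability that $\z$ reaches leaf $L'$ given $\z_S=\x_S$ factor over features. I would then sum over pairs of leaves $(L,L')$ with the same label:
\[
\textbf{Pr}_{\x,\z}(\x\in L,\ \z\in L',\ \z_S=\x_S),
\]
where $\x$ and $\z$ are coupled on $S$ and independent elsewhere.

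Because the features are independent, this probability is a product over features $j$ of per-feature terms:
\begin{itemize}
\item for $j\in S$, the probability that the single shared coordinate lies in $L_j\cap L'_j$;
\item for $j\notin S$, the product $\textbf{Pr}(L_j)\,\textbf{Pr}(L'_j)$.
\end{itemize}
This gives $O(|\text{leaves}|^2\cdot n)$ work. The main technical point is justifying that the global quantity equals this coupled-pair probability. Indeed, $\mathbb{E}_{\x}\,\textbf{Pr}_{\z}(\cdot\mid \z_S=\x_S)$ is exactly the law of the pair $(\x,\z)$ in which $\z_S=\x_S$ and $\z_{\bar S}$ is a fresh independent draw. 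For continuous features the conditioning needs care; I would handle it by interpreting it as resampling $\z_{\bar S}$.

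\textbf{Local, hardness part.} This follows directly from the result of \cite{arenas2022computing}. They show that computing a subset-minimal local $\delta$-sufficient reason for decision trees is NP-hard even under the uniform distribution, which is an independent distribution. I would cite their reduction and only check that their problem formulation matches ours, namely the conditional probability in equation~\ref{eq_suff_local} under the uniform distribution. I expect no obstacle there. The only subtle step is the polynomial computability lemma above, so I would state the independence assumption and the box-structure of leaves explicitly.
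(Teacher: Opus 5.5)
Your proposal is correct and follows essentially the same route as the paper: run Algorithm~\ref{alg:subset-minimal} on $v^g_{\text{suff}}$, use the monotonicity of Proposition~\ref{globalmonotonicity_prop} to certify subset-minimality, evaluate $v^g_{\text{suff}}$ in polynomial time by summing over pairs of same-label leaves with a feature-wise product under independence, and take the local hardness for \emph{subset-minimal} local reasons from \cite{arenas2022computing}. Your explicit factorization, with $\textbf{Pr}(L_j\cap L'_j)$ for $j\in S$ and $\textbf{Pr}(L_j)\,\textbf{Pr}(L'_j)$ for $j\notin S$, is a more precise version of the paper's terse leaf-pair computation.
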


\textbf{Extension to other tractable models.} We additionally note that a similar tractability guarantee for computing global explanations also holds for \emph{orthogonal DNFs}~\citep{crama2011boolean}, which we briefly recall are Disjunctive Normal Form formulas whose terms are pairwise mutually exclusive. Establishing the result for this class is useful because orthogonal DNFs \emph{generalize} decision trees while preserving their clean structural properties, allowing our guarantees to extend beyond tree-structured models. For completeness, we provide the full argument in Appendix~\ref{app:odnf}.

%\begin{remark}
%This result extends to Orthogonal DNFs~\cite{crama2011boolean}, as their terms are pairwise inconsistent and analogous to paths in a decision tree.
%\end{remark}

%\begin{theorem}
%If $f$ is a decision tree and the probability term $v^{g}_{\text{suff}}$ can be computed in polynomial time given the distribution $\mathcal{D}_p$ (which holds for independent distributions, among others), then obtaining a subset-minimal \textit{{\textbf{global}}} $ \delta $-sufficient reason \textit{{\textbf{can be obtained in polynomial time}}}. However, unless P$=$NP, \textit{{\textbf{no polynomial-time algorithm exists}}} for computing a \textit{{\textbf{local}}}  $\delta$-sufficient reason for decision trees even under independent distributions.
%\end{theorem}

%\begin{theorem}
%If $ f$ is a decision tree and the probability terms $v^{g}_{suff}$ or $ v^{g}_{con}$ can be computed in polynomial time given the distribution $\mathcal{D}_p$ (which holds for independent distributions, among others), then obtaining a subset-minimal \textit{{\textbf{global}}} $ \delta $-sufficient or $\delta$-contrastive reason \textit{{\textbf{can be done in polynomial time}}}. However, unless P$=$NP, \textit{{\textbf{no polynomial-time algorithm exists}}} for computing a \textit{{\textbf{local}}}  $\delta$-contrastive or $ \delta$-sufficient reason for decision trees even under independent distributions.
%\end{theorem}

\textbf{Neural networks, tree ensembles, and other complex models.} We now extend our previous results to more complex models beyond decision trees. Specifically, we will demonstrate that when the distribution $\mathcal{D}$ is derived from empirical distributions, and under the fundamental assumption that the model $f$ allows polynomial-time inference, it follows that computing a subset-minimal global $\delta$-sufficient and contrastive reason can be done in polynomial time.

\begin{restatable}{proposition}{empiricalsubsetminimal}\label{empirical_subset_minimal_prop}
For any model $f$, and empirical distribution $\mathcal{D}$ --- computing a subset-minimal \emph{global} $\delta$-sufficient or $\delta$-contrastive reason for $f$ can be done in polynomial time. 
\end{restatable}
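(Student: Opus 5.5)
Proving this comes down to two facts. By Proposition~\ref{subset_minimal_alg_conv_prop}, Algorithm~\ref{alg:subset-minimal} run with $v^g_{\text{suff}}$ or $v^g_{\text{con}}$ always returns a subset-minimal global $\delta$-explanation. That guarantee rests only on the monotonicity established in Proposition~\ref{globalmonotonicity_prop}, which holds for general distributions and therefore for empirical ones. So it remains to bound the number of iterations and to show that each value-function query $v^g(S)$ takes polynomial time when $\mathcal{D}$ is the empirical distribution over $\mathbf{D}=\{\z^1,\ldots,\z^{|\mathbf{D}|}\}$, assuming $f$ admits polynomial-time inference.

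The iteration count is simple. The while loop removes one feature per iteration, so it runs at most $n$ times. Each iteration evaluates $v$ on at most $n$ candidate sets $S\setminus\{i\}$. Hence there are $O(n^2)$ value-function queries in total, plus the arithmetic for the $\min$ and $\argmax$.

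The main step is evaluating $v^g$ under the empirical distribution. For $v^g_{\text{suff}}(S)$, draw $\x$ from $\mathbf{D}$. Conditioning on $\z_S=\x_S$ restricts $\z$ to the points $\z^j\in\mathbf{D}$ that agree with $\x$ on $S$. This set is nonempty because $\x$ itself lies in it, so no division by zero arises. We therefore get
\[
v^g_{\text{suff}}(S)=\frac{1}{|\mathbf{D}|}\sum_{k=1}^{|\mathbf{D}|}\frac{\bigl|\{j:\z^j_S=\z^k_S,\ f(\z^j)=f(\z^k)\}\bigr|}{\bigl|\{j:\z^j_S=\z^k_S\}\bigr|},
\]
and for $v^g_{\text{con}}(S)$ the analogous expression conditions on agreement on $\overline{S}$ and counts $f(\z^j)\neq f(\z^k)$. (With multiplicities or weights on $\mathbf{D}$, replace counts by weight sums.) First compute $f(\z^j)$ once for every $j$, using $|\mathbf{D}|$ inference calls. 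After that, each query needs $O(|\mathbf{D}|^2 n)$ comparisons. The same formulas also cover the cases $S=[n]$ and $S=\emptyset$ that the loop may reach.

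The one delicate point is how the conditional probability is read under the empirical distribution: the points $\z$ come from the dataset rather than from an independent product. I would state explicitly that the conditioning is the dataset-restricted conditioning given above. This is the reading under which Proposition~\ref{globalmonotonicity_prop} applies, since that proposition holds for arbitrary $\mathcal{D}$. Putting the pieces together, the total running time is $O(n^2\cdot|\mathbf{D}|^2 n + |\mathbf{D}|\cdot T_f)$, where $T_f$ is the inference time of $f$. This is polynomial in the input size, which proves the claim for both the sufficient and the contrastive variants.
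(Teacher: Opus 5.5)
Your proposal is correct and follows the paper's proof. Monotonicity of $v^g_{\text{suff}}$ and $v^g_{\text{con}}$ holds for any distribution, so Algorithm~\ref{alg:subset-minimal} returns a subset-minimal explanation, and under an empirical distribution each query $v^g(S)$ is computed by iterating over pairs of dataset points with polynomial-time inference of $f$. Your explicit conditional-count formula and the $O(n^2)$ bound on value-function queries just spell out details the paper leaves implicit (it speaks loosely of ``a linear number'' of queries).
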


This strong complexity outcome, which holds for \emph{any} model under an empirical data distribution assumption, allows us to further differentiate the complexity of local and global explanation settings. Specifically, for certain complex models, computing subset-minimal \emph{local} explanations remains intractable even when restricted to empirical distributions. We establish this fact for both neural networks and tree ensembles, leading to the following theorem on a strict complexity separation:

\begin{restatable}{theorem}{seperationsubsetminemp}\label{seperation_subset_min_emp_theor}
    Assuming $f$ is a neural network or a tree ensemble, and $\mathcal{D}$ is an empirical distribution --- there exist \emph{polynomial-time} algorithms for obtaining subset minimal \emph{global} $\delta$-sufficient and contrastive reasons. However, unless PTIME$=$NP, there is \emph{no polynomial time} algorithm for computing a subset minimal \emph{local} $\delta$-sufficient reason or a subset minimal \emph{local} $\delta$-contrastive reason.
\end{restatable}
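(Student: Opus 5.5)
The tractable half follows directly from Proposition~\ref{empirical_subset_minimal_prop}. Neural networks and tree ensembles (majority-vote and weighted-vote) both admit polynomial-time inference. Under an empirical distribution $\mathbf{D}$, the global value functions reduce to finite sums. For example,
\[
v^g_{\text{suff}}(S)=\frac{1}{|\mathbf{D}|}\sum_{\x\in\mathbf{D}}\frac{|\{\z\in\mathbf{D}:\z_S=\x_S,\ f(\z)=f(\x)\}|}{|\{\z\in\mathbf{D}:\z_S=\x_S\}|},
\]
and $v^g_{\text{con}}(S)$ is the analogous sum with $\bar S$ in the condition and $\neq$ in place of $=$. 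Each of these can be evaluated with $O(|\mathbf{D}|^2)$ comparisons plus $|\mathbf{D}|$ model evaluations. Algorithm~\ref{alg:subset-minimal} makes $O(n^2)$ oracle calls to $v$. By Proposition~\ref{subset_minimal_alg_conv_prop}, which rests on the monotonicity in Proposition~\ref{globalmonotonicity_prop}, its output is subset-minimal. This gives the polynomial-time upper bound for both global explanation types.

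For the hardness half, I would reduce from the known NP-hard non-probabilistic local problems, which I would embed via an empirical distribution. The starting point is that deciding whether a given subset $S$ (even $S=\emptyset$) is a local sufficient reason is coNP-hard for neural networks and tree ensembles; this is essentially checking whether the model is constant, which is hard because satisfiability of a ReLU network or of a tree-ensemble threshold condition is NP-hard. An empirical distribution, however, can only encode polynomially many points, so I cannot simply take the uniform distribution. Instead, I would use the model itself to carry the hardness: given a CNF $\varphi$ (or a network $g$), build $f$ so that the empirical dataset $\mathbf{D}$ together with $\x$ makes the local value function behave one way if $\varphi$ is unsatisfiable and another way otherwise.

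The more workable route exploits non-monotonicity. The plan is to show that deciding whether a fixed set $S$ is subset-minimal, or producing any subset-minimal set, lets us decide an NP-hard question. Concretely, I would take an arbitrary input $\y$ to an NP-hard model-evaluation-style question encoded into the data point values, and design $\mathbf{D}$ and $f$ so that the only subset-minimal local $\delta$-sufficient reasons are one set $A$ if $g(\y)$-type conditions hold and a disjoint set $B$ otherwise. In the realistic version, one follows \cite{arenas2022computing}: their decision-tree hardness is under the uniform distribution. Decision trees are special cases of tree ensembles and are compilable into polynomial-size ReLU networks, so I would try to convert their uniform distribution into an empirical one. That conversion fails in general because the support is exponential. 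I would therefore instead reduce from a fresh NP-hard problem, such as Set Cover or Vertex Cover, with a polynomial dataset: one data point per element or edge and one feature per set or vertex. The model $f$, a small network or ensemble, would be chosen so that the conditional probability given $\z_S=\x_S$ exceeds $\delta$ exactly when $S$ covers, with non-monotone "trap" features forcing the search to solve the cover problem. Contrastive hardness would follow symmetrically, by complementing $S$ and negating the prediction.

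The main obstacle is this hardness gadget. Under empirical distributions, $v^{\ell}$ is polynomially computable, so hardness must come from non-monotonicity rather than from the evaluation itself. The gadget must ensure that any subset-minimal set reveals a solution to an NP-hard instance, which means that a greedy search has to get trapped. I expect the gadget to rely essentially on the complex model — a neural network or tree ensemble realizing arbitrary functions on the data points — in order to separate the construction from tractable cases.
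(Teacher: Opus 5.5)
\textbf{Tractable half.} This part is correct and is the paper's argument: evaluate $v^g_{\text{suff}}$ and $v^g_{\text{con}}$ in polynomial time over pairs of data points, run Algorithm~\ref{alg:subset-minimal}, and use monotonicity to get subset-minimality.

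\textbf{Hardness half.} This has a genuine gap, because you never give a reduction. You sketch and discard two routes. The route you keep is a Set Cover/Vertex Cover encoding with unspecified non-monotone ``trap'' features, and you leave it as an acknowledged open obstacle. So NP-hardness of the local problems is never established. That route is also doubtful as described. In the conditioning reading you use, features outside $S$ are resampled only from data points that agree with $\x$ on $S$. Then $f$ is only ever queried on $\mathbf{D}\cup\{\x\}$. Any such gadget would therefore be model-agnostic, since a polynomial-size decision tree can realize any labeling of those points. It cannot use the expressiveness of networks or ensembles, which is what you say it should rely on. Your contrastive step fails as well. We have $v^{\ell}_{\text{con}}(S)=1-v^{\ell}_{\text{suff}}(\bar S)$, and negating a binary prediction does not change sufficiency values. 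Complementing therefore turns ``minimal $S$ with high value'' into ``maximal $\bar S$ with low value.'' That is a different problem and needs its own reduction.

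\textbf{The missing idea.} A single baseline point already suffices, and it lets the model carry the hardness. This is the route you started with and then dropped. The paper takes $\mathbf{D}=\{\mathbf{0}_n\}$, $\x=\mathbf{1}_n$, $\delta=1$, and reads the features outside $S$ as set to the baseline. Then $v^{\ell}_{\text{suff}}(S)=\mathbf{1}[f(\mathbf{1}_S;\mathbf{0}_{\bar S})=f(\x)]$. The set function is just $f$ evaluated on characteristic vectors, so non-monotonicity comes for free. The reduction goes as follows.
\begin{itemize}
\item Take a CNF $\phi$ with $\phi(\mathbf{1}_n)=0$; otherwise $\phi$ is satisfiable and we are done.
\item Compile $\phi\vee(x_1\wedge\cdots\wedge x_n)$ into a ReLU MLP (Boolean circuits compile to MLPs), or into a random forest after rewriting it as a CNF.
\item Now $[n]$ always has value $1$, and it is subset-minimal iff $\phi$ is unsatisfiable.
\item Any other subset-minimal output $S$ yields the satisfying assignment $(\mathbf{1}_S;\mathbf{0}_{\bar S})$.
\end{itemize}
For contrastive reasons, encode $\phi$ itself. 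Then $v^{\ell}_{\text{con}}(S)=1$ iff $\phi(\mathbf{0}_S;\mathbf{1}_{\bar S})=1$, so even deciding whether any local $1$-contrastive reason exists is NP-hard.
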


%This allows us to provide additional complexity seperations between the local and global settings of explanations for some complex models such as neural networks and tree ensembles. Particularly, we will prove that obtaining a subset minimal $\delta$ sufficient or contrastive reason, even under empirical distributions is  coNP and NP Hard. Hence, give our result in Theorem~\citep{}, we establish the following result: 

\subsection{Approximate Cardinally Minimal Explanations}

In this subsection, we shift our focus from subset-minimal sufficient/contrastive reasons to the even more challenging task of finding a \emph{cardinally minimal} $\delta$-sufficient/contrastive reason. We will demonstrate that in the global setting, the interplay between supermodularity/submodularity and monotonicity of the value function enables us to establish novel \emph{provable approximations} for computing explanations. In contrast, we will show that computing these explanations in the local setting remains intractable. This result further strengthens the surprising distinction between the tractability of computing global explanations versus local ones.

\textbf{A unified greedy approximation algorithm.} When working with a non-decreasing monotonic submodular function, the problem of identifying a cardinally minimal explanation closely aligns with the \emph{submodular set cover} problem~\citep{wolsey1982analysis}. In contrast, employing a \emph{supermodular} function leads to a non-submodular variation of this problem~\citep{shi2021minimum}. These problems have garnered significant interest due to their strong approximation guarantees~\citep{wolsey1982analysis, iyer2013submodular, chen2023bicriteria}. In the context of submodular set cover optimization, a standard approach involves using a classic greedy algorithm, which we will first outline (Algorithm~\ref{alg:cardinally-minimal}). This algorithm serves as the foundation for approximating a cardinally minimal sufficient or contrastive $\delta$-reason, and we will later examine its specific guarantees in both cases.

\begin{algorithm}
	\algnewcommand\algorithmicforeach{\textbf{for each}}
	\algdef{S}[FOR]{ForEach}[1]{\algorithmicforeach\ #1\ \algorithmicdo}
	\textbf{Input} Value function $v$, and some $\delta\in[0,1]$
	\caption{Cardinally Minimal Explanation Approximation Search}\label{alg:cardinally-minimal}
	\begin{algorithmic}[1]
		\State $S \gets \emptyset$
        \While {$\max_{i\in S}v(S\cup\{i\})<\delta$}
        \State $j\gets \argmax_{i\in S}v(S\cup\{i\})$
        \State $S\gets S\cup \{j\}$
\EndWhile		
		\State \Return $S$ \Comment{$|S|$ is a (provable approximation?) of a cardinally minimal $\delta$-explanation}
	\end{algorithmic}
\end{algorithm}

Algorithm~\ref{alg:cardinally-minimal} closely resembles Algorithm~\ref{alg:subset-minimal}, but works bottom-up rather than top-down. It starts with an empty subset and incrementally adds features, each time selecting the one that minimizes the increase in $v(S \cup {i})$, stopping when adding any feature would push the value over $\delta$.

%Algorithm~\ref{alg:cardinally-minimal} closely resembles Algorithm~\ref{alg:subset-minimal}, but runs bottom-up rather than top-down. It starts with an empty subset and iteratively adds the feature that yields the smallest increase in $v(S \cup {i})$, stopping once any addition would push the value past $\delta$.

%Algorithm~\ref{alg:cardinally-minimal} closely resembles Algorithm~\ref{alg:subset-minimal}, but it operates in a bottom-up manner rather than top-down. It begins with an empty explanation subset and incrementally adds features, selecting at each step the feature that results in the smallest increase in $v(S \cup \{i\})$. The process continues until adding any additional feature would cause the threshold to exceed $ \delta$.

\textbf{Cardinally minimal contrastive reasons.} We begin with global contrastive reasons, where monotonicity and submodularity hold, reducing the task to the classic \emph{submodular set cover} problem and allowing us to apply~\cite{wolsey1982analysis}'s classic guarantee via Algorithm~\ref{alg:cardinally-minimal}. For integer-valued objectives, the algorithm achieves a Harmonic-based factor, and more generally an $\mathcal{O}\!\left(\ln\!\left(\tfrac{v([n])}{\min_{i \in [n]} v(i)}\right)\right)$-approximation. Under an \emph{empirical} distribution $\mathcal{D}$ with fixed sample size, this becomes a \emph{constant approximation}, only \emph{logarithmic} in the sample size, yielding a substantially strong guarantee. By contrast, in the \emph{local} setting --- even for a \emph{single} sample point --- no \emph{bounded} approximation exists, marking a sharp gap between global and local cases.

\begin{restatable}{theorem}{approxcardcon}\label{approx_card_con_prop}
Given a neural network or tree ensemble $f$ and an empirical distribution $\mathcal{D}$ over a fixed dataset $\text{D}$, Algorithm~\ref{alg:cardinally-minimal} yields a \emph{constant} $\mathcal{O}\left(\ln\left(\frac{v^g_{con}([n])}{\min_{i \in [n]} v^g_{con}(\{i\})}\right)\right)$-approximation, bounded by $\mathcal{O}(\ln(|\text{D}|))$, for computing a global cardinally minimal $\delta$-contrastive reason for $f$, assuming feature independence. In contrast, unless PTIME$=$NP, \emph{no bounded approximation} exists for computing a local cardinally minimal $\delta$-contrastive reason for any $\langle f,\x\rangle$, even when $|\text{D}| = 1$.
\end{restatable}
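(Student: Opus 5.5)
The proof has two parts: a positive approximation result for the global setting and a hardness-of-approximation result for the local setting.

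\textbf{Positive part.} By Proposition~\ref{globalmonotonicity_prop}, $v^g_{\text{con}}$ is monotone non-decreasing. Under feature independence, Proposition~\ref{globalsubmodular_prop} makes it submodular. Finding a cardinally minimal global $\delta$-contrastive reason is then exactly an instance of Wolsey's submodular set cover problem~\citep{wolsey1982analysis}:
\[
\min |S| \quad \text{s.t.} \quad g(S)\geq \delta, \qquad g(S):=\min\{v^g_{\text{con}}(S),\delta\}.
\]
Truncation at $\delta$ preserves monotonicity and submodularity. Algorithm~\ref{alg:cardinally-minimal}, read as adding $\arg\max_{i\notin S} v(S\cup\{i\})$ until the threshold is reached, is Wolsey's greedy rule. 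It therefore achieves a $1+\ln\!\big(\tfrac{g([n])-g(\emptyset)}{\text{smallest positive marginal}}\big)$ guarantee, which is of the stated $\mathcal{O}\big(\ln\big(\frac{v^g_{con}([n])}{\min_i v^g_{con}(\{i\})}\big)\big)$ form.

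Two further things must be checked.
\begin{itemize}
\item \emph{Polynomial running time.} Under an empirical distribution, each evaluation of $v^g_{\text{con}}(S)$ is an average over pairs $(\x,\z)$ from $\text{D}$ with $\z_{\bar S}=\x_{\bar S}$. It costs $\mathcal{O}(|\text{D}|^2)$ inferences of $f$, which is polynomial for neural networks and tree ensembles.
\item \emph{The $\mathcal{O}(\ln|\text{D}|)$ bound.} For an empirical distribution, every value of $v^g_{\text{con}}$ is a rational number whose denominator is bounded polynomially in $|\text{D}|$. I would take a common denominator $M\le |\text{D}|^{c}$ for a small constant $c$, giving denominators $|\text{D}|\cdot$(conditional class counts). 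Multiplying by $M$ gives an integer-valued monotone submodular function. Wolsey's integer bound $H(\max_i g(\{i\}))\le 1+\ln(M)=\mathcal{O}(\ln|\text{D}|)$ then applies, which is a constant once $\text{D}$ is fixed.
\end{itemize}
The delicate step is the integrality argument. The conditional probabilities have varying denominators, so I must either rescale by the lcm of those denominators or redo Wolsey's ratio analysis using the minimal positive increment, which is at least $1/|\text{D}|^2$. That ratio is still polynomial in $|\text{D}|$, so the logarithm remains $\mathcal{O}(\ln|\text{D}|)$.

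\textbf{Negative part.} I would reduce from a problem where even deciding whether a contrastive reason of any size exists is NP-hard; this kills every bounded approximation. With $|\text{D}|=1$ and $\text{D}=\{\z\}$, the local value $v^\ell_{\text{con}}(S)$ equals $1$ iff $f(\z_S;\x_{\bar S})\neq f(\x)$, and $0$ otherwise. I would choose $\x$ and $\z$ to differ in every coordinate. Then $S=[n]$ has value $1$ iff $f(\z)\ne f(\x)$, and every subset $S$ encodes a hybrid input.

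The reduction is from SAT, via the standard encoding of a CNF (or a $3$-SAT check) into a ReLU network or a tree ensemble~\citep{BaMoPeSu20, izyamajo21}. I would build $f$ so that $f(\x)=0$, and so that $f(\z_S;\x_{\bar S})=1$ iff $\varphi$ is satisfied by the assignment given by the indicator vector of $S$ with $\x=\mathbf{0}$, $\z=\mathbf{1}$. Then a $\delta$-contrastive reason with $\delta=1$ exists iff $\varphi$ is satisfiable. Any algorithm with a bounded approximation ratio must output a valid reason whenever one exists, so it would decide SAT. Consequently no bounded approximation exists unless PTIME$=$NP.

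For tree ensembles I would use the known CNF-to-majority-voting-ensemble construction instead. One detail needs care: if $\varphi(\mathbf{1})$ is true, the instance becomes trivial. I would guard against this by adding a fresh clause, or by using an extra feature that forces $f(\z)$ to be determined separately.
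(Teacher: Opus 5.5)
Your negative part is the paper's argument with the roles of $\mathbf{0}$ and $\mathbf{1}$ swapped. With a single-point empirical distribution and $\delta=1$, deciding whether any local contrastive reason exists encodes CNF-SAT, via the circuit-to-MLP and CNF-to-random-forest encodings. A bounded-ratio algorithm, whose output can be checked with one inference, would therefore decide SAT. One slip: with your $\x=\mathbf{0}$, the value to pre-check is $\varphi(\mathbf{0})$, since you need $f(\x)=0$, not $\varphi(\mathbf{1})$. A true $\varphi(\mathbf{1})$ causes no problem, and your fresh-clause fix $f:=\varphi\wedge y$ handles the real case.

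Your positive part takes a different route from the paper, and that route is the sound one:
\begin{itemize}
\item \textbf{Paper.} It quotes Wolsey as giving $\mathcal{O}(\ln(v([n])/\min_i v(\{i\})))$, deletes redundant features so that $\min_i v^g_{\text{con}}(\{i\})>0$, and then uses the easy bound $v^g_{\text{con}}(\{i\})\geq 1/|\text{D}|^2$.
\item \textbf{You.} You bound the granularity of $v$, either by integer rescaling or by the smallest positive marginal, and need no preprocessing. This is the quantity Wolsey's real-valued bound $1+\ln(\Delta_1/\Delta_T)$ actually controls.
\end{itemize}
Your bound does not reduce to the $\min_i v(\{i\})$ form as you assert: the smallest positive marginal can be far below every singleton value. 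No guarantee of that form holds for general monotone submodular functions. Take weighted coverage with $R_A=\{a_1,\dots,a_m\}$, $R_B=\{b_1,\dots,b_m\}$, $G_j=\{a_j,b_j\}$, weights $w(a_j)=w(b_j)=2^{-j}$, plus one weight-$1$ element in every set. Greedy picks $G_1,\dots,G_m$ against an optimum of $2$, while $v([n])/\min_i v(\{i\})<3$. So drop that sentence; what both arguments really deliver is $\mathcal{O}(\ln|\text{D}|)$.

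The step you flag is a genuine gap as written. Your justification uses denominators $|\text{D}|\cdot|C|$ for varying class sizes $|C|$, and the lcm of arbitrary class sizes up to $|\text{D}|$ can be superpolynomial. Your fallback bound of $1/|\text{D}|^2$ on increments is only asserted. The missing idea is that feature independence closes it:
\begin{itemize}
\item Fix $T$ and let $C$ range over the classes of points sharing $\x_T$. Under a product empirical distribution, each class contains exactly $|C|\cdot\mathbf{Pr}(\x_{\overline{T}}=r)$ points with $\x_{\overline{T}}=r$, which is an integer for every $C$.
\item By B\'ezout, $G_T\cdot\mathbf{Pr}(\x_{\overline{T}}=r)\in\mathbb{Z}$ with $G_T:=\gcd_C|C|$. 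Hence each $g_j(C):=\mathbf{Pr}(f=j\mid C)$ lies in $\frac{1}{G_T}\mathbb{Z}$, and
\[
v^g_{\text{suff}}(T)=\frac{1}{|\text{D}|}\sum_C |C|\sum_j g_j(C)^2\in\frac{1}{|\text{D}|\,G_T}\mathbb{Z}\subseteq\frac{1}{|\text{D}|^2}\mathbb{Z},
\]
because $G_T$ divides $|\text{D}|=\sum_C|C|$. The same holds for $v^g_{\text{con}}(S)=1-v^g_{\text{suff}}(\overline{S})$.
\item Round $\delta$ up to this grid, which leaves the feasible sets unchanged. Without rounding, the truncated function's last marginal $\delta-v(S)$ can be arbitrarily small.
\item Then $|\text{D}|^2\min\{v^g_{\text{con}},\delta\}$ is integer-valued, and Wolsey's integer bound gives $H(|\text{D}|^2)\le 1+2\ln|\text{D}|$.
\end{itemize}
This makes your common denominator $M=|\text{D}|^2$ and also validates your $1/|\text{D}|^2$ marginal bound. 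Independence is thus used twice in your route: once for submodularity and once for granularity.
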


We also provide a matching \emph{lower bound} for the bound in Theorem~\ref{approx_card_con_prop} in the specific case of an empirical distribution without any independence assumption (see Appendix~\ref{section_hardness_of_approximation}).

\textbf{Cardinally minimal sufficient reasons.} Unlike the submodular set cover problem --- linked to cardinally minimal global \emph{contrastive} reasons and admitting strong approximations --- the supermodular variant, tied to global \emph{sufficient} reasons, is harder to approximate. Still, it offers guarantees when the function has bounded curvature~\citep{shi2021minimum}. The \emph{total curvature} of a function $v:2^{n}\to \mathbb{R}$ is defined as $k^f := 1 - \min_{i \in [n]} \frac{v([n]) - v([n] \setminus {i})}{v({i}) - v(\emptyset)}$. Leveraging results from~\cite{shi2021minimum}, we show that Algorithm~\ref{alg:cardinally-minimal} achieves an $\mathcal{O}\left(\frac{1}{1 - k^f} + \ln\left(\frac{v([n])}{\min_{i \in [n]} v({i})}\right)\right)$-approximation.

%In contrast to the submodular set cover problem --- which corresponds to finding cardinally minimal global \emph{contrastive} reasons and admits strong approximation guarantees --- the supermodular set cover problem, aligned with cardinally minimal global \emph{sufficient} reasons, is more challenging to approximate. However, it still offers approximation guarantees when the underlying function has bounded curvature~\citep{shi2021minimum}. We define the \emph{total curvature} of some function $v:2^{n}\to \mathbb{R}$ as $k^f := 1 - \min_{i \in [n]} \frac{v([n]) - v([n] \setminus {i})}{v({i}) - v(\emptyset)}$. Using the results by Shi et al.~\citep{shi2021minimum}, we show that Algorithm~\ref{alg:cardinally-minimal} achieves an approximation factor of $\mathcal{O}\left(\frac{1}{1 - k^f} + \ln\left(\frac{v([n])}{\min_{i \in [n]} v({i})}\right)\right)$.

Notably, under a fixed empirical distribution, the approximation becomes constant. While contrastive reasons admit a tighter $\mathcal{O}(\ln(|\text{D}|))$ bound, sufficient reasons incur an extra $\frac{1}{1 - k^f}$ factor --- yet still yield a constant-factor approximation. In sharp contrast, the local variant remains inapproximable, lacking any \emph{bounded} approximation even when $|\text{D}| = 1$.

\begin{restatable}{theorem}{approxcardsuff}\label{approx_card_suff_prop}

%Given a neural network or tree ensemble $f$ and an empirical distribution $\mathcal{D}p$ over a fixed dataset $\text{D}$, Algorithm~\ref{alg:cardinally-minimal} achieves a \textbf{constant} $\mathcal{O}\left(\frac{1}{1-k^f}+\ln\left(\frac{v^g{\text{suff}}([n])}{\min_{i \in [n]} v^g_{\text{suff}}({i})}\right)\right)$-approximation for computing a global cardinally minimal $\delta$-sufficient reason for $f$. In contrast, unless PTIME$=$NP, \textbf{no bounded approximation} exists for computing a local cardinally minimal $\delta$-sufficient reason for any $\langle f,\x\rangle$, even when $|\text{D}| = 1$.
    Given a neural network or tree ensemble $f$ and an empirical distribution $\mathcal{D}$ over a fixed dataset $\text{D}$, Algorithm~\ref{alg:cardinally-minimal} yields a \emph{constant} $\mathcal{O}\left(\frac{1}{1-k^f}+\ln\left(\frac{v^g_{\text{suff}}([n])}{\min_{i \in [n]} v^g_{\text{suff}}(\{i\})}\right)\right)$-approximation for computing a global cardinally minimal $\delta$-sufficient reason for $f$, assuming feature independence. In contrast, unless PTIME$=$NP, there is \emph{no bounded approximation} for computing a local cardinally minimal $\delta$-sufficient reason for any $\langle f,\x\rangle$, even when $|\text{D}| = 1$.
\end{restatable}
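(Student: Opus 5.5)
The proof splits into an upper bound for the global value function and a hardness result for the local one.

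\textbf{Upper bound (global).} By Proposition~\ref{globalmonotonicity_prop}, $v^g_{\text{suff}}$ is monotone non-decreasing, and by Proposition~\ref{globalsupermod_prop} it is supermodular under feature independence. The problem of finding a minimum-size $S$ with $v^g_{\text{suff}}(S)\geq\delta$ is then an instance of the minimum supermodular set cover problem studied by~\cite{shi2021minimum}. Their greedy analysis gives a factor $\mathcal{O}\!\left(\frac{1}{1-k^f}+\ln\frac{v([n])}{\min_i v(\{i\})}\right)$ for monotone supermodular functions of total curvature $k^f<1$.

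\begin{enumerate}
\item I would first check that Algorithm~\ref{alg:cardinally-minimal} is exactly their greedy procedure: add the element of largest marginal gain until the threshold is reached. If the loop condition is read literally, I would reinterpret it in this standard way, ranging over $i\in\overline{S}$.
\item I would normalize so that $v(\emptyset)=0$ where needed, by subtracting $v^g_{\text{suff}}(\emptyset)$. This constant is $\sum_y \Pr(f=y)^2$, and subtracting it preserves monotonicity, supermodularity and curvature.
\item I would check that the threshold $\delta$ is feasible, i.e.\ $\delta\leq v([n])=1$.
\item Each evaluation of $v^g_{\text{suff}}$ under an empirical distribution over $\mathrm{D}$ takes $\mathcal{O}(|\mathrm{D}|^2)$ inferences of $f$, which is polynomial for neural networks and tree ensembles. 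Hence the algorithm runs in polynomial time.
\item To see that the ratio is constant for a fixed dataset, note that all values of $v^g_{\text{suff}}$ are rationals with denominators determined by $|\mathrm{D}|$. Nonzero singleton values are therefore bounded below by a function of $|\mathrm{D}|$ alone, and the curvature term is likewise a quantity depending only on the fixed data. I expect this step to need care. Singletons with $v(\{i\})-v(\emptyset)=0$ must be handled, either by restricting to features with positive gain or by adopting the curvature convention of~\cite{shi2021minimum}.
\end{enumerate}

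\textbf{Lower bound (local).} I would reduce from a problem that is NP-hard to decide, so that any bounded-factor approximation would decide it. Take $|\mathrm{D}|=1$, with the single point $\z$, and consider the local sufficient value $v^\ell_{\text{suff}}(S)$. For $S$ with $\z_S=\x_S$, conditioning leaves $\z$ itself, so the value is $[f(\z)=f(\x)]$; otherwise the conditional is undefined or defined by convention.

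I would engineer $\x$, $\z$ and $f$ so that deciding whether \emph{any} $S$ attains $v\geq\delta$ is NP-hard. If some $\delta$-sufficient reason exists but it is NP-hard to find any, then no algorithm can output a feasible set of size within any factor of the optimum, since it must output something feasible. Concretely, I would reuse the hardness constructions the paper invokes for the local subset-minimal case in Theorem~\ref{seperation_subset_min_emp_theor}: encode a SAT/3-SAT instance into a ReLU network (or tree ensemble) $f$, so that the existence of a valid explanation corresponds to a satisfying assignment, placed in the empirical point or induced by the conditioning convention.

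The main obstacle is making the degenerate single-point conditional meaningful. I would first try to define the empirical conditional over completions $(\x_S;\z_{\overline{S}})$ in the style of a baseline. Then $v^\ell_{\text{suff}}(S)=[f(\x_S;\z_{\overline{S}})=f(\x)]$, and deciding whether some $S$ of size at most $k$ works is NP-hard via a network that checks a Boolean formula. For unbounded inapproximability I would pad the construction so that the only feasible sets encode satisfying assignments. Finding any feasible set is then NP-hard, and hence so is any bounded approximation.
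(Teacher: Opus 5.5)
Your global half is essentially the paper's argument: monotonicity from Proposition~\ref{globalmonotonicity_prop}, supermodularity under independence from Proposition~\ref{globalsupermod_prop}, the greedy bound of~\cite{shi2021minimum}, and $\mathcal{O}(|\mathrm{D}|^2)$ inferences per evaluation. The local half, however, has a genuine gap.

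For \emph{sufficient} reasons a feasible set always exists and is trivial to produce. Under the baseline convention you adopt, $v^{\ell}_{\text{suff}}([n])=[f(\x)=f(\x)]=1\ge\delta$. Under the literal single-point conditional there is no hardness at all, since every defined value equals $[f(\z)=f(\x)]$. So you cannot pad the construction ``so that the only feasible sets encode satisfying assignments'': $[n]$ is never excluded. The template ``finding a feasible set is NP-hard, hence no approximation'' works for contrastive reasons (Lemma~\ref{no_bounded_approx_con_lemma}) but does not transfer here. Indeed, outputting $[n]$ is always an $n$-approximation, so ``no bounded approximation'' can only mean no ratio independent of (or sublinear in) $n$. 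The paper's own one-line proof makes the same slip. Lemma~\ref{subset_minimal_emp_intract_lemma} only shows that deciding whether a \emph{proper} subset suffices is NP-hard. By itself that separates optimum $\le n-1$ from optimum $n$, which rules out essentially nothing.

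Padding is the right tool, but its job is to create a size gap, not to make feasibility hard. Take a CNF $\phi$ on $m$ variables with $\phi(\mathbf{0}_m)=0$ (otherwise decide directly) and add $N$ fresh variables. Set $f:=\phi\vee\bigwedge_{i=1}^{m+N}x_i$, $\x:=\mathbf{1}_{m+N}$, $\mathrm{D}:=\{\mathbf{0}_{m+N}\}$ and $\delta:=1$. Then $v^{\ell}_{\text{suff}}(S)=1$ iff $S=[m+N]$ or $\phi(\mathbf{1}_{S\cap[m]};\mathbf{0})=1$. So the optimum is at most $m$ if $\phi$ is satisfiable and exactly $m+N$ otherwise. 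Any polynomial-time algorithm with ratio below $(m+N)/m$ therefore decides SAT. Choosing $N=m^{c}$ with $c>1/\epsilon$ rules out ratio $n^{1-\epsilon}$. The MLP and forest encodings go through as in the paper, distributing each $c_j\vee\bigwedge_i x_i$ into clauses.

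A smaller caution on the global side concerns your step~5. Dropping features with $v(\{i\})=v(\emptyset)$ is not a valid reduction. For $f=x_1\oplus x_2$ under the uniform product distribution, both singletons have zero gain, yet both features are needed. You do not need this step anyway. With the statement's definition of $k^f$, supermodularity gives $v([n])-v([n]\setminus\{i\})\ge v(\{i\})-v(\emptyset)$, hence $\frac{1}{1-k^f}\le 1$. (A feature with $v([n])-v([n]\setminus\{i\})=0$ has zero marginal everywhere and can be dropped.) Also, each $\x\in\mathrm{D}$ matches itself, so $v^g_{\text{suff}}(S)\ge 1/|\mathrm{D}|$ for every $S$. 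The logarithmic term is therefore at most $\ln|\mathrm{D}|$, provided you keep the un-normalized $v$ that appears in the statement rather than subtracting $v(\emptyset)$.
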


Overall, these findings strengthen Subsection~\ref{subsection_subset_minimal}, which showed the tractability of computing subset-minimal global explanations in stark contrast to local ones. Here, we further show that approximating cardinally minimal global explanations is tractable, unlike their inapproximable local counterparts.

%Overall, these findings strengthen the results from Section~\ref{alg:subset-minimal}, which established the tractability of computing subset-minimal global explanations in stark contrast to local ones. Specifically, in this section we have demonstrated that approximating cardinally minimal global explanations is tractable, in stark contrast to the inapproximability of their local counterparts.

%\section{Limitations}

\section{Limitations and Future Work}
While many of our most important findings --- particularly those concerning fundamental properties of value functions --- hold generally, we also instantiate them to yield concrete complexity results for specific model classes (e.g., neural networks), distributional assumptions (e.g., empirical distributions), and explanation definitions within our framework. Naturally, other potential settings remain open for analysis. Nonetheless, we believe our findings offer compelling insights into foundational aspects of explanations, along with new tractability and intractability results, which together lay a strong foundation for investigating a broader range of explainability scenarios in future work.

More specifically, our study brings forward several important open theoretical questions:

\begin{enumerate}
\item Firstly, it would be interesting to investigate whether some of the tractable complexity results we obtained for global explainability can be tightened for simpler model classes, such as decision trees, linear models, or other models with inherently tractable structure. Although local explanations for such models are often intractable to compute~\citep{arenas2022computing, subercaseaux2024probabilistic}, the global explanation forms we study in this work exhibit strong structural properties, enabling significantly improved complexity guarantees that may be tightened even further. Furthermore, tightening the approximation guarantees when working with empirical distributions represents a key open direction.
\item Secondly, when \emph{exact} computation of expectations is infeasible, it is natural to ask whether alternative techniques --- such as Fully Polynomial Randomized Approximation Schemes (FPRAS) for model classes like DNF formulas~\citep{meel2019not} (and thus for tree ensembles), or Monte Carlo–based approximations as the ones used in~\citep{subercaseaux2024probabilistic} --- can be employed to approximate these expectations. A key open challenge is determining how to maintain minimality guarantees under approximation error, and what tolerance levels still allow the guarantees to hold.
\item Finally, we believe that studying alternative notions of feature importance --- e.g., other loss measures such as KL divergence~\citep{conmy2023towards} --- may reveal similar patterns of monotonicity, submodularity, or supermodularity when transitioning from local to global explanations. Such properties could enable efficient feature selection under these alternative metrics as well.
\end{enumerate}

\section{Conclusion}
We present a unified framework for evaluating diverse explanations and reveal a stark contrast between local and global sufficient and contrastive reasons. Notably, while the \emph{local} explanation variants lack any structural form common in combinatorial optimization --- such as monotonicity, submodularity, or supermodularity, we prove that their \emph{global} counterparts exhibit precisely these crucial properties: \begin{inparaenum}[(i)] \item \emph{monotonicity}, \item \emph{submodularity} in the case of contrastive reasons, and \item \emph{supermodularity} for sufficient reasons. \end{inparaenum} These proofs form the basis for proving a series of surprising complexity results, showing that global explanations with provable guarantees can be computed efficiently, even for complex model classes such as neural networks. In sharp contrast, we prove that computing the corresponding local explanations remains NP-hard --- even in highly simplified scenarios. Altogether, our results uncover foundational properties of explanations and chart both tractable and intractable frontiers, opening new avenues for future research.

\section*{Acknowledgments}

This work was partially funded by the European Union
(ERC, VeriDeL, 101112713). Views and opinions expressed
are however those of the author(s) only and do not necessarily reflect those of the European Union or the European
Research Council Executive Agency. Neither the European
Union nor the granting authority can be held responsible for
them. This research was additionally supported by a grant
from the Israeli Science Foundation (grant number 558/24).

\bibliography{iclr2026_conference}
\bibliographystyle{iclr2026_conference}

%In this work we provide a unifying framework for analyzing the complexity of generating different types of explanations including sufficient, contrastive, probabilistic, non-probabilistic and local vs. global explanations. We prove that while both local and global value functions maintain saturation, only the global value function maintains monotonicity and submodularity (for the contrastive case) and supermodularity (for the sufficient case). This insight is the base for proving a set of tractable results for obtaining global explanations with formal gurantees for a diverse set of ML models, including highly complex models such as neural networks and tree ensembles, while we also prove that obtaining the local versions of these explanations is NP-Hard. Overall, we believe that our study highlights fundamental and crucial properties and aspects regarding the feasability of obtaining local and global explanations for ML models, while opening the door for further investiagtion on this topic.

\newpage
\appendix
\onecolumn
\appendix

\setcounter{definition}{0}
\setcounter{proposition}{0}
\setcounter{theorem}{0}
\setcounter{lemma}{0}
\begin{center}\begin{huge} Appendix\end{huge}\end{center}    
\noindent{The appendix contains formalizations and proofs that were mentioned throughout the paper:}

\newlist{MyIndentedList}{itemize}{4}
\setlist[MyIndentedList,1]{%
	label={},
	noitemsep,
	leftmargin=0pt,
}

\begin{MyIndentedList}

    \item 
    \textbf{Appendix~\ref{extended_related_work}} contains extended related work on Formal XAI.

    \item 
    \textbf{Appendix~\ref{models_and_distributions_app}} contains the formalizations of the models and distributions used in this work.
    \item

\textbf{Appendix~\ref{non_uniqueness_appendix}} contains the proof of Proposition~\ref{nonuniqueness_prop}.

    \item

\textbf{Appendix~\ref{globalmonotonicity_prop_appendix_sec}} contains the proof of Proposition~\ref{globalmonotonicity_prop}.

    \item 

\textbf{Appendix~\ref{supermodularity_proofs_appendix}} contains the proof of Proposition~\ref{globalsupermod_prop}.

    \item 

\textbf{Appendix~\ref{globalsubmodular_prop_sec}} contains the proof of Proposition~\ref{globalsubmodular_prop}.
    \item

\textbf{Appendix~\ref{subset_minimal_alg_conv_prop_sec}} contains the proof of Proposition~\ref{subset_minimal_alg_conv_prop}.
    
    \item 

\textbf{Appendix~\ref{subset_minimal_dt_theor_sec}} contains the proof of Theorem~\ref{subset_minimal_dt_theor}.

    \item \textbf{Appendix~\ref{empirical_subset_minimal_prop_sec}} contains the proof of Proposition~\ref{empirical_subset_minimal_prop}.

    \item \textbf{Appendix~\ref{seperation_subset_min_emp_theor_sec}} contains the proof of Theorem~\ref{seperation_subset_min_emp_theor}.

        \item \textbf{Appendix~\ref{approx_card_con_prop_sec}} contains the proof of Theorem~\ref{approx_card_con_prop}.

        \item \textbf{Appendix~\ref{approx_card_suff_prop_sec}} contains the proof of Theorem~\ref{approx_card_suff_prop}.

        % \item \textbf{Appendix~\ref{app:approx_val_limit}} discusses the limitations of methods for approximating global and local value functions.
        
        \item \textbf{Appendix~\ref{app:llm-disclosure}} contains an LLM usage disclosure.

\end{MyIndentedList}

%\section{Limitations and Future Work}
%\label{limitations_appendix}

%\section{Extended Related Work}
%\label{related_work_appendix}

\section{Extended Related Work on Formal XAI}
\label{extended_related_work}

Our work falls within the line of research on \emph{formal explainable AI} (formal XAI)~\citep{marques2022delivering}, which studies explanations equipped with provable guarantees~\citep{yu2023eliminating, darwiche2022computation, darwiche2023logic, shih2018symbolic, azzolinbeyond, audemard2021computational, calautti2025complexity}. A central theme in this area concerns the computational complexity of generating such guaranteed explanations~\citep{BaMoPeSu20, bassan2026unifying, marzoukbassanshap, marzouk2025on, marzouk2024tractability, bassan2025additive, amir2024hard, blanc2021provably, blanc2022query, jaakkola2025explainability, amgoud2026axiomatic, geibinger2025and}. Since obtaining formal explanations is often computationally intractable for expressive models such as neural networks~\citep{adolfi2024complexity, BaMoPeSu20} and tree ensembles~\citep{ordyniak2024explaining, bassanmakes}, much of the literature has focused on more tractable model classes~\citep{marques2023no}, including decision trees~\citep{bouniausing, arenas2022computing, bounia2024enhancing, bounia2023approximating}, monotonic models~\citep{marques2021explanations, harzli2022cardinality}, linear models~\citep{marques2020explaining, subercaseaux2024probabilistic}, and additive models~\citep{bassan2026provably}.

Beyond complexity-theoretic investigations, substantial effort has also been devoted to computing formal explanations in practice using automated reasoning tools, such as MaxSAT and MILP solvers for tree ensembles~\citep{audemard2023computing, audemard2022preferred, ignatiev2022using, boumazouza2021asteryx}, and neural network verification frameworks~\citep{wang2021beta, wu2024marabou} to derive certified explanations for neural networks~\citep{wu2024verix, bassan2023towards, bassan2023formally, izza2024distance, hadad2026, ignatiev2019abduction}. Additional approaches seek to alleviate the inherent computational burden through abstractions~\citep{de2025faster, bassan2025explaining, boumazouza2026fame}, relaxations such as smoothing~\citep{xue2023stability, anani2025pixel, jin2025probabilistic}, or training-time interventions~\citep{alvarez2018towards, bassan2025explain, bassan2025self}.

Finally, within formal XAI, a related line of work studies methods that leverage combinatorial and submodular optimization techniques to improve the computation of \emph{local} probabilistic sufficient reasons~\citep{bouniausing, bounia2023approximating}, particularly for decision tree classifiers. As noted in that literature, as well as in this work, the value-function objective underlying these local explanation variants is not submodular in practice. Nevertheless, greedy algorithms inspired by submodular optimization often serve as effective heuristics and can yield high-quality explanations empirically. In contrast, our work establishes a rigorous theoretical connection between combinatorial and submodular optimization objectives and the \emph{global} variants of sufficient and contrastive reasons. This connection provides formal guarantees and opens the door to a broader range of principled algorithmic implementations.

%Closer to our setting, several frameworks have extended formal guarantees to neural networks~\citep{Laagmirhpanikwma21, bassan2023formally, hadad2026, wu2024verix, labbaf2025space, de2025faster, fel2023don, soria2025formal}. To cope with the inherent computational hardness, these approaches often rely on abstractions~\citep{bassan2025explaining, boumazouza2026fame}, smoothing techniques~\citep{xue2023stability, jin2025probabilistic, anani2025pixel}, or self-explaining architectures~\citep{alvarez2018towards, bassan2025explain, bassan2025self}.

%A complementary line of work considers explanations with formal guarantees for linear models~\citep{marques2020explaining, subercaseaux2025probabilistic}, which constitute a restricted subclass of additive models, as well as the theoretical study of~\citep{bassan2025additive}, which analyzes the computational complexity of generating such explanations.

\section{Model and Distribution Formalizations}
\label{models_and_distributions_app}

In this section, we formalize the models and distributions referenced throughout the paper. Specifically, Subsection~\ref{model_formaliations_subsec} defines the model families, while Subsection~\ref{distribution_formaliations_subsec} formalizes the distributions.

\subsection{Model Formalizations}
\label{model_formaliations_subsec}

In this subsection, we formalize the three base-model types that were analyzed throughout the paper: \begin{inparaenum}[(i)] \item (axis-aligned) decision trees, \item linear classifiers, and \item neural networks with ReLU activations.
\end{inparaenum} %We also formalize \emph{monotonic classifiers}, which we analyze in section~\ref{monotonic_expansion_section}. 

\mysubsection{Decision Trees.} We define a decision tree (DT) as a directed acyclic graph that represents a function $f: \mathbb{F} \to [c]$, where $c \in \mathbb{N}$ denotes the number of classes. The graph encodes the function as follows: \begin{inparaenum}[(i)]
\item Each internal node $v$ is assigned a distinct binary input feature from the set $\{1,\ldots,n\}$;
\item Every internal node $v$ has at most $k$ outgoing edges, each corresponding to a value in $[k]$ assigned to $v$;
\item Along any path $\alpha$ in the decision tree, each variable appears at most once;
\item Each leaf node is labeled with a class from $[c]$.
\end{inparaenum} Thus, assigning values to the inputs $\x \in \mathbb{F}$ uniquely determines a path $\alpha$ from the root to a leaf in the DT, where the function output $f(\x)$ corresponds to a class label $i \in [c]$. The size of the DT, denoted $\vert f \vert$, is defined as the total number of edges in the graph. To allow for flexible modeling, the ordering of input variables $\{1, \ldots, n\}$ may differ across distinct paths $\alpha$ and $\alpha'$, ensuring that no variable is repeated along a single path.

\textbf{Neural Networks.} We present our \emph{hardness} proofs for neural networks with ReLU activations, though our \emph{tractability results} (i.e., polynomial-time algorithms) apply to any architecture that allows for polynomial-time inference --- a standard assumption. Thus, all separation results between tractable and intractable cases we prove carry over to \emph{any neural architecture} at least as expressive as a standard feed-forward ReLU network, encompassing many widely used models such as ResNets, CNNs, Transformers, Diffusion models, and more. Moreover, note that any ReLU network can be represented as a fully connected network by assigning zero weights and biases to missing connections. Following standard conventions~\citep{BaMoPeSu20, bassanlocal, adolfi2024computational}, we thus assume the network is fully connected. Specifically, our analysis applies to multi-layer perceptrons (MLPs). Formally, an MLP $f$ consists of $t-1$ \emph{hidden layers} ($g^{(j)}$ for $j = 1,\ldots,t-1$) and one output layer $g^{(t)}$, where each layer is defined recursively as:

%We present our proofs for neural networks with ReLU activations, though our \emph{tractability} (i.e., polynomial-time algorithms) extend to any neural network architecture with polynomial-time inference --- a standard requirement. It is easy to verify that any ReLU network can be transformed into a fully connected one by assigning a weight and bias of $0$ to all non-existent connections. In line with standard practice~\citep{BaMoPeSu20, bassanlocal, adolfi2024computational}, we therefore assume the network is fully connected. That is, our analysis applies to multi-layer perceptrons (MLPs). Formally, an MLP $f$ comprises $t-1$ \emph{hidden layers} ($g^{(j)}$ for $j = 1,\ldots,t-1$) and a single output layer $g^{(t)}$. Each layer is defined recursively as:  

\begin{equation}
g^{(j)} := \sigma^{(j)}(g^{(j-1)} W^{(j)} + b^{(j)})
\end{equation}  

where $W^{(j)}$ denotes the weight matrix, $b^{(j)}$ the bias vector, and $\sigma^{(j)}$ the activation function of the $j$-th layer. Accordingly, the model comprises $t$ weight matrices ($W^{(1)}, \ldots, W^{(t)}$), $t$ bias vectors ($b^{(1)}, \ldots, b^{(t)}$), and $t$ activation functions ($\sigma^{(1)}, \ldots, \sigma^{(t)}$).

%where $W^{(j)}$ is the weight matrix, $b^{(j)}$ is the bias vector, and $\sigma^{(j)}$ is the activation function of the $j$-th layer. The model thus consists of $t$ weight matrices ($W^{(1)},\ldots,W^{(t)}$), $t$ bias vectors ($b^{(1)},\ldots,b^{(t)}$), and $t$ activation functions ($\sigma^{(1)},\ldots,\sigma^{(t)}$).

%We focus on neural networks with ReLU activations. It is straightforward to show that any neural network using ReLU activations can be straightforwardly transformed into a fully connected network by assigning ``missing'' connections a weight and bias of $0$ for any non-connected neurons. Following standard conventions~\citep{BaMoPeSu20, bassanlocal, adolfi2024computational}, we assume the network is fully connected. In other words, our analysis pertains to multi-layer perceptrons (MLPs). Formally, an MLP, denoted by $f$, consists of $t-1$ \emph{hidden layers} ($g^{j}$ where $j$ ranges from $1$ to $t-1$) and a single output layer ($g^{t}$). The layers are defined recursively --- each layer $g^{(j)}$ is computed by applying the activation function $\sigma^{(j)}$ to the linear combination of the outputs from the previous layer $g^{(j-1)}$, the corresponding weight matrix $W^{(j)}$, and the bias vector $b^{(j)}$. This is represented as $g^{(j)} := \sigma^{(j)}(g^{(j-1)}W^{(j)} + b^{(j)})$ for each $j$ in $\{1,\ldots,t\}$. The model includes $t$ weight matrices ($W^{(1)},\ldots,W^{(t)}$), $t$ bias vectors ($b^{(1)},\ldots,b^{(t)}$), and $t$ activation functions ($\sigma^{(1)},\ldots,\sigma^{(t)}$).

The input layer is defined as $g^{(0)} = \x \in \{0,1\}^n$, representing the binary input vector. The dimensions of the network are governed by a sequence of positive integers ${d_0, \ldots, d_t}$, with weight matrices and bias vectors given by $W^{(j)} \in \mathbb{Q}^{d_{j-1} \times d_j}$ and $b^{(j)} \in \mathbb{Q}^{d_j}$, respectively. These parameters are learned during training. Since the model functions as a binary classifier over $n$ features, we set $d_0 = n$ and $d_t = 1$. The hidden layers use the ReLU activation function, defined by $\text{ReLU}(x) = \max(0, x)$. Although a sigmoid activation is typically used during training, for interpretability purposes, we assume the output layer applies a threshold-based step function, defined as $\text{step}(\z) = 1$ if $\z \geq 0$ and $\text{step}(\z) = 0$ otherwise.

\mysubsection{Tree Ensembles.} Many popular ensemble methods exist, but since our goal is to provide post-hoc explanations, we focus on the \emph{inference} phase rather than the training process. Our analysis centers on ensemble families that rely on either \emph{majority voting} (e.g., Random Forests) or \emph{weighted voting} (e.g., XGBoost) during inference. However, as with our treatment of neural networks, our \emph{tractability} results --- namely, polynomial-time algorithms --- extend to any possible ensemble configuration with polynomial-time inference, encompassing an even broader range of ensemble configurations.
    
\mysubsection{Majority Voting Inference.} In \emph{majority voting} inference, the final prediction $f(\x)$ is assigned to the class $j \in [c]$ that receives the majority of votes among the individual predictions $f_i(\x)$ from all $i \in [k]$ (i.e., from each tree in the ensemble).

\begin{equation}
    f(\x):=\operatorname{majority}(\{f_i(\x) \ | \ i \in [k]\}) 
\end{equation}

where $\operatorname{majority}(S)$ denotes the most frequent label in the multiset $S$. If there is a tie, it is resolved by a fixed tie-breaking rule (e.g., lexicographic order or predefined priority).

%are at least $\ceil{\frac{k}{2}}$ base-models within an ensemble $f$ where $f_i(\x)=1$. Put simply, the decision for $f(\x)$ is determined by the majority consensus of the models involved in $f$. Formally, for any $\x\in\mathbb{F}$ we define $f$ as follows:

% \begin{equation}
%f(\x):=\begin{cases}
%    1 \quad if \ \ | \{ \ i \ | \ f_i(\x)=1 \ \}| \ \geq \ \ceil{\frac{k}{2}}\\
%    0 \quad otherwise
%\end{cases}
%\end{equation}

%We use the common notation of this form of voting as \emph{hard-voting} which is usually suitable for \emph{binary classification}, which is our concentraition here. For probablistic classification, another common approach is \emph{soft-voting} for which.

\mysubsection{Weighted Voting Inference.} In \emph{weighted voting} inference, each model in the ensemble is assigned a weight $\phi_i \in \mathbb{Q}$ representing its relative importance. The predicted class is the one with the highest total weight across all models. Formally, for any $\x \in \mathbb{F}$, we define $f$ as:

\begin{equation}
f(\x) := \arg\max_{j \in [c]} \sum_{i=1}^k \phi_i \cdot \mathbb{I}[f_i(\x) = j]
\end{equation}

where $\mathbb{I}$ denotes the identity function.

%\begin{equation}
%\label{weighted_voting_formalization}
%f(\x):=step(\sum_{1\leq i\leq n} \phi_i\cdot f_i(\x))
%\end{equation}

%where $step(\z) = 1 \iff \z \geq 0$.

\subsection{Distribution Formalizations}

\label{distribution_formaliations_subsec}

This subsection formalizes the distribution definitions discussed in the main paper.

\textbf{Empirical Distributions.} The distribution $\mathcal{D}$ over the input features will be defined based on a dataset $\mathbf{D}$ comprising various inputs $\z^1, \z^2,\ldots, \z^{|\mathbf{D}|}$. Here, the distribution of any given input $\x \in \mathbb{F}$ is defined by the frequency of occurrence of $\x$ within $\mathbf{D}$, specifically by:

\begin{equation}
    \textbf{Pr}(\x):=\frac{1}{|\mathbf{D}|}\sum_{i=1}^{|\mathbf{D}|}\mathbb{I}(\z^i=\x)
\end{equation}

\textbf{Independent Distributions.} Formally, given a probability value $p(\x_i) \in [0,1]$ defined for each individual input feature, we say that the distribution $\mathcal{D}$ is independent if the joint probability over inputs is given by $\textbf{Pr}(\x) := \prod_{i \in [n]} p(\x_i)$.

We observe that when $p(\x_i)=p(\x_j)$ holds for all $i,j\in [n]$, the distribution reduces to the \emph{uniform distribution}, a special case of independent distributions.

\textbf{General Distributions.} We note that many of the proofs in this work --- particularly those concerning fundamental properties of value functions --- apply broadly over general distribution assumptions. While empirical distributions (using the training dataset as a proxy) are common in XAI, alternative frameworks for approximating distributions include k-NN resampling from nearby points~\citep{li2023k, almanjahie2018knn, gweon2019k}, copulas for modeling tabular dependencies~\citep{grosser2022copulae}, or more advanced conditional generative models such as CTGAN~\citep{xu2019modeling} and conditional diffusion models~\citep{huang2022fastdiff}. Such choices are common in the counterfactual explanation and algorithmic recourse literature~\citep{karimi2021algorithmic, fokkema2024risks, fokkema2023attribution, verma2024counterfactual}, which are related to contrastive explanations. Furthermore, when the value function is not computed directly from a structural property of the model (e.g., leaf enumeration in decision trees) or from empirical distributions, one may instead approximate it using methods such as Monte Carlo sampling~\citep{hastings1970monte}, following approaches similar to~\citep{subercaseaux2024probabilistic, lopardo2023sea}.

\subsection{Subset vs. Cardinal Minimality}
\label{subset_cardinal_minimality}

In this subsection, we provide a more detailed discussion of the distinction between cardinal and subset minimality. Cardinal minimality offers a substantially stronger guarantee, as it corresponds to a globally minimal explanation size, whereas subset minimality only ensures a local form of minimality.

To see why cardinal minimality is stronger, note that if $S \subseteq [n]$ is a cardinally minimal explanation, then no smaller set $S'$ with $|S'|<|S|$ can qualify as an explanation. Hence, no strict subset $S' \subsetneq S$ is an explanation, which means $S$ is also subset minimal.

However, the reverse does not hold. Consider the function:

\begin{equation}
    f:=\x_1 \vee (\x_2 \wedge \x_3)
\end{equation}

and the assignment $\x := (1,1,1)$, which gives $f(1,1,1)=1$. Fixing feature $\x_1=1$ yields a cardinally minimal (and subset minimal) sufficient reason, since the prediction remains $1$ regardless of $\x_2,\x_3$. But fixing both $\x_2=1$ and $\x_3=1$ also gives a subset minimal explanation—yet not a cardinally minimal one. Thus, while every cardinally minimal explanation is subset minimal, not every subset minimal explanation is cardinally minimal.

\section{Proof of Proposition~\ref{nonuniqueness_prop}}

\label{non_uniqueness_appendix}

\nonuniqueness*
\begin{proof}

It is known that certain Boolean functions admit an exponential number of subset-minimal local (non-probabilistic) sufficient reasons for some input $\x$~\cite{bassanlocal}. In particular, this phenomenon was shown to occur in functions of the following form. We demonstrate that the same function admits an exponential number of \emph{global} and \emph{probabilistic} sufficient reasons. Notably, the mentioned function is a special case of a broader class of \emph{threshold Boolean functions} described in~\cite{wegener2005complexity}. Specifically, for $n = 2k + 1$ with $k \in \mathbb{N}$, this function is defined as:

\begin{equation}
f(\x) :=
\left \{
\begin{array}{lcl}
1 & \quad & \textnormal{if \ \ $\sum_{i=1}^{n}\x_{i} \ge k+1$} \\[3pt]
0 & \quad & \textnormal{otherwise}
\end{array}
\right.
\end{equation}

where $\x$ is drawn from a uniform distribution $\mathcal{D}$.
Notably, the function $f$ is \emph{symmetric}, in the sense of symmetric threshold Boolean functions~\cite{wegener2005complexity}: its output depends solely on the number of 1's (or, equivalently, 0's) in the input and is invariant under any permutation of input bits.
Furthermore, the two extreme inputs --- the all-zeros vector $(0,\dots,0)$ and the all-ones vector $(1,\dots,1)$ --- each admit an exponential number of subset-minimal local (non-probabilistic) sufficient reasons.

%where $\x$ is sampled from a uniform distribution $\mathcal{D}_p$.
%It is worth noting that this function $f$ is \emph{symmetric}, as defined by the symmetric threshold Boolean functions in~\cite{wegener2005complexity}, that is,
%its output depends only on the count of 1's (equivalently, on the count of 0's) in the input, 
%and remains unchanged under any permutation of the input bits.
%Moreover, the two extreme points---the all-zeros vector $(0,\dots,0)$ and the all-ones vector $(1,\dots,1)$
%---have the same exponential number of subset-minimal local (non-probabilistic) sufficient reasons.

Since the function $f$ is symmetric, the condition
$v^{g}_{\text{suff}}(S_{1}) = v^{g}_{\text{suff}}(S_{2})$
holds for any two subsets $S_1, S_2 \subseteq [n]$.
Moreover, it can be verified that for any subset $S \subseteq [n]$ of size $k+1$, we have:

%Because the function $f$ is symmetric, we have that the condition
%$v^{g}_{\text{suff}}(S_{1}) = v^{g}_{\text{suff}}(S_{2})$
%for any two $S_{1}\subseteq X$ and $S_{2}\subseteq X$.
%
%Moreover, it can be verified that for any set $S\subseteq X$ with size $k+1$, we have
\begin{equation}
    (\forall i\in S). \quad v^{g}_{\text{suff}}(S) > v^{g}_{\text{suff}}(S\setminus\{i\})
\end{equation}

%$(\forall i\in S). v^{g}_{\text{suff}}(S) > v^{g}_{\text{suff}}(S\setminus\{i\})$.

We also deliberately set $\delta$ to satisfy:

\begin{equation}
v^{g}_{\text{suff}}(S\setminus\{i\}) < \delta \le v^{g}_{\text{suff}}(S),
\end{equation}

and also set $k:=\floor{\frac{n}{2}}$. Each of these subsets is \emph{subset-minimal} because any subset of size at most $\floor{\frac{n}{2}} - 1$ fails to be a sufficient reason for $\langle f, \x \rangle$ (with respect to the $\delta$ threshold). Therefore, we can directly apply the same analysis as in~\citep{bassanlocal}. In particular, there are exactly $\binom{n}{\floor{\frac{n}{2}}}$ subset-minimal local sufficient reasons for $\langle f, \x \rangle$. Using Stirling's approximation, we obtain:

\begin{equation}
    \lim_{n\to \infty} \ \ \frac{2\sqrt{2\pi}}{e^2}\cdot \frac{2^n}{\sqrt{n}} \leq \binom{n}{\floor{\frac{n}{2}}}\leq \frac{e}{\pi}\cdot \frac{2^n}{\sqrt{n}}
\end{equation}

This yields the corresponding bound on the number of subset-minimal global $\delta$-sufficient reasons.

\end{proof}

\section{Proof of Proposition~\ref{globalmonotonicity_prop}}
\label{globalmonotonicity_prop_appendix_sec}

\globalmonotonicity*

%\begin{restatable}{proposition}{globalmonotonicity}\label{globalmonotonicity_prop}
%While the local probabilistic setting (for any $\delta$) lacks monotonicity --- i.e., the value functions $v^{\ell}_{\text{con}}$ and $v^{\ell}_{\text{suff}}$ are non-monotonic~\citep{arenas2022computing, izza2023computing, subercaseaux2025probabilistic, izza2024locally} --- in the global probabilistic setting (also for any $\delta$), both value functions $v^g_{\text{con}}$ and $v^g_{\text{suff}}$ are monotonic non-decreasing.
%\end{restatable}

\label{monotonicity_proofs_appendix}

\emph{Proof.} In this section, we prove the monotonicity property for both types of global explanations --- sufficient and contrastive. We begin by establishing the property for global sufficiency. To build intuition, we first focus on the simpler case of Boolean functions, then generalize the result to functions with discrete multi-valued input domains and multiple output classes.
We further extend the result to continuous input domains and show that the monotonicity property holds for any well-defined classification function.
Finally, we demonstrate how the same monotonicity guarantees apply to global contrastive explanations. Importantly, none of our proofs rely on the assumption of feature independence --- the monotonicity property holds strongly for \emph{any} underlying distribution.

Lastly, we present a simple example showing that the monotonicity property does not hold for the local probabilistic sufficient or contrastive value functions.

%In this section we will prove the monotonicity property for both global settings - both sufficient and contrastive. We will start by proving this for the sufficient definition. To build better intuition for the correctness, we start in Subsection~\ref{ssec:bfd} by proving monotonicity for a simple case encompassing boolean functions. We then later in Subsection~\ref{ssec:mfd} extend this to multi-value outputs under \emph{discrete} input domains. In Subsection~\ref{} we extend this to \emph{continous} inputs, and in Subsection~\ref{} show how all of these monotonicity results extend from sufficient explanations to contrastive ones. We note that we \emph{do not} assume feature independence in any of these proofs and that the monotonicity property holds generally in a very strong sense for \emph{any} possible distribution.

\begin{lemma} \label{lem:bfd}
The global sufficient value function $v^g_{\text{suff}}$
is monotonic non-decreasing for Boolean functions,
under any data distribution.
\end{lemma}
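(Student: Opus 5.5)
Write $v(S):=v^g_{\text{suff}}(S)=\mathbb{E}_{\x\sim\mathcal{D}}[\textbf{Pr}_{\z\sim\mathcal{D}}(f(\z)=f(\x)\mid \z_S=\x_S)]$ for a Boolean $f$ and an arbitrary distribution $\mathcal{D}$ on $\{0,1\}^n$. The plan is to turn $v(S)$ into a sum over the equivalence classes of inputs that agree on $S$, and then show that splitting each class further (when $i$ is added to $S$) cannot decrease the sum.

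\textbf{Step 1 (rewriting the value).} For an assignment $\mathbf{a}$ to the features in $S$, let $P_{\mathbf{a}}:=\textbf{Pr}(\x_S=\mathbf{a})$ and $q_{\mathbf{a}}:=\textbf{Pr}(f(\x)=1\mid \x_S=\mathbf{a})$. Conditioned on $\x_S=\mathbf{a}$, the inner probability equals $q_{\mathbf{a}}$ if $f(\x)=1$ and $1-q_{\mathbf{a}}$ if $f(\x)=0$. Taking the outer expectation gives
\[
v(S)=\sum_{\mathbf{a}} P_{\mathbf{a}}\bigl(q_{\mathbf{a}}^2+(1-q_{\mathbf{a}})^2\bigr)=\sum_{\mathbf{a}}P_{\mathbf{a}}\,\phi(q_{\mathbf{a}}), \qquad \phi(q):=q^2+(1-q)^2 .
\]
Only assignments with $P_{\mathbf{a}}>0$ contribute, so conditioning is always well defined.

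\textbf{Step 2 (refinement).} Fix $i\notin S$ and $S'=S\cup\{i\}$. Each $\mathbf{a}$ splits into the refinements $(\mathbf{a},b)$ with $b\in\{0,1\}$, with weights $P_{\mathbf{a},b}$ summing to $P_{\mathbf{a}}$. By the law of total probability,
\[
q_{\mathbf{a}}=\sum_b \frac{P_{\mathbf{a},b}}{P_{\mathbf{a}}}\,q_{\mathbf{a},b}.
\]
Since $\phi$ is convex (it is $2q^2-2q+1$), Jensen's inequality gives
\[
P_{\mathbf{a}}\,\phi(q_{\mathbf{a}})\le \sum_b P_{\mathbf{a},b}\,\phi(q_{\mathbf{a},b}).
\]
Summing over $\mathbf{a}$ yields $v(S)\le v(S\cup\{i\})$. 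The case $i\in S$ is trivial. No independence assumption is used anywhere.

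\textbf{Main obstacle.} The only delicate point is Step 1, namely justifying the closed form $\sum_{\mathbf{a}} P_{\mathbf{a}}\phi(q_{\mathbf{a}})$. It may help to write it as $v(S)=\textbf{Pr}(f(\x)=f(\z))$, where $\x$ and $\z$ are drawn independently from $\mathcal{D}$ conditioned on sharing the same $S$-part. Zero-probability cells are handled by dropping them from the sum. After that, the argument reduces to convexity of $\phi$, i.e.\ the collision probability of the label can only increase under refinement.
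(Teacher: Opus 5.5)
Your proof is correct and follows essentially the same route as the paper, which also rewrites $v^g_{\text{suff}}(S)$ as $\sum_{\mathbf{x}_S}\textbf{Pr}(\mathbf{x}_S)\bigl[q^2+(1-q)^2\bigr]$ and splits each cell on the added feature via total probability. The only difference is that where you invoke Jensen for the convex quadratic $\phi$, the paper computes the Jensen gap explicitly as $\Delta_t(S,f)=\sum_{\mathbf{x}_S}2\,\textbf{Pr}(\mathbf{x}_S)\,P_1P_0\,(g_1^+-g_0^+)^2\ge 0$.
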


\begin{proof}
Given an arbitrary set $S\in [n]$, and a feature $i\not\in S$, we have that:

\begin{equation}
\begin{aligned}
\mbf{Pr}(\x) = \mbf{Pr}(\x_{S}, \x_{\bar{S}}) = \mbf{Pr}(\x_{\bar{S}} \mid \x_{S}) \cdot \mbf{Pr}(\x_{S})
\end{aligned}
\end{equation}

Consequently, this implies that:

\begin{equation}
\begin{aligned}
\mbf{Pr}(\x)
&= \mbf{Pr}(\x_{\overline{S\cup\{i\}}} \mid \x_{S\cup\{i\}}) \cdot \mbf{Pr}(\x_{S\cup\{i\}}) \\
&= \mbf{Pr}(\x_{\overline{S\cup\{i\}}} \mid \x_{S\cup\{i\}}) \cdot \mbf{Pr}(\x_{S},x_i) \\
&= \mbf{Pr}(\x_{\overline{S\cup\{i\}}} \mid \x_{S\cup\{i\}}) \cdot \mbf{Pr}(x_i \mid \x_{S}) \cdot \mbf{Pr}(\x_{S})
\end{aligned}
\end{equation}

Moreover, for any two points $\x, \x' \sim \mathcal{D}$ such that $\x_{S} = \x'_{S}$ and $f(\x) = 1 - f(\x')$, it holds that:

%~\footnote{
%For simplicity, we denote the data distribution by $\fml{D}$ instead of $\fml{D}_p$.
%}

\begin{equation}
\mbf{Pr}_{\z\sim \mathcal{D}}(f(\z) = f(\x) \mid \z_{S} = \x_{S}) = 1 - \mbf{Pr}_{\z\sim \mathcal{D}}(f(\z) = f(\x') \mid \z_{S} = \x'_{S})
\end{equation}

To simplify notation, we occasionally omit the distribution notation $\sim \mathcal{D}$ in the local probability expression $\mbf{Pr}_{\z\sim \mathcal{D}}(f(\z) = f(\x) \mid \z_{S} = \x_{S})$, and use $f^{+}$ and $f^{-}$ to denote the events $f(\z) = 1$ and $f(\z) = 0$, respectively.

We begin with a technical simplification of the probability term to facilitate the proof, and then proceed to establish monotonicity.

%We will begin with a techhnical simplification of the probability term that will fascialitate our proof and will then move to proving monotonicity.

\paragraph{Simplifying the probability term.}

Let $\mathcal{D}_{S}$ denote the distribution restricted to the set $S$.
Given a point $\x \sim \mathcal{D}$ and a set $S$, consider all points $\z \sim \mathcal{D}$ such that $\z_{S} = \x_{S}$ and $f(\z) = b$, where $b \in \{0,1\}$ —
they all share the same local probability $\mbf{Pr}_{\z \sim \mathcal{D}}(f(\z) = b \mid \z_{S} = \x_{S})$.
For every $\x \sim \mathcal{D}$ that shares the same $x_{S}$ and the same output $b$, we marginalize over $x_{\bar{S}}$.
This yields $\mbf{Pr}(f(\x) = b, \x_{S}) = \mbf{Pr}(f(\x) = b \mid \x_{S}) \cdot \mbf{Pr}(\x_{S})$,
from which we can infer:

%Let $\fml{D}_{S}$ denote the distribution restricted by the set $S$.
%Given a $\x \sim \mathcal{D}_p$, a set $S$, consider all points $\z \sim \mathcal{D}_p$ such that $\z_{S} = \x_{S}$ and $f(\z) = b$, where $b\in\{0,1\}$,
%they have the same local probability $\mbf{Pr}_{\z\sim \fml{D}}(f(\z) = b \mid \z_{S} = \x_{S})$.
%
%For all $\x \sim \fml{D}$ sharing the same $x_{S}$ and the same output $b$, we marginalize over $x_{\bar{S}}$.
%This gives us $\mbf{Pr}(f(\x) = b, \x_{S}) = \mbf{Pr}(f(\x) = b \mid \x_{S}) \cdot \mbf{Pr}(\x_{S})$,
%and we can infer that: 

\begin{equation}
\mbf{Pr}_{\z\sim \mathcal{D}}(f(\z) = b \mid \z_{S} = \x_{S}) = \mbf{Pr}(f(\x) = b \mid \x_{S}).
\end{equation}

Thus, for $\mbb{E}_{\x\sim \mathcal{D}}[\mbf{Pr}_{\z\sim \mathcal{D}}(f(\z) = f(\x) \mid \z_{S} = \x_{S})]$, we get that:

\begin{equation} \label{eq:gbx01}
\begin{aligned}
&\mbb{E}_{\x\sim \mathcal{D}}[\mbf{Pr}_{\z\sim \mathcal{D}}(f(\z) = f(\x) \mid \z_{S} = \x_{S})] \\
&= \hspace*{-1em} \sum_{\x_{S}\sim \fml{D}_{S}, f^{+}} \mbf{Pr}(f^{+}, \x_{S}) \cdot \mbf{Pr}_{\z\sim \mathcal{D}}(f^{+} \mid \z_{S} = \x_{S}) + \hspace*{-1.3em}
\sum_{\x_{S}\sim \fml{D}_{S}, f^{-}} \mbf{Pr}(f^{-}, \x_{S}) \cdot \mbf{Pr}_{\z\sim \mathcal{D}}(f^{-} \mid \z_{S} = \x_{S}) \\
&= \sum_{\x_{S}\sim \fml{D}_{S}} \mbf{Pr}(\x_{S}) \cdot \bigl[\bigl(\mbf{Pr}_{\z\sim \mathcal{D}}(f^{+} \mid \z_{S} = \x_{S})\bigr)^2 + \bigl(\mbf{Pr}_{\z\sim \mathcal{D}}(f^{-} \mid \z_{S} = \x_{S})\bigr)^2\bigr] \\
&= \sum_{\x_{S}\sim \fml{D}_{S}} \mbf{Pr}(\x_{S}) \cdot \bigl[\bigl(\mbf{Pr}_{\z\sim \mathcal{D}}(f^{+} \mid \z_{S} = \x_{S})\bigr)^2 + \bigl(1 - \mbf{Pr}_{\z\sim \mathcal{D}}(f^{+} \mid \z_{S} = \x_{S})\bigr)^2\bigr],
\end{aligned}
\end{equation}

From which it follows that:

\begin{equation} \label{eq:gbx02}
\begin{aligned}
&\mbb{E}_{\x\sim \mathcal{D}}[\mbf{Pr}_{\z\sim \mathcal{D}}(f(\z) = f(\x) \mid \z_{S\cup\{t\}} = \x_{S\cup\{t\}})] \\
&= \hspace*{-1em} \sum_{\x_{S\cup\{t\}}\sim \fml{D}_{S\cup\{t\}}} \hspace*{-1.3em}
\mbf{Pr}(\x_{S\cup\{t\}}) \cdot \bigl[\bigl(\mbf{Pr}_{\z}(f^{+} \mid \z_{S\cup\{t\}} = \x_{S\cup\{t\}})\bigr)^2 + \bigl(1 - \mbf{Pr}_{\z}(f^{+} \mid \z_{S\cup\{t\}} = \x_{S\cup\{t\}})\bigr)^2\bigr] \\
&= \hspace*{-.5em} \sum_{\substack{\x_{S}\sim \fml{D}_{S} \\ x_t = 1}}
\mbf{Pr}(x_t = 1, \x_{S}) \cdot \bigl[\bigl(\mbf{Pr}_{\z}(f^{+} \mid \z_{S} = \x_{S}, z_t = 1)\bigr)^2 + \bigl(1 - \mbf{Pr}_{\z}(f^{+} \mid \z_{S} = \x_{S}, z_t = 1)\bigr)^2\bigr] \ + \\
&\hspace{.7em} \sum_{\substack{\x_{S}\sim \fml{D}_{S} \\ x_t = 0}}
\mbf{Pr}(x_t = 0, \x_{S}) \cdot \bigl[\bigl(\mbf{Pr}_{\z}(f^{+} \mid \z_{S} = \x_{S}, z_t = 0)\bigr)^2 + \bigl(1 - \mbf{Pr}_{\z}(f^{+} \mid \z_{S} = \x_{S}, z_t = 0)\bigr)^2\bigr]
% &= \sum_{\substack{\x_{S}\sim \fml{D}_{S} \\ x_t = 1}}
% \mbf{Pr}(\x_{S}) \cdot \mbf{Pr}(z_t = 1 \mid \x_{S}) \cdot \bigl[\bigl(\mbf{Pr}_{\z}(f^{+} \mid \z_{S} = \x_{S}, z_t = 1)\bigr)^2 + \bigl(1 - \mbf{Pr}_{\z}(f^{+} \mid \z_{S} = \x_{S}, z_t = 1)\bigr)^2\bigr] \ + \\
% &\hspace{.7em} \sum_{\substack{\x_{S}\sim \fml{D}_{S} \\ x_t = 0}}
% \mbf{Pr}(\x_{S}) \cdot \mbf{Pr}(z_t = 0 \mid \x_{S}) \cdot \bigl[\bigl(\mbf{Pr}_{\z}(f^{+} \mid \z_{S} = \x_{S}, z_t = 0)\bigr)^2 + \bigl(1 - \mbf{Pr}_{\z}(f^{+} \mid \z_{S} = \x_{S}, z_t = 0)\bigr)^2\bigr].
\end{aligned}
\end{equation}

\paragraph{Proving monotonicity.}
We now proceed to prove the monotonicity claim, building on the previous simplification. We begin by introducing a few additional notations. Let $g_{1}^{+} := \mbf{Pr}_{\z \sim \mathcal{D}}(f(\z) = 1 \mid \z_{S} = \x_{S}, z_t = 1)$ and $g_{0}^{+} := \mbf{Pr}_{\z \sim \mathcal{D}}(f(\z) = 1 \mid \z_{S} = \x_{S}, z_t = 0)$. Similarly, define $P_{1} := \mbf{Pr}(z_t = 1 \mid \x_{S})$ and $P_{0} := \mbf{Pr}(z_t = 0 \mid \x_{S})$.
Then, the following expectations $\mbb{E}_{\x\sim \mathcal{D}}[\mbf{Pr}_{\z\sim \mathcal{D}}(f(\z) = f(\x) \mid \z_{S} = \x_{S})]$ 
and $\mbb{E}_{\x\sim \mathcal{D}}[\mbf{Pr}_{\z\sim \mathcal{D}}(f(\z) = f(\x) \mid \z_{S\cup\{t\}} = \x_{S\cup\{t\}})]$ can be simplified accordingly~\footnote{
Note that the notations $P_{1}$, $P_{0}$, $g_{1}^{+}$, and $g_{0}^{+}$—as well as all others used throughout the paper—are not fixed constants, but rather depend on the specific input $\x_{S}$. Formally, they should be written as $P_{1}(\x_{S})$, $P_{0}(\x_{S})$, and so on. However, for readability, we omit the explicit dependence on $\x_{S}$.}:
%It is important to note that the notations $P_{1}$, $P_{0}$, $g_{1}^{+}$, and $g_{0}^{+}$,
%as well as all other notations used in the remainder of the paper, are not constants but are associated with a specific input $\x_{S}$.
%More formally, these should be written as $P_{1}(\x_{S})$, $P_{0}(\x_{S})$, etc. 
%For simplicity of presentation, we omit the explicit dependence on $\x_{S}$.

%Let $g_{1}^{+}$ denote $\mbf{Pr}_{\z \sim \fml{D}}(f(\z) = 1 \mid \z_{S} = \x_{S}, z_t = 1)$,
%and $g_{0}^{+}$ denote $\mbf{Pr}_{\z \sim \fml{D}}(f(\z) = 1 \mid \z_{S} = \x_{S}, z_t = 0)$.
%Moreover, let $P_{1}$ denote $\mbf{Pr}(z_t = 1 \mid \x_{S})$, $P_{0}$ denote $\mbf{Pr}(z_t = 0 \mid \x_{S})$.
%$\mbb{E}_{\x\sim \fml{D}}[\mbf{Pr}_{\z\sim \fml{D}}(f(\z) = f(\x) \mid \z_{S} = \x_{S})]$ 
%and $\mbb{E}_{\x\sim \fml{D}}[\mbf{Pr}_{\z\sim \fml{D}}(f(\z) = f(\x) \mid \z_{S\cup\{t\}} = \x_{S\cup\{t\}})]$ can be simplified as follows

\begin{equation}
\begin{aligned}
\mbb{E}_{\x\sim \mathcal{D}}&[\mbf{Pr}_{\z\sim \mathcal{D}}(f(\z) = f(\x) \mid \z_{S} = \x_{S})] \\
&= \sum_{\x_{S} \sim \fml{D}_{S}} \mbf{Pr}(\x_{S}) \cdot [(P_{1} \cdot g_{1}^{+} + P_{0} \cdot g_{0}^{+})^2 + (1 - (P_{1} \cdot g_{1}^{+} + P_{0} \cdot g_{0}^{+}))^2] \\
&= \sum_{\x_{S} \sim \fml{D}_{S}} \mbf{Pr}(\x_{S}) \cdot [2(P_{1} \cdot g_{1}^{+} + P_{0} \cdot g_{0}^{+})^2 - 2(P_{1} \cdot g_{1}^{+} + P_{0} \cdot g_{0}^{+}) + 1].
\end{aligned}
\end{equation}

As a result, we obtain the following:

\begin{equation}
\begin{aligned}
&\mbb{E}_{\x\sim \mathcal{D}}[\mbf{Pr}_{\z\sim \mathcal{D}}(f(\z) = f(\x) \mid \z_{S\cup\{t\}} = \x_{S\cup\{t\}})] \\
&= \sum_{\substack{\x_{S} \sim \fml{D}_{S} \\ x_t = 1}} \mbf{Pr}(\x_{S}) \cdot P_{1} \cdot ((g_{1}^{+})^2 + (1-g_{1}^{+})^2)
+ \sum_{\substack{\x_{S} \sim \fml{D}_{S} \\ x_t = 0}} \mbf{Pr}(\x_{S}) \cdot P_{0} \cdot ((g_{0}^{+})^2 + (1 - g_{0}^{+})^2) \\
&= \sum_{\substack{\x_{S} \sim \fml{D}_{S} \\ x_t = 1}} \mbf{Pr}(\x_{S}) \cdot P_{1} \cdot (2(g_{1}^{+})^2 - 2g_{1}^{+} + 1)
+ \sum_{\substack{\x_{S} \sim \fml{D}_{S} \\ x_t = 0}} \mbf{Pr}(\x_{S}) \cdot P_{0} \cdot (2(g_{0}^{+})^2 - 2g_{0}^{+} + 1) \\
&= \sum_{\x_{S} \sim \fml{D}_{S}} \mbf{Pr}(\x_{S}) \cdot [P_{1} \cdot (2(g_{1}^{+})^2 - 2g_{1}^{+} + 1) + P_{0} \cdot (2(g_{0}^{+})^2 - 2g_{0}^{+} + 1)] \\
&= \sum_{\x_{S} \sim \fml{D}_{S}} \mbf{Pr}(\x_{S}) \cdot [2P_{1} \cdot ((g_{1}^{+})^2 - g_{1}^{+}) + 2P_{0} \cdot ((g_{0}^{+})^2 - g_{0}^{+}) + 1] \\
&= \sum_{\x_{S} \sim \fml{D}_{S}} \mbf{Pr}(\x_{S}) \cdot [2(P_{1} \cdot (g_{1}^{+})^2 + P_{0} \cdot (g_{0}^{+})^2) - 2(P_{1} \cdot g_{1}^{+} + P_{0} \cdot g_{0}^{+}) + 1].
\end{aligned}
\end{equation}

Let $\Delta_{t}(S,f)$ denote
$\mbb{E}_{\x\sim \mathcal{D}}[\mbf{Pr}_{\z\sim \mathcal{D}}(f(\z) = f(\x) \mid \z_{S\cup\{t\}} = \x_{S\cup\{t\}})] - \mbb{E}_{\x\sim \mathcal{D}}[\mbf{Pr}_{\z\sim \mathcal{D}}(f(\z) = f(\x) \mid \z_{S} = \x_{S})]$.
Without loss of generality, we assume that both $P_{1}$ and $P_{0}$ are positive.
Otherwise, it can be verified that $\Delta_{t}(S,f) = 0$.
Then, we get:

\begin{equation} \label{eq:gbx03}
\begin{aligned}
&\Delta_{t}(S,f) \\
&= \sum_{\x_{S} \sim \fml{D}_{S}} \mbf{Pr}(\x_{S} ) \cdot [2(P_{1} \cdot (g_{1}^{+})^2 + P_{0} \cdot (g_{0}^{+})^2) - 2(P_{1} \cdot g_{1}^{+} + P_{0} \cdot g_{0}^{+})^2] \\
% &= \sum_{\x_{S} \sim \fml{D}_{S}} 2 \cdot \mbf{Pr}(\x_{S}) \cdot [(P_{1} \cdot (g_{1}^{+})^2 + P_{0} \cdot (g_{0}^{+})^2) - (P_{1}^2 \cdot (g_{1}^{+})^2 + P_{0}^2 \cdot (g_{0}^{+})^2 + 2 \cdot P_{1} \cdot P_{0} \cdot g_{1}^{+} \cdot g_{0}^{+})] \\
&= \sum_{\x_{S} \sim \fml{D}_{S}} 2 \cdot \mbf{Pr}(\x_{S}) \cdot [(P_{1} - P_{1}^2) (g_{1}^{+})^2 + (P_{0} - P_{0}^2) (g_{0}^{+})^2 - 2 \cdot P_{1} \cdot P_{0} \cdot g_{1}^{+} \cdot g_{0}^{+}] \\
&= \sum_{\x_{S} \sim \fml{D}_{S}} 2 \cdot \mbf{Pr}(\x_{S}) \cdot P_{1} \cdot P_{0} \cdot (g_{1}^{+} - g_{0}^{+})^2 \\
&\ge 0.
\end{aligned}
\end{equation}

Note that $P_{1} - P_{1}^2 = P_{1}(1-P_{1}) = P_{1}P_{0}$ and similarly: $P_{0} - P_{0}^2 = P_{0}(1-P_{0}) = P_{0}P_{1}$).
Using these identities along with the previous result, we conclude that $\mbb{E}_{\x\sim \mathcal{D}}[\mbf{Pr}_{\z\sim \mathcal{D}}(f(\z) = f(\x) \mid \z_{S} = \x_{S})]$ is monotone, thereby concluding our proof.

\end{proof}

\begin{lemma} \label{lem:mfd}
The global sufficient value function $v^g_{\text{suff}}$
is monotonic non-decreasing for functions with discrete multi-valued input domains 
and multiple output classes, under any data distribution.
\end{lemma}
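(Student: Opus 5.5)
We generalize the computation in the proof of Lemma~\ref{lem:bfd} by replacing the two-class squared form with a sum of squares over all classes. The first step is the same simplification used there. For a fixed set $S$ and each class $b\in[c]$, write $q_b(\x_S):=\mbf{Pr}_{\z\sim\mathcal{D}}(f(\z)=b\mid \z_S=\x_S)$. Grouping inputs $\x$ by their restriction $\x_S$ and by their output class $b$, we obtain
\begin{equation}
v^g_{\text{suff}}(S)=\sum_{\x_S\sim\mathcal{D}_S}\mbf{Pr}(\x_S)\sum_{b\in[c]} q_b(\x_S)^2 .
\end{equation}
The reason is that the outer expectation assigns weight $\mbf{Pr}(f(\x)=b,\x_S)=\mbf{Pr}(\x_S)\,q_b(\x_S)$ to the inner value $q_b(\x_S)$. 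This identity recovers the Boolean formula of Lemma~\ref{lem:bfd} when $c=2$, since there $q_0=1-q_1$.

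Next, fix $t\notin S$ and let $t$ range over its finite domain $\mathcal{X}_t$. For each value $a\in\mathcal{X}_t$ define $P_a:=\mbf{Pr}(x_t=a\mid\x_S)$ and $g_{a,b}:=\mbf{Pr}_{\z}(f(\z)=b\mid \z_S=\x_S, z_t=a)$. Values $a$ with $P_a=0$ contribute nothing and can be dropped. By the law of total probability, $q_b(\x_S)=\sum_a P_a g_{a,b}$. Expanding $v^g_{\text{suff}}(S\cup\{t\})$ in the same way as in Lemma~\ref{lem:bfd} gives
\begin{equation}
\Delta_t(S,f)=\sum_{\x_S}\mbf{Pr}(\x_S)\sum_{b\in[c]}\Bigl[\sum_{a}P_a g_{a,b}^2-\Bigl(\sum_a P_a g_{a,b}\Bigr)^2\Bigr].
\end{equation}

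The last step shows that each bracket is nonnegative. For a fixed class $b$, the bracket is exactly the variance of $g_{\cdot,b}$ under the probability vector $(P_a)_a$. One can see this either from Jensen's inequality applied to the convex map $y\mapsto y^2$, or from the explicit identity
\begin{equation}
\sum_a P_a g_{a,b}^2-\Bigl(\sum_a P_a g_{a,b}\Bigr)^2=\tfrac12\sum_{a,a'}P_aP_{a'}(g_{a,b}-g_{a',b})^2,
\end{equation}
which generalizes the term $P_1P_0(g_1^+-g_0^+)^2$ from the Boolean proof. Summing these nonnegative variances over $b$ and $\x_S$ gives $\Delta_t(S,f)\ge 0$. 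This holds for every $S$ and every $t\notin S$, which is exactly monotonicity.

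I expect the main obstacle to be the first step, that is, justifying the sum-of-squares representation carefully for multiple classes. Once that identity is written correctly, the remaining steps are just the conditional-variance identity.
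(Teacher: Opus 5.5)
Your proposal is correct and follows essentially the same route as the paper: the sum-of-squares representation $v^g_{\text{suff}}(S)=\sum_{\x_S}\mbf{Pr}(\x_S)\sum_j q_j(\x_S)^2$, the conditional decomposition over the values of $x_t$ via $P_k$ and $g_k^j$, and the pairwise identity $\sum_{k<l}P_kP_l(g_k^j-g_l^j)^2$ (your $\tfrac12\sum_{a,a'}$ form) to conclude $\Delta_t(S,f)\ge 0$. Reading each bracket as a conditional variance (Jensen) is just a cleaner phrasing of that same final step.
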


\begin{proof}
Extending the proof from the simplified Boolean case, we now generalize the result to functions whose input domains and output ranges each consist of a finite set of values. Let $\fml{K}$ represent the set of output classes.

We can generalize equation~\ref{eq:gbx01} as follows:

\begin{equation} \label{eq:gbx04}
\begin{aligned}
\mbb{E}_{\x\sim \mathcal{D}}[\mbf{Pr}_{\z\sim \mathcal{D}}(f(\z) = f(\x) \mid \z_{S} = \x_{S})]
= \\ \sum_{\x_{S}\sim \mathcal{D}_{S}} \mbf{Pr}(\x_{S}) \cdot \bigl[\sum_{j \in \fml{K}} \bigl(\mbf{Pr}_{\z\sim \mathcal{D}}(f(\z) = j \mid \z_{S} = \x_{S})\bigr)^2\bigr]
\end{aligned}
\end{equation}

Given the condition $\z_{S} = \x_{S}$, suppose feature $t$ can take $r$ possible values $\{v_1, \dots, v_r\}$.
Then, equation~\ref{eq:gbx02} can be generalized as follows:

\begin{equation} \label{eq:gbx05}
\begin{aligned}
&\mbb{E}_{\x\sim \mathcal{D}}[\mbf{Pr}_{\z\sim \mathcal{D}}(f(\z) = f(\x) \mid \z_{S\cup\{t\}} = \x_{S\cup\{t\}})] \\
&= \sum_{k = 1}^{r} \Bigl[ \sum_{\substack{\x_{S}\sim \fml{D}_{S} \\ x_t = v_k}} 
\mbf{Pr}(\x_{S}) \cdot \mbf{Pr}(z_t = v_k \mid \x_{S}) \cdot \bigl[\sum_{j \in \fml{K}} \bigl(\mbf{Pr}_{\z\sim \mathcal{D}}(f(\z) = j \mid \z_{S} = \x_{S}, z_t = v_k)\bigr)^2\bigr] \Bigr].
\end{aligned}
\end{equation}

Let $P_{k}$ denote $\mbf{Pr}(z_t = v_k \mid \x_{S})$,
Let $g_{k}^{j}$ denote $\mbf{Pr}_{\z\sim \mathcal{D}}(f(\z) = j \mid \z_{S} = \x_{S}, z_t = v_k)$, 
that is, $z_t$ takes the value $v_k$ and the output class is $j$.
Then, we have

\begin{equation}
\mbb{E}_{\x\sim \mathcal{D}}[\mbf{Pr}_{\z\sim \mathcal{D}}(f(\z) = f(\x) \mid \z_{S} = \x_{S})]
= \sum_{\x_{S}\sim \fml{D}_{S}} \mbf{Pr}(\x_{S}) \cdot \Bigl[\sum_{j \in \fml{K}} \bigl(\sum^{r}_{k=1} P_{k} \cdot g_{k}^{j}\bigr)^2 \Bigr]
\end{equation}
and
\[
\mbb{E}_{\x\sim \mathcal{D}}[\mbf{Pr}_{\z\sim \mathcal{D}}(f(\z) = f(\x) \mid \z_{S\cup\{t\}} = \x_{S\cup\{t\}})]
= \sum_{\x_{S}\sim \fml{D}_{S}} \mbf{Pr}(\x_{S}) \cdot \Bigl[\sum^{r}_{k=1} P_{k} \cdot \bigl(\sum_{j \in \fml{K}} (g_{k}^{j})^2\bigr) \Bigr]
\]

Combining these two implications, we can conclude that:

\begin{equation} \label{eq:gbx06}
\begin{aligned}
&\Delta_{t}(S,f) \\
&= \sum_{\x_{S}\sim \fml{D}_{S}} \mbf{Pr}(\x_{S}) \cdot
\Bigl[\sum^{r}_{k=1} P_{k} \cdot \bigl(\sum_{j \in \fml{K}} (g_{k}^{j})^2\bigr) - \sum_{j \in \fml{K}} \bigl(\sum^{r}_{k=1} P_{k} \cdot g_{k}^{j}\bigr)^2 \Bigr] \\
&= \sum_{\x_{S}\sim \fml{D}_{S}} \mbf{Pr}(\x_{S}) \cdot
\Bigl[\sum_{j \in \fml{K}} \bigl(\sum^{r}_{k=1} P_{k} \cdot (g_{k}^{j})^2\bigr) - \sum_{j \in \fml{K}} \bigl(\sum^{r}_{k=1} (P_{k} \cdot g_{k}^{j})^2
+ \sum_{k < l} 2 \cdot P_{k} \cdot P_{l} \cdot g_{k}^{j} \cdot g_{l}^{j}\bigr) \Bigr] \\
% &= \sum_{\x_{S}\sim \fml{D}_{S}} \mbf{Pr}(\x_{S}) \cdot
% \Bigl[\sum_{j \in \fml{K}} \bigl(\sum^{r}_{k=1} P_{k} \cdot \bigl(\sum^{r}_{l=1} P_{l}\bigr) \cdot (g_{k}^{j})^2\bigr) - \sum_{j \in \fml{K}} \bigl(\sum^{r}_{k=1} (P_{k} \cdot g_{k}^{j})^2
% + \sum_{k < l} 2 \cdot P_{k} \cdot P_{l} \cdot g_{k}^{j} \cdot g_{l}^{j}\bigr) \Bigr] \\
%&= \sum_{\x_{S}\sim \fml{D}_{S}} \mbf{Pr}(\x_{S}) \cdot
%\Bigl[\sum_{j \in \fml{K}} \bigl(\sum^{r}_{k=1} (P_{k} \cdot g_{k}^{j})^2 + \sum_{k \neq l} P_{k} \cdot P_{l} \cdot (g_{k}^{j})^2 \bigr) - \sum_{j \in \fml{K}} \bigl(\sum^{r}_{k=1} (P_{k} \cdot g_{k}^{j})^2
%+ \sum_{k \neq l} 2 \cdot P_{k} \cdot P_{l} \cdot g_{k}^{j} \cdot g_{l}^{j}\bigr) \Bigr] \\
&= \sum_{\x_{S}\sim \fml{D}_{S}} \mbf{Pr}(\x_{S}) \cdot
\Bigl[\sum_{j \in \fml{K}} \sum_{k \neq l} P_{k} \cdot P_{l} \cdot (g_{k}^{j})^2 - \sum_{j \in \fml{K}} \sum_{k < l} 2 \cdot P_{k} \cdot P_{l} \cdot g_{k}^{j} \cdot g_{l}^{j} \Bigr] \\
&= \sum_{\x_{S}\sim \fml{D}_{S}} \mbf{Pr}(\x_{S}) \cdot \sum_{j \in \fml{K}} 
\Bigl[\sum_{k < l} P_{k} \cdot P_{l} \cdot  (g_{k}^{j} - g_{l}^{j})^2\Bigr] \\
&\ge 0.
\end{aligned}
\end{equation}

Hence, $\mbb{E}_{\x\sim \mathcal{D}}[\mbf{Pr}_{\z\sim \mathcal{D}}(f(\z) = f(\x) \mid \z_{S} = \x_{S})]$ is monotone, which completes the proof for this setting.

\end{proof}

\begin{lemma} \label{lem:confd}
The global sufficient value function $v^g_{\text{suff}}$
is monotonic non-decreasing for functions with continuous input domains 
and multiple output classes, under any data distribution.
\end{lemma}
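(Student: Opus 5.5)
The plan is to abandon the finite-sum bookkeeping of Lemmas~\ref{lem:bfd} and~\ref{lem:mfd} and rewrite $v^g_{\text{suff}}$ in measure-theoretic form, so that monotonicity follows from conditional Jensen (equivalently, the law of total variance). For each class $j\in\fml{K}$, define the conditional class probability
\[
p^j_S(\x_S) := \mbf{Pr}_{\z\sim\mathcal{D}}(f(\z)=j \mid \z_S=\x_S) = \mbb{E}[\mathbb{I}(f(\z)=j)\mid \sigma(\z_S)].
\]
This is understood as a regular conditional probability, i.e.\ a $\sigma(\z_S)$-measurable version of the conditional expectation. It is well defined because $f$ is a measurable classifier with finitely many outputs, and $\mathbb{F}$ is a product of standard Borel spaces.

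\textbf{Step 1: closed form for $v^g_{\text{suff}}$.} Following equation~\ref{eq:gbx04}, I first establish
\[
v^g_{\text{suff}}(S)=\mbb{E}_{\x}\Bigl[\sum_{j\in\fml{K}} p^j_S(\x_S)\,\mathbb{I}(f(\x)=j)\Bigr]=\mbb{E}_{\x}\Bigl[\sum_{j\in\fml{K}} \bigl(p^j_S(\x_S)\bigr)^2\Bigr].
\]
The second equality uses the tower property: for each $j$, condition on $\sigma(\x_S)$ and pull out the measurable factor $p^j_S$. This replaces the sum over atoms $\x_S$ used in the discrete lemmas. The only delicacy is that conditioning on a null event $\z_S=\x_S$ is meaningful only through the choice of version. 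Any two versions agree $\mathcal{D}$-a.e., so the expectation does not depend on that choice.

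\textbf{Step 2: monotonicity.} Let $S\subseteq S'=S\cup\{t\}$. Then $\sigma(\x_S)\subseteq\sigma(\x_{S'})$, and by the tower property $p^j_S=\mbb{E}[p^j_{S'}\mid\sigma(\x_S)]$. Conditional Jensen applied to the convex map $u\mapsto u^2$ gives $(p^j_S)^2\le \mbb{E}[(p^j_{S'})^2\mid \sigma(\x_S)]$. Taking expectations and summing over $j$ yields
\[
v^g_{\text{suff}}(S')-v^g_{\text{suff}}(S)=\sum_{j\in\fml{K}}\mbb{E}\bigl[(p^j_{S'}-p^j_S)^2\bigr]\ge 0.
\]
The equality follows from orthogonality of martingale increments. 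This is exactly the continuous analogue of $\sum_{k<l}P_kP_l(g^j_k-g^j_l)^2$ in equation~\ref{eq:gbx06}. The argument applies verbatim to any $S\subseteq S'$, not only single-element additions.

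\textbf{Main obstacle.} The hardest part is Step 1: rigorously identifying the paper's informal conditional probability given a possibly measure-zero event with a regular conditional distribution, and justifying the interchange of expectation needed to obtain the squared form. I would handle this by invoking the disintegration theorem on standard Borel spaces, and by stating once that $v^g_{\text{suff}}$ is defined up to a.e.-equivalence. If one prefers to avoid this measure theory, an alternative is to prove the claim for continuous densities by replacing the sums in Lemma~\ref{lem:mfd} with integrals against the conditional density of $x_t$ given $\x_S$. The same algebra then yields $\iint P(x_t)P(x_t')\,(g^j(x_t)-g^j(x_t'))^2/2\ge 0$.
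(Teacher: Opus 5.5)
Your proof is correct. It reaches the same identity as the paper, but through a different formalism.

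\textbf{The paper's route.} The paper assumes a joint density and writes $v^g_{\text{suff}}(S)$ and $v^g_{\text{suff}}(S\cup\{t\})$ as explicit integrals. It then shows their difference equals
$\int p(\x_S)\sum_{j}\tfrac12\iint p(z_t\mid\x_S)\,p(z_{t'}\mid\x_S)\,(Y_j(t)-Y_j(t'))^2\,dz_t\,dz_{t'}\,d\x_S$.
This is the conditional variance of the refined class probability, written using two independent copies of $z_t$.

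\textbf{Your route.} Your $\sum_j\mbb{E}[(p^j_{S'}-p^j_S)^2]$ is exactly the same quantity, obtained from the tower property rather than by expanding integrals. In fact, your ``alternative'' in the last paragraph is precisely the paper's proof.

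\textbf{What your route buys.}
\begin{itemize}
\item It needs no density at all. One argument therefore covers Boolean, discrete, continuous, and mixed or singular distributions, and subsumes Lemmas~\ref{lem:bfd} and~\ref{lem:mfd}. It also properly justifies the paper's closing claim about features that are ``a combination'' of types, which the density-based computations only treat case by case.
\item It handles arbitrary $S\subseteq S'$ in one step, rather than single-feature additions.
\item It makes explicit that conditioning on the null event $\z_S=\x_S$ is defined only up to a.e.\ equivalence. The paper glosses over this by defining the local probability as a density ratio.
\end{itemize}

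\textbf{What the paper's route buys.} It is concrete, and it sets up the $Y_j(t)$, $Y_j(i,t)$ notation that the paper reuses in the continuous supermodularity argument, where a four-copy expansion is carried out under independence.

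\textbf{A small simplification.} You do not need regular conditional distributions or standard Borel assumptions. Each $p^j_S$ is just a version of $\mbb{E}[\mathbb{I}(f=j)\mid\sigma(\x_S)]$, which exists for any measurable $f$ and is a function of $\x_S$ by the Doob--Dynkin lemma. Since there are only finitely many classes, Step~1 uses nothing more than these versions.
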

\begin{proof}
We extend the monotonicity results to functions where features are defined over continuous domains. One approach is to partition the domain of the continuous features into two disjoint subdomains and then adapt the proofs in~\cref{lem:mfd} by revising the notation accordingly. The core reasoning remains unchanged when transitioning from the discrete to the continuous case—only the notation differs.

Alternatively, we provide a direct proof below.
Assume all features lie in continuous domains, and let $p(\x_{S}, \x_{\bar{S}})$ denote the corresponding joint probability density function. Moreover, let $p(x_i)$ denote the marginal probability density function of $x_i$, and $\fml{X}_i$ denote the domain of feature $i$.
%We further extend the results on monotonicity to functions with features defined over continuous domains.
%Suppose all features have continuous domains, and let $p(\x_{S}, \x_{\bar{S}})$ denote the probability density function.
%For the proofs in~\cref{lem:mfd}, we adapt them by adjusting the notations as follows. In fact, since both summation and integration are linear, the reasoning in the proofs remains unchanged 
%when switching from the discrete to the continuous setting, except for the notation. However, for comletness --- we attach the full proof here:

We define marginal density as
\begin{equation}
p(\x_{S}) := \int_{\x_{\bar{S}} \in \fml{X}_{\bar{S}}} p(\x_{S}, \x_{\bar{S}}) d\x_{\bar{S}},
\end{equation}

where $\fml{X}_{\bar{S}}$ represents the domain of features in $\bar{S}$.

And the conditional density is defined as:

\begin{equation}
p(x_t \mid \x_{S}) := \frac{p(\x_{S}, x_t)}{p(\x_{S})}.
\end{equation}

The local probability of $S$ given $\x$ is defined as
\begin{equation}
\mbf{Pr}_{\z\sim \mathcal{D}}(f(\z) = j \mid \z_{S} = \x_{S})
=\frac{\int_{\z_{\bar{S}} \in \fml{X}_{\bar{S}}} I(f(\x_{S}, \z_{\bar{S}})=j) \cdot p(\x_{S}, \z_{\bar{S}}) d\z_{\bar{S}}}{p(\x_{S})},
\end{equation}

This term can also be decomposed into:

\begin{equation} \label{eq:con_gbx01}
\mbf{Pr}_{\z\sim \mathcal{D}}(f(\z) = j \mid \z_{S} = \x_{S})
= \int_{z_t \in \fml{X}_t} p(z_t \mid \x_{S}) \cdot \mbf{Pr}_{\z\sim \mathcal{D}}(f(\z) = j \mid \z_{S} = \x_{S}, z_t = x_t) dz_t
\end{equation}
via an additional feature $t \not \in S$.

We then modify equation~\ref{eq:gbx04} to
\begin{equation} \label{eq:con_gbx02}
\begin{aligned}
&\mbb{E}_{\x\sim \mathcal{D}}[\mbf{Pr}_{\z\sim \mathcal{D}}(f(\z) = f(\x) \mid \z_{S} = \x_{S})] \\
&= \int_{\x_{S} \in \fml{X}_{S}} p(\x_{S}) \cdot \bigl[\sum_{j \in \fml{K}} \bigl(\mbf{Pr}_{\z\sim \mathcal{D}}(f(\z) = j \mid \z_{S} = \x_{S})\bigr)^2\bigr] d\x_{S},
\end{aligned}
\end{equation}
while equation~\ref{eq:gbx05} is changed to
\begin{equation}\label{eq:con_gbx03} 
\begin{aligned}
&\mbb{E}_{\x\sim \mathcal{D}}[\mbf{Pr}_{\z\sim \mathcal{D}}(f(\z) = f(\x) \mid \z_{S\cup\{t\}} = \x_{S\cup\{t\}})] \\
&= \int_{\x_{S} \in \fml{X}_{S}} p(\x_{S}) \cdot \Bigl[ \int_{z_t \in \fml{X}_t} p(z_t \mid \x_{S})
\cdot \Bigl(\sum_{j\in \fml{K}} \bigl(\mbf{Pr}_{\z\sim \mathcal{D}}(f(\z) = j \mid \z_{S\cup\{t\}} = \x_{S\cup\{t\}})\bigr)^2\Bigr) dz_t \Bigr] d\x_{S}.
\end{aligned}
\end{equation}

For simplicity, let $Y_{j}(t)$ denote $\mbf{Pr}_{\z\sim \mathcal{D}}(f(\z) = j \mid \z_{S} = \x_{S}, z_t = v_t)$, and $p_{t\mid S}$ denote $p(z_t \mid \x_{S})$.
Note that both $Y_{j}(t)$ and $p_{t\mid S}$ are associated with a fixed $\x_{S}$
\footnote{Changing $\x_{S}$ alters their values, we omit $\x_S$ in the notation for brevity.}.
We prove monotonicity as follows:
\begin{equation}
\begin{aligned}
&\Delta_{t}(S,f) \\
&= \mbb{E}_{\x\sim \mathcal{D}}[\mbf{Pr}_{\z\sim \mathcal{D}}(f(\z) = f(\x) \mid \z_{S\cup\{t\}} = \x_{S\cup\{t\}})] - \mbb{E}_{\x\sim \mathcal{D}}[\mbf{Pr}_{\z\sim \mathcal{D}}(f(\z) = f(\x) \mid \z_{S} = \x_{S})] \\
&= \int_{\x_{S} \in \fml{X}_{S}} p(\x_{S}) \cdot \Bigl[ \int_{z_t \in \fml{X}_t} p_{t\mid S}
\cdot \Bigl(\sum_{j\in \fml{K}} \bigl(\mbf{Pr}_{\z\sim \mathcal{D}}(f(\z) = j \mid \z_{S\cup\{t\}} = \x_{S\cup\{t\}})\bigr)^2\Bigr) dz_t \Bigr] d\x_{S} \ - \\
&\hspace{1em} \int_{\x_{S} \in \fml{X}_{S}} p(\x_{S}) \cdot \Bigl[\sum_{j \in \fml{K}} 
\Bigl(\int_{z_t \in \fml{X}_t} p_{t\mid S} \cdot \mbf{Pr}_{\z\sim \mathcal{D}}(f(\z) = j \mid \z_{S} = \x_{S}, z_t = x_t) dz_t\Bigr)^2\Bigr] d\x_{S} \\
&= \int_{\x_{S} \in \fml{X}_{S}} p(\x_{S}) \cdot \Bigl[ \sum_{j\in \fml{K}} \Bigl(
\int_{\fml{X}_t} p_{t\mid S} \cdot Y_{j}(t)^2 dz_t - \bigl( \int_{\fml{X}_t} p_{t\mid S} \cdot Y_{j}(t) dz_t \bigr)^2 \Bigr) \Bigr] d\x_{S} \\
&= \int_{\x_{S} \in \fml{X}_{S}} p(\x_{S}) \cdot \Bigl[ \sum_{j\in \fml{K}} \frac{2}{2} \cdot \Bigl(
\int_{\fml{X}_t} \int_{\fml{X}_{t'}} p_{t\mid S} \cdot p_{t' \mid S} \cdot \bigl(Y_{j}(t)^2 - Y_{j}(t) Y_{j}(t')\bigr) dz_t dz_{t'} \Bigr) \Bigr] d\x_{S} \\
&= \int_{\x_{S} \in \fml{X}_{S}} p(\x_{S}) \cdot \Bigl[ \sum_{j\in \fml{K}} \frac{1}{2} \cdot \Bigl(
\int_{\fml{X}_t} \int_{\fml{X}_{t'}} p_{t\mid S} \cdot p_{t' \mid S} \cdot \bigl(Y_{j}(t) - Y_{j}(t')\bigr)^2 dz_t dz_{t'} \Bigr) \Bigr] d\x_{S} \\
&\ge 0.
\end{aligned}
\end{equation}

\end{proof}

\globalmonotonicity*
\begin{proof}
As a consequence of~\cref{lem:bfd,lem:mfd,lem:confd}, we conclude that the global sufficient value function $v^g_{\text{suff}}$ is monotonic non-decreasing for any well-defined classification function --- whether its features are Boolean, discrete, continuous, or a combination --- under any data distribution.

%Now consider the global contrastive value function $v^g_{\text{con}}$, where $S$ is the set of features permitted to change. By examining the definitions of both $v^g_{\text{suff}}$ and $v^g_{\text{con}}$, we observe that:
%we infer that
%the global sufficient value function $v^g_{\text{suff}}$ is monotonic non-decreasing 
%for any well-defined classification function (including those with Boolean, discrete, continuous, or hybrid feature domains) under any data distribution.

%For the global contrastive value function $v^g_{\text{con}}$,
%let $S$ denote the set of features that are allowed to change.
%By inspecting the definition of $v^g_{\text{suff}}$ and $v^g_{\text{con}}$, we have:

\begin{equation}
v^g_{\text{con}}(S) = 1 - v^g_{\text{suff}}(\Bar{S})
\end{equation}

This implies that $v^g_{\text{con}}$ is monotonic non-increasing with respect to adding features to $\Bar{S}$. Equivalently, $v^g_{\text{con}}$ is monotonic non-decreasing as features are removed from $\Bar{S}$ and added to $S$, thus completing the proof.

%this means that $v^g_{\text{con}}$ is monotonic non-increasing as we add more features to $\Bar{S}$. In other words, $v^g_{\text{con}}$ is monotonic non-decreasing as we remove features from $\Bar{S}$ and add them to $S$, hence concluding the proof.

\end{proof}

\section{Proof of Proposition~\ref{globalsupermod_prop}}
\label{supermodularity_proofs_appendix}

\globalsupermodular*

\emph{Proof.} This section establishes the supermodularity property for both global sufficient and contrastive explanations. As with monotonicity, we begin with Boolean functions and progressively generalize to multi-valued discrete and continuous input domains, and finally consider the case of well-defined classification functions. Unlike the monotonicity property, however, these proofs require the assumption of feature independence. We also provide a counterexample to show that supermodularity may fail when this assumption is relaxed.

\begin{lemma} \label{lem:bfid}
The global sufficient value function $v^g_{\text{suff}}$
is supermodular for Boolean functions, under the assumption of feature independence.
\end{lemma}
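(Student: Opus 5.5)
The plan is to reduce supermodularity to the monotonicity already established in Lemma~\ref{lem:bfd}, applied to a derived function. Fix $S \subseteq S' \subseteq [n]$ and $t \notin S'$. We need $\Delta_t(S,f) \le \Delta_t(S',f)$. By induction on $|S' \setminus S|$ it suffices to treat $S' = S \cup \{i\}$ with $i \notin S \cup \{t\}$.

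\textbf{Step 1: simplify the marginal gain.} Start from the closed form derived in the proof of Lemma~\ref{lem:bfd}:
\[
\Delta_t(S,f) = \sum_{\x_S} 2\,\mbf{Pr}(\x_S)\, P_1 P_0\, (g_1^{+}-g_0^{+})^2 .
\]
Under feature independence, $P_1 = \mbf{Pr}(z_t = 1 \mid \x_S) = p_t$ and $P_0 = 1-p_t$ are constants that do not depend on $\x_S$. Define the discrete derivative $D_t f(\z) := f(\z_{t\leftarrow 1}) - f(\z_{t\leftarrow 0})$, which is a function of $\z_{\overline{\{t\}}}$ only. Independence gives $g_b^{+} = \mathbb{E}[f(\z) \mid \z_S=\x_S, z_t=b] = \mathbb{E}_{\z_{\overline{S\cup\{t\}}}}[f(\x_S, b, \z_{\overline{S\cup\{t\}}})]$, because the law of the remaining features does not depend on the conditioning values. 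Hence $g_1^{+} - g_0^{+} = \mathbb{E}[D_t f \mid \x_S]$, and
\[
\Delta_t(S,f) = 2p_t(1-p_t)\,\mathbb{E}_{\x_S}\bigl[(\mathbb{E}[D_t f \mid \x_S])^2\bigr].
\]

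\textbf{Step 2: show this is non-decreasing in $S$.} It remains to show that $\Phi(S) := \mathbb{E}_{\x_S}[(\mathbb{E}[Y \mid \x_S])^2]$ is non-decreasing in $S$ for the real-valued $Y = D_t f$. The computation in Lemma~\ref{lem:bfd}, and its continuous analogue in Lemma~\ref{lem:confd}, never used that $f$ is $\{0,1\}$-valued. It only used the tower decomposition $\mathbb{E}[Y \mid \x_S] = P_1\,\mathbb{E}[Y \mid \x_S, x_i{=}1] + P_0\,\mathbb{E}[Y \mid \x_S, x_i{=}0]$. Repeating that computation with $Y$ in place of $f$ gives
\[
\Phi(S\cup\{i\}) - \Phi(S) = \mathbb{E}_{\x_S}\bigl[P_1 P_0 (h_1 - h_0)^2\bigr] \ge 0,
\]
where $h_b := \mathbb{E}[Y \mid \x_S, x_i = b]$. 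Equivalently, this is the law of total variance: conditioning on more variables can only increase the second moment of a conditional expectation. Combining this with Step 1 yields $\Delta_t(S,f) \le \Delta_t(S\cup\{i\},f)$, which is supermodularity.

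\textbf{Main obstacle and a fallback.} The delicate point is Step 1, the identification $g_1^{+}-g_0^{+} = \mathbb{E}[D_t f \mid \x_S]$ together with the constancy of $P_1,P_0$. Both depend on independence, and without independence the weights $P_1P_0$ vary with $\x_S$ and the argument breaks, consistent with the counterexample the paper mentions. As a cross-check, one can instead use the Efron--Stein (orthogonal) decomposition $f = \sum_T f_T$ under the product measure. Since $\mathbb{E}[f \mid \x_S] = \sum_{T\subseteq S} f_T$, we get $v^g_{\text{suff}}(S) = c + 2\sum_{T \subseteq S}\|f_T\|^2$ for a constant $c$. A set function with nonnegative M\"obius coefficients is supermodular, so this gives the same conclusion directly.
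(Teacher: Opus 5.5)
Your proof is correct and is essentially the paper's. Both arguments start from the closed form $\Delta_t(S,f)=\sum_{\x_S}2\,\mbf{Pr}(\x_S)P_1P_0(g_1^{+}-g_0^{+})^2$ of Lemma~\ref{lem:bfd}, use that $P_1,P_0$ are constant under independence, and split over the values of $z_i$ to reach the same difference $\sum_{\x_S}2\,\mbf{Pr}(\x_S)P_1P_0Q_1Q_0(g_{11}^{+}-g_{10}^{+}-g_{01}^{+}+g_{00}^{+})^2\ge 0$, which is exactly your $2p_t(1-p_t)\,\mathbb{E}_{\x_S}[Q_1Q_0(h_1-h_0)^2]$; note that in your Step~2 the weights should be the marginals $Q_1,Q_0$ of $z_i$, not the $P_1,P_0$ of $z_t$. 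Your framing via $D_t f$ and the law of total variance is a cleaner packaging of that computation, and the Efron--Stein cross-check is a valid, more structural alternative that gives monotonicity and supermodularity at once from the nonnegative M\"obius coefficients $2\|f_T\|^2$.
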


\begin{proof}
Particularly, we will prove that:

\begin{equation}
\Delta_{t}(S,f) \le \Delta_{t}(S',f),
\end{equation}

where $S\subseteq S'$ and $t \not\in S'$. That is, $\Delta_{t}(S,f)$ is monotone.

Clearly, if $S' = S$, then $\Delta_{t}(S,f) \le \Delta_{t}(S',f)$ holds.
Let $S' = S\cup\{i\}$, where $i\not\in S$ and $i \neq t$.
For the local probability $\mbf{Pr}_{\z\sim \mathcal{D}}(f(\z) = f(\x) \mid \z_{S} = \x_{S})$,
under the assumption of feature independence, we have that the following holds:

\begin{equation}
\begin{aligned}
&\mbf{Pr}_{\z}(f(\z) = f(\x) \mid \z_{S} = \x_{S}) \\
&= \mbf{Pr}(z_i = 1) \cdot \mbf{Pr}(z_t = 1) \cdot \mbf{Pr}_{\z}(f(\z) = f(\x) \mid \z_{S} = \x_{S}, z_i = 1, z_t = 1) \ + \\
&\hspace{1em} \mbf{Pr}(z_i = 1) \cdot \mbf{Pr}(z_t = 0) \cdot \mbf{Pr}_{\z}(f(\z) = f(\x) \mid \z_{S} = \x_{S}, z_i = 1, z_t = 0) \ + \\
&\hspace{1em} \mbf{Pr}(z_i = 0) \cdot \mbf{Pr}(z_t = 1) \cdot \mbf{Pr}_{\z}(f(\z) = f(\x) \mid \z_{S} = \x_{S}, z_i = 0, z_t = 1) \ + \\
&\hspace{1em} \mbf{Pr}(z_i = 0) \cdot \mbf{Pr}(z_t = 0) \cdot \mbf{Pr}_{\z}(f(\z) = f(\x) \mid \z_{S} = \x_{S}, z_i = 0, z_t = 0).
\end{aligned}
\end{equation}

Let $P_{1}$, $P_{0}$, $Q_{1}$, and $Q_{0}$ denote $\mbf{Pr}(z_t = 1)$, $\mbf{Pr}(z_t = 0)$, $\mbf{Pr}(z_i = 1)$, and $\mbf{Pr}(z_i = 0)$, respectively.
Moreover, let $g_{11}^{+}$, $g_{10}^{+}$, $g_{01}^{+}$, and $g_{00}^{+}$ denote
$\mbf{Pr}_{\z \sim \mathcal{D}}(f(\z) = 1 \mid \z_{S} = \x_{S}, z_i = 1, z_t = 1)$, 
$\mbf{Pr}_{\z \sim \mathcal{D}}(f(\z) = 1 \mid \z_{S} = \x_{S}, z_i = 1, z_t = 0)$,
$\mbf{Pr}_{\z \sim \mathcal{D}}(f(\z) = 1 \mid \z_{S} = \x_{S}, z_i = 0, z_t = 1)$, and 
$\mbf{Pr}_{\z \sim \mathcal{D}}(f(\z) = 1 \mid \z_{S} = \x_{S}, z_i = 0, z_t = 0)$, respectively.

We further decompose equation~\ref{eq:gbx03} into:

\begin{equation}
\begin{aligned}
\Delta_{t}(S,f)
% &= \sum_{\x_{S}\in \fml{D}_{S}} 2 \cdot \mbf{Pr}(\x_{S}) \cdot \mbf{Pr}(z_t = 1) \cdot \mbf{Pr}(z_t = 0)
% \cdot \bigl[\mbf{Pr}_{\z}(f^{+} \mid \z_{S} = \x_{S}, z_t = 1) - \mbf{Pr}_{\z}(f^{+} \mid \z_{S} = \x_{S}, z_t = 0)\bigr]^2 \\
%
&= \sum_{\x_{S}\sim \fml{D}_{S}} 2 \cdot \mbf{Pr}(\x_{S}) \cdot 
P_{1} P_{0} \cdot \bigl[(Q_{1} \cdot g_{11}^{+} + Q_{0} \cdot g_{01}^{+}) - (Q_{1} \cdot g_{10}^{+} + Q_{0} \cdot g_{00}^{+})\bigr]^2 \\
&= \sum_{\x_{S}\sim \fml{D}_{S}} 2 \cdot \mbf{Pr}(\x_{S}) \cdot 
P_{1} P_{0} \cdot \bigl[Q_{1} \cdot (g_{11}^{+} - g_{10}^{+}) + Q_{0} \cdot (g_{01}^{+} - g_{00}^{+})\bigr]^2.
\end{aligned}
\end{equation}

Similarly, the following also holds:

\begin{equation}
\begin{aligned}
\Delta_{t}(S\cup\{i\},f)
% &= \sum_{\substack{\x_{S}\sim \fml{D}_{S} \\ x_i = 1}} 2 \cdot \mbf{Pr}(\x_{S\cup\{i\}}) \cdot 
% P_{1} P_{0} \cdot \bigl[\mbf{Pr}_{\z}(f^{+} \mid \z_{S\cup\{i\}} = \x_{S\cup\{i\}}, z_t = 1) - \mbf{Pr}_{\z}(f^{+} \mid \z_{S\cup\{i\}} = \x_{S\cup\{i\}}, z_t = 0)\bigr]^2 \ + \\
% &\hspace{1em} \sum_{\substack{\x_{S}\sim \fml{D}_{S} \\ x_i = 0}} 2 \cdot \mbf{Pr}(\x_{S\cup\{i\}}) \cdot
% P_{1} P_{0} \cdot \bigl[\mbf{Pr}_{\z}(f^{+} \mid \z_{S\cup\{i\}} = \x_{S\cup\{i\}}, z_t = 1) - \mbf{Pr}_{\z}(f^{+} \mid \z_{S\cup\{i\}} = \x_{S\cup\{i\}}, z_t = 0)\bigr]^2 \\
&= \sum_{\x_{S}\sim \fml{D}_{S}}
2 \cdot \mbf{Pr}(\x_{S}) \cdot P_{1} P_{0} \cdot \bigl[Q_{1} \cdot (g_{11}^{+} - g_{10}^{+})^2 + Q_{0} \cdot (g_{01}^{+} - g_{00}^{+})^2\bigr].
\end{aligned}
\end{equation}
Thus, we get
\begin{equation}
\begin{aligned}
\Delta_{t}(S\cup\{i\},f) - \Delta_{t}(S,f)
&= \sum_{\x_{S}\sim \fml{D}_{S}}
2 \cdot \mbf{Pr}(\x_{S}) \cdot P_{1} P_{0} \cdot Q_{1} Q_{0} \cdot (g_{11}^{+} - g_{10}^{+} - g_{01}^{+} + g_{00}^{+})^2 \\
&\ge 0.
\end{aligned}
\end{equation}
This implies that $\Delta_{t}(S,f)$ is monotone, that is, $c(S\cup\{t\},f) - c(S,f) \le c(S'\cup\{t\},f) - c(S',f)$.
\end{proof}

\begin{lemma} \label{lem:mfid}
The global sufficient value function $v^g_{\text{suff}}$
is supermodular for functions with discrete multi-valued input domains 
and multiple output classes, under the assumption of feature independence.
\end{lemma}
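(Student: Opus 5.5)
We mirror the proof of Lemma~\ref{lem:bfid}, replacing the binary case analysis by the multi-valued decomposition used in Lemma~\ref{lem:mfd}. It suffices to show $\Delta_t(S,f)\le\Delta_t(S\cup\{i\},f)$ for $i\notin S$, $i\neq t$; the general case $S\subseteq S'$ then follows by adding the elements of $S'\setminus S$ one at a time and chaining the inequalities.

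Fix $\x_S$. Let feature $t$ take values $\{v_1,\dots,v_r\}$ with probabilities $P_k:=\mbf{Pr}(z_t=v_k)$, and let feature $i$ take values $\{u_1,\dots,u_m\}$ with probabilities $Q_a:=\mbf{Pr}(z_i=u_a)$. Under feature independence these do not depend on $\x_S$ or on each other. For each class $j\in\fml{K}$ define
\[
g^{j}_{ak}:=\mbf{Pr}_{\z\sim\mathcal{D}}(f(\z)=j\mid \z_S=\x_S, z_i=u_a, z_t=v_k).
\]
Independence gives $\mbf{Pr}(f=j\mid \x_S, z_t=v_k)=\sum_a Q_a g^j_{ak}$. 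Moreover, conditioning additionally on $z_i=u_a$ leaves the law of $z_t$ unchanged, so it is still $(P_k)_k$. Equation~\ref{eq:gbx06} gives $\Delta_t$ as an average of a weighted variance, $\sum_j\sum_{k<l}P_kP_l(\cdot)^2$. Applying it at $S$ and at $S\cup\{i\}$ (where the outer average also runs over $x_i$ with weights $Q_a$) yields
\[
\Delta_t(S,f)=\sum_{\x_S}\mbf{Pr}(\x_S)\sum_{j}\sum_{k<l}P_kP_l\Bigl(\sum_a Q_a\,d^{j}_{a,kl}\Bigr)^2,
\]
\[
\Delta_t(S\cup\{i\},f)=\sum_{\x_S}\mbf{Pr}(\x_S)\sum_{j}\sum_{k<l}P_kP_l\sum_a Q_a\bigl(d^{j}_{a,kl}\bigr)^2,
\]
where $d^j_{a,kl}:=g^j_{ak}-g^j_{al}$.

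Subtracting, for each fixed $(\x_S,j,k,l)$ the difference equals $\sum_aQ_a d_a^2-(\sum_aQ_a d_a)^2=\tfrac12\sum_{a,b}Q_aQ_b(d_a-d_b)^2\ge 0$. This is the same variance identity used in Lemma~\ref{lem:mfd}, now applied in the $i$-coordinate. In the Boolean case it reduces to the term $Q_1Q_0(g_{11}-g_{10}-g_{01}+g_{00})^2$ of Lemma~\ref{lem:bfid}. Since all weights $\mbf{Pr}(\x_S)P_kP_l$ are nonnegative, $\Delta_t(S\cup\{i\},f)-\Delta_t(S,f)\ge 0$.

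The main obstacle is justifying the two displayed formulas, and in particular that $P_k$ is the same inside both conditionings. This is exactly where independence is needed: it ensures $\mbf{Pr}(z_t=v_k\mid \x_S, z_i=u_a)=P_k$ and that the joint law of $(z_i,z_t)$ factorizes under the conditioning on $\x_S$. Without independence the cross terms do not collapse into a variance, which is consistent with the counterexample the paper mentions. A secondary point is that values $v_k$ or $u_a$ with zero probability contribute zero and can be dropped, as in Lemma~\ref{lem:bfd}.
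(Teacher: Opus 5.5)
Your proposal is correct and follows essentially the same route as the paper. Like the paper, you use independence to write $\Delta_t(S,f)$ and $\Delta_t(S\cup\{i\},f)$ as $\sum_{k<l}P_kP_l\bigl(\sum_a Q_a d^j_{a,kl}\bigr)^2$ and $\sum_{k<l}P_kP_l\sum_a Q_a (d^j_{a,kl})^2$, and your identity $\tfrac12\sum_{a,b}Q_aQ_b(d_a-d_b)^2$ is exactly the paper's nonnegative difference $\sum_{k'<l'}P_kP_lQ_{k'}Q_{l'}(g^j_{k'k}-g^j_{k'l}-g^j_{l'k}+g^j_{l'l})^2$; your chaining argument for general $S\subseteq S'$ just makes explicit a reduction the paper uses implicitly.
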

\begin{proof}
We extend the results of the previous section to multi-valued discrete functions, 
again with the assumption of feature independence.
We use $P_{k}$ to denote $\mbf{Pr}(z_t = v_k)$, and $Q_{k'}$ to denote $\mbf{Pr}(z_i = v_k')$.
Moreover, we use $g_{k'k}^{j}$ to denote $\mbf{Pr}_{\z\sim \mathcal{D}}(f(\z) = j \mid \z_{S} = \x_{S}, z_i = v_k', z_t = v_k)$.
We decompose equation~\ref{eq:gbx06} to
\begin{equation}
\begin{aligned}
\Delta_{t}(S,f)
% &= \sum_{\x_{S}\sim \fml{D}_{S}} \mbf{Pr}(\x_{S}) \cdot \sum_{j \in \fml{K}} 
% \Bigl[\sum_{k < l} P_{k} P_{l} \cdot \bigl(\mbf{Pr}_{\z}(f(\z) = j \mid \z_{S} = \x_{S}, z_t = v_k) - \mbf{Pr}_{\z}(f(\z) = j \mid \z_{S} = \x_{S}, z_t = v_l)\bigr)^2\Bigr] \\
&= \sum_{\x_{S}\sim \fml{D}_{S}} \mbf{Pr}(\x_{S}) \cdot \sum_{j \in \fml{K}} 
\Bigl[\sum_{k < l} P_{k} P_{l} \cdot \Bigl(\sum_{k' = 1}^{r'} Q_{k'} \cdot g_{k'k}^{j} - \sum_{k' = 1}^{r'} Q_{k'} \cdot g_{k'l}^{j}\Bigr)^2\Bigr] \\
&= \sum_{\x_{S}\sim \fml{D}_{S}} \mbf{Pr}(\x_{S}) \cdot \sum_{j \in \fml{K}} 
\Bigl[\sum_{k < l} P_{k} P_{l} \cdot \Bigl(\sum_{k' = 1}^{r'} Q_{k'} \cdot (g_{k'k}^{j} - g_{k'l}^{j}) \Bigr)^2\Bigr].
\end{aligned}
\end{equation}
Likewise, we have
\begin{equation}
\begin{aligned}
\Delta_{t}(S\cup\{i\},f)
&= \sum_{k' = 1}^{r'} \Bigl[ \sum_{\substack{\x_{S}\sim \fml{D}_{S} \\ x_i = v_k'}} \mbf{Pr}(\x_{S}) \cdot \mbf{Pr}(z_i = v_k') \cdot \sum_{j \in \fml{K}} 
\Bigl(\sum_{k < l} P_{k} P_{l} \cdot (g_{k'k}^{j} - g_{k'l}^{j})^2\Bigr) \Bigr] \\
&= \sum_{\x_{S}\sim \fml{D}_{S}} \mbf{Pr}(\x_{S}) \cdot \Bigl[ \sum_{k' = 1}^{r'} Q_{k'} \cdot \sum_{j \in \fml{K}} 
\Bigl(\sum_{k < l} P_{k} P_{l} \cdot (g_{k'k}^{j} - g_{k'l}^{j})^2\Bigr) \Bigr] \\
&= \sum_{\x_{S}\sim \fml{D}_{S}} \mbf{Pr}(\x_{S}) \cdot \sum_{j \in \fml{K}} 
\Bigl[\sum_{k < l} P_{k} P_{l} \cdot \sum_{k' = 1}^{r'} Q_{k'} \cdot (g_{k'k}^{j} - g_{k'l}^{j})^2\Bigr].
\end{aligned}
\end{equation}
Therefore, we get 
\begin{equation}
\begin{aligned}
\Delta_{t}&(S\cup\{i\},f) - \Delta_{t}(S,f) \\
&= \sum_{\x_{S}\sim \fml{D}_{S}} \mbf{Pr}(\x_{S}) \cdot \sum_{j \in \fml{K}} 
\Bigl[\sum_{k < l} \sum_{k' < l'} P_{k} P_{l} Q_{k'} Q_{l'} \cdot (g_{k'k}^{j} - g_{k'l}^{j} - g_{l'k}^{j} + g_{l'l}^{j})^2\Bigr] \\
&\ge 0.
\end{aligned}
\end{equation}

\end{proof}

\globalsupermodular*
\begin{proof}

We can generalize the supermodularity results established in Lemma~\ref{lem:mfid} to the continuous case, assuming that $\fml{D}_p$ satisfies feature independence.
One approach is to partition the domain of the continuous features into two disjoint subdomains and then adapt the proofs in Lemma~\ref{lem:mfid} by revising the notation accordingly.

Alternatively, we provide a direct proof below.
Under this assumption, the marginal and conditional densities simplify as follows:
%We can extend the supermodularity results proven in~\cref{lem:mfid} to the continuous setting, provided that $\fml{D}_p$ exhibits feature independence.
%
%In such a setting, the marginal density and the conditional density become
\begin{equation}
p(\x_{S}) = \prod_{i\in S} p(x_i),~\textnormal{and}~~p(x_t \mid \x_{S}) = p(x_t).
\end{equation}

Consequently, the local probability of $S$ conditioned on $\x$ becomes:

\begin{equation}
\mbf{Pr}_{\z\sim \mathcal{D}}(f(\z) = j \mid \z_{S} = \x_{S})
=\int_{\z_{\bar{S}} \in \fml{X}_{\bar{S}}} I(f(\x_{S}, \z_{\bar{S}})=j) \cdot \prod_{i\in \bar{S}} p(z_i) d\z_{\bar{S}}.
\end{equation}

% Following a similar approach to the proof in~\cref{lem:confd}, we partition the domain $\fml{X}{t}$ into two disjoint subsets, $\fml{X}{t}^{(0)}$ and $\fml{X}{t}^{(1)}$. Likewise, we divide $\fml{X}{i}$ into $\fml{X}{i}^{(0)}$ and $\fml{X}{i}^{(1)}$. This allows us to directly apply the argument from~\cref{lem:mfid} to establish supermodularity. 

We let $Y_{j}(i,t)$ denote $\mbf{Pr}_{\z\sim \mathcal{D}}(f(\z) = j \mid \z_{S} = \x_{S}, z_i = x_i, z_t = x_t)$, where $i\not\in S$, $t\not\in S$ and $i \neq t$.
Evidently,
\[
Y_{j}(i,t) = Y_{j}(t,i),~~\textnormal{and}~~Y_{j}(t) = \int_{\fml{X}_{i}} p_{i} \cdot Y_{j}(i,t) dz_i.
\]
We have:
\begin{equation}
\begin{aligned}
&\Delta_{t}(S,f) \\
&= \int_{\x_{S} \in \fml{X}_{S}} p(\x_{S}) \cdot \Bigl[ \sum_{j\in \fml{K}} \frac{1}{2} \Bigl(
\int_{\fml{X}_t} \int_{\fml{X}_{t'}} p_{t} p_{t'} \bigl(Y_{j}(t) - Y_{j}(t')\bigr)^2 dz_t dz_{t'} \Bigr) \Bigr] d\x_{S} \\
&= \int_{\x_{S} \in \fml{X}_{S}} p(\x_{S}) \cdot \Bigl[ \sum_{j\in \fml{K}} \frac{1}{2} \Bigl(
\int_{\fml{X}_t} \int_{\fml{X}_{t'}} p_{t} p_{t'} \bigl(\int_{\fml{X}_{i}} p_{i} \cdot Y_{j}(i,t) dz_i - \int_{\fml{X}_{i}} p_{i} \cdot Y_{j}(i,t') dz_i \bigr)^2 dz_t dz_{t'} \Bigr) \Bigr] d\x_{S} \\
&= \int_{\x_{S} \in \fml{X}_{S}} p(\x_{S}) \cdot \Bigl[ \sum_{j\in \fml{K}} \frac{1}{2} \Bigl(
\int_{\fml{X}_t} \int_{\fml{X}_{t'}} p_{t} p_{t'} \bigl(\int_{\fml{X}_{i}} p_{i} \cdot (Y_{j}(i,t) - Y_{j}(i,t')) dz_i \bigr)^2 dz_t dz_{t'} \Bigr) \Bigr] d\x_{S},
\end{aligned}
\end{equation}
where
\begin{align*}
&\int_{\fml{X}_t} \int_{\fml{X}_{t'}} p_{t} p_{t'} \bigl(\int_{\fml{X}_{i}} p_{i} \cdot (Y_{j}(i,t) - Y_{j}(i,t')) dz_i \bigr)^2 dz_t dz_{t'} \\
&= \int_{\fml{X}_t} \int_{\fml{X}_{t'}} \int_{\fml{X}_{i}} \int_{\fml{X}_{i'}} p_{t} p_{t'} p_{i} p_{i'} \cdot
(Y_{j}(i,t) - Y_{j}(i,t')) \cdot (Y_{j}(i',t) - Y_{j}(i',t')) dz_i dz_{i'} dz_t dz_{t'}.
\end{align*}
Let $S' = S\cup\{i\}$, then we have:
\begin{equation}
\begin{aligned}
&\Delta_{t}(S',f) \\
&= \int_{\x_{S'} \in \fml{X}_{S'}} p(\x_{S'}) \cdot \Bigl[ \sum_{j\in \fml{K}} \frac{1}{2} \Bigl(
\int_{\fml{X}_t} \int_{\fml{X}_{t'}} p_{t} p_{t'} \cdot \bigl(Y_{j}(i,t) - Y_{j}(i,t')\bigr)^2 dz_t dz_{t'} \Bigr) \Bigr] d\x_{S'} \\
&= \int_{\x_{S} \in \fml{X}_{S}} p(\x_{S}) \cdot \Bigl[ \sum_{j\in \fml{K}} \frac{1}{2} \int_{\fml{X}_i} p_{i} \cdot \Bigl(
\int_{\fml{X}_t} \int_{\fml{X}_{t'}} p_{t} p_{t'} \cdot \bigl(Y_{j}(i,t) - Y_{j}(i,t')\bigr)^2 dz_t dz_{t'} \Bigr) dz_i \Bigr] d\x_{S} \\
&= \int_{\x_{S} \in \fml{X}_{S}} p(\x_{S}) \cdot \Bigl[ \sum_{j\in \fml{K}} \frac{1}{2} \Bigl(
\int_{\fml{X}_i} \int_{\fml{X}_t} \int_{\fml{X}_{t'}} p_{i} p_{t} p_{t'} \cdot \bigl(Y_{j}(i,t) - Y_{j}(i,t')\bigr)^2 dz_t dz_{t'} dz_i \Bigr) \Bigr] d\x_{S},
\end{aligned}
\end{equation}
where
\begin{align*}
&\int_{\fml{X}_i} \int_{\fml{X}_t} \int_{\fml{X}_{t'}} p_{i} p_{t} p_{t'} \cdot \bigl(Y_{j}(i,t) - Y_{j}(i,t')\bigr)^2 dz_t dz_{t'} dz_i \\
&= \int_{\fml{X}_i} \int_{\fml{X}_{i'}} \int_{\fml{X}_t} \int_{\fml{X}_{t'}} p_{i} p_{i'} p_{t} p_{t'} \cdot \bigl(Y_{j}(i,t) - Y_{j}(i,t')\bigr)^2 dz_t dz_{t'} dz_i dz_{i'}.
\end{align*}
Let $\alpha(i,t,t')$ denote $Y_{j}(i,t) - Y_{j}(i,t')$ and $\beta(i',t,t')$ denote $Y_{j}(i',t) - Y_{j}(i',t')$, we can infer that
\begin{equation}
\begin{aligned}
&\Delta_{t}(S',f) - \Delta_{t}(S,f) \\
&= \int_{\x_{S} \in \fml{X}_{S}} p(\x_{S}) \Bigl[ \sum_{j\in \fml{K}} \frac{1}{4} \Bigl( 
\int_{\fml{X}_t} \int_{\fml{X}_{t'}} \int_{\fml{X}_{i}} \int_{\fml{X}_{i'}} p_{t} p_{t'} p_{i} p_{i'}
(\alpha(i,t,t') - \beta(i',t,t'))^2 dz_i dz_{i'} dz_t dz_{t'}
 \Bigr) \Bigr] d\x_{S} \\
 &\ge 0.
\end{aligned}
\end{equation}

Consequently, the global sufficient value function $v^g_{\text{suff}}$ retains the supermodularity property for any valid classification function, regardless of whether its features are Boolean, discrete, continuous, or a mix thereof.

%By adopting similar proof ideas from~\cref{lem:confd}, we split the domain $\fml{X}_{t}$ into two disjoint subdomains, $\fml{X}_{t}^{(0)}$ and $\fml{X}_{t}^{(1)}$.
%Similarly, we split the domain $\fml{X}_{i}$ into two disjoint subdomains,
%$\fml{X}_{i}^{(0)}$ and $\fml{X}_{i}^{(1)}$.
%Then we can reuse the proof in~\cref{lem:mfid} to prove supermodularity.
%This further implies that the supermodularity property of the global sufficient value function $v^g_{\text{suff}}$ holds for any well-defined classification function, including those with Boolean, discrete, continuous, or hybrid feature domains.

We now demonstrate that the local probabilistic sufficient value function is neither submodular nor supermodular.
As a simple illustration, fix the uniform distribution on the three variables $\{x_1, x_2, x_3\}$
and define a Boolean function as follows:
\begin{equation}
f(x_1,x_2) = (x_1 \lor x_2) \land x_3.
\end{equation}

Consider the input $\x = (1,1,1)$ which is classified as $1$.
For the local probabilistic sufficient value function $v^{l}_{\text{suff}}$, we have that the following holds:

\begin{align*}
&v^{l}_{\text{suff}}(\emptyset) = \frac{3}{8},~~v^{l}_{\text{suff}}(\{1\}) = \frac{1}{2},~~v^{l}_{\text{suff}}(\{2\}) = \frac{1}{2},~~v^{l}_{\text{suff}}(\{3\}) = \frac{3}{4}, \\
&v^{l}_{\text{suff}}(\{1,2\}) = \frac{1}{2},~~v^{l}_{\text{suff}}(\{1,3\}) = 1,~~v^{l}_{\text{suff}}(\{2,3\}) = 1,~~v^{l}_{\text{suff}}(\{1,2,3\}) = 1.
\end{align*}
Since $\bigl(v^{l}_{\text{suff}}(\{1,2\}) - v^{l}_{\text{suff}}(\{2\})\bigr) < \bigl(v^{l}_{\text{suff}}(\{1\}) - v^{l}_{\text{suff}}(\emptyset)\bigr) < \bigl(v^{l}_{\text{suff}}(\{1,3\}) - v^{l}_{\text{suff}}(\{3\})\bigr)$,
the function $v^{l}_{\text{suff}}$ is neither supermodular nor submodular.
 
\end{proof}

\section{Proof of Proposition~\ref{globalsubmodular_prop}}
\label{globalsubmodular_prop_sec}

\globalsubmodular*

\begin{proof}

Once we establish that the global sufficient setting is supermodular, it directly follows that the global contrastive setting is submodular. Let $S \subseteq [n]$ be the set of features allowed to change, and suppose $S \subseteq S'$ with $i \notin S'$. To compare the marginal contributions $v^g_{\text{con}}(S \cup \{i\}) - v^g_{\text{con}}(S)$ and $v^g_{\text{con}}(S' \cup \{i\}) - v^g_{\text{con}}(S')$, we equivalently compare
$\bigl(1 - v^g_{\text{suff}}(\Bar{S} \setminus \{i\})\bigr) - \bigl(1 - v^g_{\text{suff}}(\Bar{S})\bigr) \quad \text{and} \quad \bigl(1 - v^g_{\text{suff}}(\Bar{S'} \setminus \{i\})\bigr) - \bigl(1 - v^g_{\text{suff}}(\Bar{S'})\bigr).$
Since $S \subseteq S'$, it follows that $\Bar{S'} \subseteq \Bar{S}$, and therefore we obtain:

%Let $S\in [n]$ denote the set of features that are allowed to change.
%Let $S\subseteq S'$ and $i\not\in S'$.
%To compare $v^g_{\text{con}}(S\cup\{i\}) - v^g_{\text{con}}(S)$ and $v^g_{\text{con}}(S'\cup\{i\}) - v^g_{\text{con}}(S')$, we essentially compare
%$\bigl(1 - v^g_{\text{suff}}(\Bar{S} \setminus\{i\})\bigr) - \bigl(1 - v^g_{\text{suff}}(\Bar{S})\bigr)$ and $\bigl(1 - v^g_{\text{suff}}(\Bar{S'} \setminus\{i\})\bigr) - \bigl(1 - v^g_{\text{suff}}(\Bar{S'})\bigr)$.
%Since $S\subseteq S'$, we have $\Bar{S'}\subseteq \Bar{S}$, then we get 

\begin{equation}
\bigl(v^g_{\text{suff}}(\Bar{S}) - v^g_{\text{suff}}(\Bar{S} \setminus\{i\})\bigr) \geq
\bigl(v^g_{\text{suff}}(\Bar{S'}) - v^g_{\text{suff}}(\Bar{S'} \setminus\{i\})\bigr)
\end{equation}

Which is equivalent to:

\begin{equation}
\bigl(v^g_{\text{con}}(S\cup\{i\}) - v^g_{\text{con}}(S)\bigr) \geq
\bigl(v^g_{\text{con}}(S'\cup\{i\}) - v^g_{\text{con}}(S')\bigr).
\end{equation}

Thus, the global contrastive value function $v^g_{\text{con}}$ is submodular, hence concluding this proof.

We will now demonstrate that the local probabilistic contrastive value function is neither submodular nor supermodular. As a simple illustration, fix the uniform distribution on the three variables $\{x_1, x_2, x_3\}$
and define a Boolean function as follows:

\begin{equation}
f(x_1,x_2):= (x_1 \lor x_2) \land x_3.
\end{equation}

Consider the input $\x = (1,1,1)$ which is classified as $1$. For the local probabilistic contrastive value function $v^{l}_{\text{con}}$, we have:

\begin{align*}
&v^{l}_{\text{con}}(\emptyset) = 1,~~v^{l}_{\text{con}}(\{1\}) = 1,~~v^{l}_{\text{con}}(\{2\}) = 1,~~v^{l}_{\text{con}}(\{3\}) = \frac{1}{2}, \\
&v^{l}_{\text{con}}(\{1,2\}) = \frac{3}{4},~~v^{l}_{\text{con}}(\{1,3\}) = \frac{1}{2},~~v^{l}_{\text{con}}(\{2,3\}) = \frac{1}{2},~~v^{l}_{\text{con}}(\{1,2,3\}) = \frac{3}{8}.
\end{align*}

Since $\bigl(v^{l}_{\text{con}}(\{1,2\}) - v^{l}_{\text{con}}(\{2\})\bigr) < \bigl(v^{l}_{\text{con}}(\{1,2,3\}) - v^{l}_{\text{con}}(\{2,3\})\bigr) < \bigl(v^{l}_{\text{con}}(\{1\}) - v^{l}_{\text{con}}(\emptyset)\bigr)$,
the function $v^{l}_{\text{con}}$ is neither submodular nor supermodular.

\end{proof}

In the rest of this section, we show that the feature independence assumption is not just sufficient but also necessary for supermodularity. We illustrate this with the following counterexample.

Define a Boolean function:

\begin{equation}
f(x_1, x_2): = x_1
\end{equation}

For this boolean function, it holds that $f(00) = 0$, $f(01) = 0$, $f(10) = 1$, and $f(11) = 1$. Moreover, define a distribution $\mathcal{D}$ on $x_1$ and $x_2$. Recall that the global sufficient value function is:

\[
\mbb{E}_{\x\sim \mathcal{D}}[\mbf{Pr}_{\z\sim \mathcal{D}}(f(\z) = f(\x) \mid \z_{S} = \x_{S})]
= \sum_{\x\sim \mathcal{D}} \mbf{Pr}(\x) \cdot \mbf{Pr}_{\z\sim \mathcal{D}}(f(\z) = f(\x) \mid \z_{S} = \x_{S})
\]
In addition, if we can show that
\[
\Delta_{t}(S,f) \le \Delta_{t}(S',f),
\]
where $S\subseteq S'$ and $t \not\in S'$, then the global sufficient value function is supermodular. When $S = \emptyset$, we have that the following holds:

\begin{equation*}
\begin{aligned}
&\mbb{E}_{\x\sim \mathcal{D}}[\mbf{Pr}_{\z\sim \mathcal{D}}(f(\z) = f(\x) \mid \z_{S} = \x_{S})] \\
&= \mbf{Pr}(x_1 = 0, x_2 = 0) \cdot \mbf{Pr}_{\z\sim \mathcal{D}}(f(\z) = 0) \ + \mbf{Pr}(x_1 = 0, x_2 = 1) \cdot \mbf{Pr}_{\z\sim \mathcal{D}}(f(\z) = 0) \ + \\
&\hspace{1em} \mbf{Pr}(x_1 = 1, x_2 = 0) \cdot \mbf{Pr}_{\z\sim \mathcal{D}}(f(\z) = 1) \ + 
\mbf{Pr}(x_1 = 1, x_2 = 1) \cdot \mbf{Pr}_{\z\sim \mathcal{D}}(f(\z) = 1) 
\end{aligned}
\end{equation*}

Which is also equivalent to:

\begin{equation*}
\begin{aligned}
& \mbf{Pr}(x_1 = 0, x_2 = 0) \cdot \bigl(\mbf{Pr}(x_1 = 0, x_2 = 0) + \mbf{Pr}(x_1 = 0, x_2 = 1)\bigr) \ + \\
&\hspace{1em} \mbf{Pr}(x_1 = 0, x_2 = 1) \cdot \bigl(\mbf{Pr}(x_1 = 0, x_2 = 0) + \mbf{Pr}(x_1 = 0, x_2 = 1)\bigr) \ + \\
&\hspace{1em} \mbf{Pr}(x_1 = 1, x_2 = 0) \cdot \bigl(\mbf{Pr}(x_1 = 1, x_2 = 0) + \mbf{Pr}(x_1 = 1, x_2 = 1)\bigr) \ + \\
&\hspace{1em} \mbf{Pr}(x_1 = 1, x_2 = 1) \cdot \bigl(\mbf{Pr}(x_1 = 1, x_2 = 0) + \mbf{Pr}(x_1 = 1, x_2 = 1)\bigr) \\
&= \bigl(\mbf{Pr}(x_1 = 0, x_2 = 0) + \mbf{Pr}(x_1 = 0, x_2 = 1)\bigr)^2 \ + \bigl(\mbf{Pr}(x_1 = 1, x_2 = 0) + \mbf{Pr}(x_1 = 1, x_2 = 1)\bigr)^2
\end{aligned}
\end{equation*}

When $S = \{2\}$, we have that the following condition holds:

\begin{equation*}
\begin{aligned}
&\mbb{E}_{\x\sim \mathcal{D}}[\mbf{Pr}_{\z\sim \mathcal{D}}(f(\z) = f(\x) \mid \z_{S} = \x_{S})] \\
&= \mbf{Pr}(x_1 = 0, x_2 = 0) \cdot \mbf{Pr}_{\z\sim \mathcal{D}}(f(\z) = 0 \mid x_2 = 0) \ + \\
&\hspace{1em} \mbf{Pr}(x_1 = 0, x_2 = 1) \cdot \mbf{Pr}_{\z\sim \mathcal{D}}(f(\z) = 0 \mid x_2 = 1) \ + \\
&\hspace{1em} \mbf{Pr}(x_1 = 1, x_2 = 0) \cdot \mbf{Pr}_{\z\sim \mathcal{D}}(f(\z) = 1 \mid x_2 = 0) \ + \\
&\hspace{1em} \mbf{Pr}(x_1 = 1, x_2 = 1) \cdot \mbf{Pr}_{\z\sim \mathcal{D}}(f(\z) = 1 \mid x_2 = 1) \\
\end{aligned}
\end{equation*}

This can also be rearranged as follows:

\begin{equation*}
\begin{aligned}
& \mbf{Pr}(x_1 = 0, x_2 = 0) \cdot \frac{\mbf{Pr}(x_1 = 0, x_2 = 0)}{\mbf{Pr}(x_1 = 0, x_2 = 0) + \mbf{Pr}(x_1 = 1, x_2 = 0)} \ + \\
&\hspace{1em} \mbf{Pr}(x_1 = 0, x_2 = 1) \cdot \frac{\mbf{Pr}(x_1 = 0, x_2 = 1)}{\mbf{Pr}(x_1 = 0, x_2 = 1) + \mbf{Pr}(x_1 = 1, x_2 = 1)} \ + \\
&\hspace{1em} \mbf{Pr}(x_1 = 1, x_2 = 0) \cdot \frac{\mbf{Pr}(x_1 = 1, x_2 = 0)}{\mbf{Pr}(x_1 = 0, x_2 = 0) + \mbf{Pr}(x_1 = 1, x_2 = 0)} \ + \\
&\hspace{1em} \mbf{Pr}(x_1 = 1, x_2 = 1) \cdot \frac{\mbf{Pr}(x_1 = 1, x_2 = 1)}{\mbf{Pr}(x_1 = 0, x_2 = 1) + \mbf{Pr}(x_1 = 1, x_2 = 1)} \\
&= \frac{\mbf{Pr}(x_1 = 0, x_2 = 0)^2 + \mbf{Pr}(x_1 = 1, x_2 = 0)^2}{\mbf{Pr}(x_1 = 0, x_2 = 0) + \mbf{Pr}(x_1 = 1, x_2 = 0)} + 
\frac{\mbf{Pr}(x_1 = 0, x_2 = 1)^2 + \mbf{Pr}(x_1 = 1, x_2 = 1)^2}{\mbf{Pr}(x_1 = 0, x_2 = 1) + \mbf{Pr}(x_1 = 1, x_2 = 1)}
\end{aligned}
\end{equation*}

In contrast, when $S = \{1\}$, we have the the following holds:

\begin{equation*}
\begin{aligned}
&\mbb{E}_{\x\sim \mathcal{D}}[\mbf{Pr}_{\z\sim \mathcal{D}}(f(\z) = f(\x) \mid \z_{S} = \x_{S})] \\
&= \mbf{Pr}(x_1 = 0, x_2 = 0) \cdot \mbf{Pr}_{\z\sim \mathcal{D}}(f(\z) = 0 \mid x_1 = 0) \ + \\
&\hspace{1em} \mbf{Pr}(x_1 = 0, x_2 = 1) \cdot \mbf{Pr}_{\z\sim \mathcal{D}}(f(\z) = 0 \mid x_1 = 0) \ + \\
&\hspace{1em} \mbf{Pr}(x_1 = 1, x_2 = 0) \cdot \mbf{Pr}_{\z\sim \mathcal{D}}(f(\z) = 1 \mid x_1 = 1) \ + \\
&\hspace{1em} \mbf{Pr}(x_1 = 1, x_2 = 1) \cdot \mbf{Pr}_{\z\sim \mathcal{D}}(f(\z) = 1 \mid x_1 = 1) \\
\end{aligned}
\end{equation*}

This can similarly be rearranged as follows:

\begin{equation*}
\begin{aligned}
& \mbf{Pr}(x_1 = 0, x_2 = 0) \cdot \frac{\mbf{Pr}(x_1 = 0, x_2 = 0) + \mbf{Pr}(x_1 = 0, x_2 = 1)}{\mbf{Pr}(x_1 = 0, x_2 = 0) + \mbf{Pr}(x_1 = 0, x_2 = 1)} \ + \\
&\hspace{1em} \mbf{Pr}(x_1 = 0, x_2 = 1) \cdot \frac{\mbf{Pr}(x_1 = 0, x_2 = 0) + \mbf{Pr}(x_1 = 0, x_2 = 1)}{\mbf{Pr}(x_1 = 0, x_2 = 0) + \mbf{Pr}(x_1 = 0, x_2 = 1)} \ + \\
&\hspace{1em} \mbf{Pr}(x_1 = 1, x_2 = 0) \cdot \frac{\mbf{Pr}(x_1 = 1, x_2 = 0) + \mbf{Pr}(x_1 = 1, x_2 = 1)}{\mbf{Pr}(x_1 = 1, x_2 = 0) + \mbf{Pr}(x_1 = 1, x_2 = 1)} \ + \\
&\hspace{1em} \mbf{Pr}(x_1 = 1, x_2 = 1) \cdot \frac{\mbf{Pr}(x_1 = 1, x_2 = 0) + \mbf{Pr}(x_1 = 1, x_2 = 1)}{\mbf{Pr}(x_1 = 1, x_2 = 0) + \mbf{Pr}(x_1 = 1, x_2 = 1)} = 1
\end{aligned}
\end{equation*}

Finally, when $S = \{1, 2\}$, we have the following condition:

\begin{equation*}
\begin{aligned}
&\mbb{E}_{\x\sim \mathcal{D}}[\mbf{Pr}_{\z\sim \mathcal{D}}(f(\z) = f(\x) \mid \z_{S} = \x_{S})] \\
&= \mbf{Pr}(x_1 = 0, x_2 = 0) \cdot \mbf{Pr}_{\z\sim \mathcal{D}}(f(\z) = 0 \mid x_1 = 0, x_2 = 0) \ + \\
&\hspace{1em} \mbf{Pr}(x_1 = 0, x_2 = 1) \cdot \mbf{Pr}_{\z\sim \mathcal{D}}(f(\z) = 0 \mid x_1 = 0, x_2 = 1) \ + \\
&\hspace{1em} \mbf{Pr}(x_1 = 1, x_2 = 0) \cdot \mbf{Pr}_{\z\sim \mathcal{D}}(f(\z) = 1 \mid x_1 = 1, x_2 = 0) \ + \\
&\hspace{1em} \mbf{Pr}(x_1 = 1, x_2 = 1) \cdot \mbf{Pr}_{\z\sim \mathcal{D}}(f(\z) = 1 \mid x_1 = 1, x_2 = 1) 
\end{aligned}
\end{equation*}

This can likewise be reordered as follows:

\begin{equation*}
\begin{aligned}
&\mbf{Pr}(x_1 = 0, x_2 = 0) \cdot \frac{\mbf{Pr}(x_1 = 0, x_2 = 0)}{\mbf{Pr}(x_1 = 0, x_2 = 0)} + 
\mbf{Pr}(x_1 = 0, x_2 = 1) \cdot \frac{\mbf{Pr}(x_1 = 0, x_2 = 1)}{\mbf{Pr}(x_1 = 0, x_2 = 1)} \ + \\
&\hspace{1em} \mbf{Pr}(x_1 = 1, x_2 = 0) \cdot \frac{\mbf{Pr}(x_1 = 1, x_2 = 0)}{\mbf{Pr}(x_1 = 1, x_2 = 0)} + 
\mbf{Pr}(x_1 = 1, x_2 = 1) \cdot \frac{\mbf{Pr}(x_1 = 1, x_2 = 1)}{\mbf{Pr}(x_1 = 1, x_2 = 1)} \\
&= 1
\end{aligned}
\end{equation*}

Let $t = 2$. Evidently, we have $\Delta_{t}(\{1\},f) = 0$.
Next, let us analyse $\Delta_{t}(\emptyset,f)$.
We note that if we assume feature independence, we observe that:

\begin{equation*}
\begin{aligned}
&\bigl(\mbf{Pr}(x_1 = 0, x_2 = 0) + \mbf{Pr}(x_1 = 0, x_2 = 1)\bigr)^2 \ + \bigl(\mbf{Pr}(x_1 = 1, x_2 = 0) + \mbf{Pr}(x_1 = 1, x_2 = 1)\bigr)^2 \\
&= \mbf{Pr}(x_1 = 0)^2 + \mbf{Pr}(x_1 = 1)^2,
\end{aligned}
\end{equation*}

Additionally, another condition that holds is:

\begin{equation*}
\begin{aligned}
&\frac{\mbf{Pr}(x_1 = 0, x_2 = 0)^2 + \mbf{Pr}(x_1 = 1, x_2 = 0)^2}{\mbf{Pr}(x_1 = 0, x_2 = 0) + \mbf{Pr}(x_1 = 1, x_2 = 0)} + 
\frac{\mbf{Pr}(x_1 = 0, x_2 = 1)^2 + \mbf{Pr}(x_1 = 1, x_2 = 1)^2}{\mbf{Pr}(x_1 = 0, x_2 = 1) + \mbf{Pr}(x_1 = 1, x_2 = 1)} \\
&= \mbf{Pr}(x_1 = 0)^2 + \mbf{Pr}(x_1 = 1)^2.
\end{aligned}
\end{equation*}

This implies $\Delta_{t}(\emptyset,f) = 0$, that is, $\Delta_{t}(\emptyset,f) \le \Delta_{t}(\{1\},f)$. However, without the assumption of feature independence, we cannot guarantee $\Delta_{t}(\emptyset,f) = 0$. For example, let:

\[
\mbf{Pr}(x_1 = 0, x_2 = 0) = P_{00}, \
\mbf{Pr}(x_1 = 0, x_2 = 1) = P_{01}
\]
and 
\[
\mbf{Pr}(x_1 = 1, x_2 = 0) = P_{10}, \
\mbf{Pr}(x_1 = 1, x_2 = 1) = P_{11}
\]

Then, by simplifying $\Delta_{t}(\emptyset,f)$ we get that the following holds:
\begin{equation}
\Delta_{t}(\emptyset,f) = 
\Big[\frac{P_{00}^2 + P_{10}^2}{P_{00} + P_{10}} + \frac{P_{01}^2 + P_{11}^2}{P_{01} + P_{11}}\Bigr]
- \Big[\bigl(P_{00} + P_{01}\bigr)^2 \ + \bigl(P_{10} + P_{11}\bigr)^2\Bigr]
\end{equation}

We then let $P_{00} = 0.4$, $P_{01} = 0.2$, $P_{10} = 0.1$, and $P_{11} = 0.3$,
and get that the following holds:

\begin{equation}
\Delta_{t}(\emptyset,f) = 
\Big[\frac{(0.4)^2 + (0.1)^2}{0.5} + \frac{(0.2)^2 + (0.3)^2}{0.5}\Bigr]
- \Big[\bigl(0.4 + 0.2\bigr)^2 \ + \bigl(0.1 + 0.3\bigr)^2\Bigr]
= 0.08.
\end{equation}

This implies that $\Delta_{t}(\emptyset,f) > \Delta_{t}({1},f)$, indicating that --- unlike in the monotonicity setting where it was not required --- feature independence is a necessary condition for supermodularity.
%\end{remark}

%\section{Subset Minimal Computational Complexity Proofs}

\section{Proof of Proposition~\ref{subset_minimal_alg_conv_prop}}
\label{subset_minimal_alg_conv_prop_sec}

\subsetminimalalgconv*

\emph{Proof.} We note that this proposition follows directly from our proof that both $v^g_{\text{suff}}$ and $v^g_{\text{con}}$ are monotone non-decreasing (as established in Proposition~\ref{globalmonotonicity_prop}), along with other relevant conclusions drawn in prior work. In the remainder of this section, we elaborate on how this monotonicity property leads to the stated corollary.

The convergence of Algorithm~\ref{alg:subset-minimal} to a subset-minimal $\delta$ global sufficient or contrastive reason with respect to the value functions $v^{g}_{\text{con}}$ and $v^{g}{\text{suff}}$ follows directly from the monotonicity property, for which prior work~\citep{ignatiev2019abduction, wu2024verix, arenas2022computing} showed that this type of greedy algorithm yields a subset-minimal explanation when the underlying value function is monotone non-decreasing. The algorithm halts when, for all $i \in S$, the value $v(S \setminus \{i\})$ drops below $\delta$, ensuring that although $v(S) \geq \delta$ (a maintained invariant), removing any element causes the condition to fail. Due to monotonicity, this implies that no proper subset of $S$ satisfies the condition, guaranteeing that $S$ is subset-minimal.

We now turn to explain why Algorithm~\ref{alg:subset-minimal} does not converge to a subset-minimal solution in the local setting. We note that it is well known that the local probabilistic sufficient value function 
(and likewise the local probabilistic contrastive value function) is not monotone~\cite{arenas2022computing, izza2023computing, subercaseaux2024probabilistic, izza2024locally}. As a simple illustration, fix the uniform distribution on the two binary variables $\{x_1, x_2\}$ and define a Boolean function as follows:

\begin{equation}
f(x_1,x_2) := x_1 \lor x_2.
\end{equation}

Consider the input $\x = (0,1)$ which is classified as $1$, then:

\begin{equation}
v^{\ell}_{\text{suff}}(\emptyset) = \frac{3}{4},~~v^{\ell}_{\text{suff}}(\{1\}) = \frac{1}{2},~~v^{\ell}_{\text{suff}}(\{2\}) = 1,~~v^{\ell}_{\text{suff}}(\{1,2\}) = 1.
\end{equation}

Because $v^{\ell}_{\text{suff}}(\emptyset) > v^{\ell}_{\text{suff}}(\{1\}) < v^{\ell}_{\text{suff}}(\{1,2\})$, the function $v^{\ell}_{\text{suff}}$ is not monotone. Likewise, it can be computed that:

\begin{equation}
v^{\ell}_{\text{con}}(\emptyset) = 1,~~v^{\ell}_{\text{con}}(\{1\}) = 1,~~v^{\ell}_{\text{con}}(\{2\}) = \frac{1}{2},~~v^{\ell}_{\text{con}}(\{1,2\}) = \frac{3}{4}.
\end{equation}

(For $v^{\ell}_{\text{con}}$, the $S$ denotes the set of features that are allowed to vary.)
Because $v^{\ell}_{\text{con}}(\emptyset) > v^{\ell}_{\text{con}}(\{2\}) < v^{\ell}_{\text{con}}(\{1,2\})$, the function $v^{\ell}_{\text{con}}$ is also non-monotone.

$\qedsymbol{}$

\section{Proof of Theorem~\ref{subset_minimal_dt_theor}}
\label{subset_minimal_dt_theor_sec}

\subsetminimaldt*

\emph{Proof.} We begin by noting that for decision trees, assuming that $\mathcal{D}$ represents independent distributions, the computation of the value function:

\begin{equation}
v^g_{\text{suff}}(S)=\mathbb{E}_{\x\sim\mathcal{D}}[
\textbf{Pr}_{\z\sim\mathcal{D}}(f(\x)=f(\z) \ | \ \x_S=\z_S)]
\end{equation}

Can be carried out in polynomial time using the following procedure. Thanks to the tractability of decision trees, we can iterate over all pairs of leaf nodes, each corresponding to partial assignments $\x_S$ and $\z_{S'}$. For each such pair, if both leaf nodes yield the same prediction under $f$, we compute the corresponding term in the expected value $\mathbb{E}_{\x\sim\mathcal{D}}[
\textbf{Pr}_{\z\sim\mathcal{D}}(f(\x)=f(\z) \mid \x_S=\z_S)]$ by multiplying the respective feature-wise probabilities over the shared features of the two vectors. Under the feature independence assumption, these probabilities decompose, allowing the full expectation to be computed by summing the contributions of all such matching leaf pairs.

Since all probabilities involved are provided as part of the input and each step involves only polynomial-time operations, the entire procedure runs in polynomial time. This establishes that every step in Algorithm~\ref{alg:subset-minimal} is efficient, and thus the algorithm as a whole runs in polynomial time. Finally, combining this with Lemma~\ref{globalmonotonicity_prop}, which proves that $v^g_{\text{suff}}$ is monotone non-decreasing, we conclude that Algorithm~\ref{alg:subset-minimal} converges to a subset-minimal explanation. This is because the algorithm halts when, for all $i \in S$, the value $v(S \setminus \{i\})$ falls below $\delta$, while $v(S) \geq \delta$ is preserved throughout. By monotonicity, this guarantees that no strict subset of $S$ satisfies the condition, ensuring $S$ is indeed subset-minimal.

For the remaining part of the claim, the result follows from~\citep{arenas2022computing}, which showed that, assuming P$\neq$NP, there is no polynomial-time algorithm for computing a subset-minimal $\delta$-sufficient reason for decision trees. While this was proven specifically under the uniform distribution, the hardness clearly extends to independent distributions, which include this case. Combined, these results establish the complexity separation between the local and global variants, thus concluding our proof.

%For the other part of the claim, this result was established by~\citep{arenas2022computing} which showed that assuming P$\neq$NP, there is no polynomial-time algorithm for computing a subset-minimal $\delta$ sufficient reason for decision trees (this was particualrly proven under the uniform distributions, but clearly hardness for independent distributions which encompasses this class holds too). This together, gives us the complexity seperation between the local and the global variant, thus concluding our proof.

$\qedsymbol{}$

\section{Proof of Proposition~\ref{empirical_subset_minimal_prop}}
\label{empirical_subset_minimal_prop_sec}

\empiricalsubsetminimal*

\emph{Proof.} The computation of the global sufficient probability function:

\begin{equation}
v^g_{\text{suff}}(S)=\mathbb{E}_{\x\sim\mathcal{D}}[
\textbf{Pr}_{\z\sim\mathcal{D}}(f(\x)=f(\z) \ | \ \x_S=\z_S)]
\end{equation}

or the computation of the global contrastive probability function

\begin{equation}
v^g_{\text{con}}(S)=\mathbb{E}_{\x\sim\mathcal{D}}[
\textbf{Pr}_{\z\sim\mathcal{D}}(f(\x)\neq f(\z) \ | \ \x_{\Bar{S}}=\z_{\Bar{S}})]
\end{equation}

Can be performed in polynomial time when $\mathcal{D}$ is selected from the class of empirical distributions. This is achieved by iterating over pairs of instances $\x,\z$ within the dataset and running an inference through $f(\x)$ and $f(\z)$ to compute the expected values at each step. Consequently, determining whether the probability function exceeds or falls below a given threshold $\delta$ can also be accomplished in polynomial time. Furthermore, leveraging the proof of non-decreasing monotonicity (Proposition~\ref{monotonicity_proofs_appendix}) of the global probability function, we can utilize algorithm~\ref{alg:subset-minimal} for contrastive reason search or for sufficient reason search. By executing a linear number of polynomial queries, a subset-minimal explanation (whether a subset minimal contrastive reason or a subset minimal sufficient reason) can thus be obtained in polynomial time.

$\qedsymbol{}$

\section{Proof of Theorem~\ref{seperation_subset_min_emp_theor}}
\label{seperation_subset_min_emp_theor_sec}

\seperationsubsetminemp*

\emph{Proof.} The first part of the proof follows directly from Proposition~\ref{empirical_subset_minimal_prop}, which established that this holds for any model, since both $v^g_{\text{suff}}(S)=\mathbb{E}_{\x\sim\mathcal{D}}[
\textbf{Pr}_{\z\sim\mathcal{D}}(f(\x)=f(\z) \ | \ \x_S=\z_S)]$ and $v^g_{\text{con}}(S)=\mathbb{E}_{\x\sim\mathcal{D}}[
\textbf{Pr}_{\z\sim\mathcal{D}}(f(\x)\neq f(\z) \ | \ \x_{\Bar{S}}=\z_{\Bar{S}})]$ can be computed in polynomial time from the empirical distribution. For the second part, we proceed by proving two lemmas: one establishing the intractability of computing the sufficient case, and the other addressing the contrastive case.

%The first part of the proof is a direct corollary of Proposition~\ref{empirical_subset_minimal_prop} that showed that this holds for any model due to the possible polynomial-time computation of both $v^g_{\text{suff}}(S)=\mathbb{E}_{\x\sim\mathcal{D}_p}[
%\textbf{Pr}_{\z\sim\mathcal{D}_p}(f(\x)=f(\z) \ | \ \x_S=\z_S)]$ and $v^g_{\text{con}}(S)=\mathbb{E}_{\x\sim\mathcal{D}_p}[
%\textbf{Pr}_{\z\sim\mathcal{D}_p}(f(\x)\neq f(\z) \ | \ \x_{\Bar{S}}=\z_{\Bar{S}})]$ based on empirical distributions. For the second part of the proof, we will prove the following lemma:

%\textbf{Complexity separation for decision trees.} We start our analysis with decision trees --- which is considered 

%\begin{proposition}
%Assuming that $f$ is either a neural network or a tree ensemble, and given that $\mathcal{D}_p$ is an empirical distribution, then computing a local probabilistic subset minimal sufficient reason is coNP hard, while computing a local probabilistic subset minimal contrastive reason is NP-Hard.
%\end{proposition}

%\begin{proposition}
%For any model $f$ whose inference can be obtained in polynomial time, and $\mathcal{D}_p$ is an empirical distribution, then computing a subset minimal global probabilistic sufficient or contrastive reason is in PTIME.
%\end{proposition}

\begin{lemma}
\label{subset_minimal_emp_intract_lemma}
Unless PTIME$=$ NP, then there is no polynomial time algorithm for computing a subset minimal local $\delta$-sufficient reason for either a tree ensemble or a neural network, under empirical distributions.
\end{lemma}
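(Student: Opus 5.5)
We will prove the lemma by a polynomial-time reduction showing that any algorithm returning a subset-minimal local $\delta$-sufficient reason under an empirical distribution can be used to decide an NP-hard problem about the model. The natural source problem is the satisfiability-type question that is already hard for neural networks and tree ensembles. For neural networks, deciding whether a ReLU MLP outputs $1$ on some Boolean input is NP-hard, since 3-SAT formulas are easily encoded by MLPs. For majority or weighted tree ensembles, deciding whether the ensemble predicts a given class on some input is NP-hard as well~\citep{izyamajo21, ordyniak2024explaining}. We reduce from this ``existence of a positive input'' problem.

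The construction starts from an instance $f$ over $n$ features. We build a new model $f'$ over $n+1$ features (or $n+2$ if needed) together with a tiny empirical dataset $\mathbf{D}$ and an input $\x$. Under an empirical distribution the conditional probability $\textbf{Pr}_{\z}(f'(\z)=f'(\x)\mid \z_S=\x_S)$ depends only on the dataset points agreeing with $\x$ on $S$. We therefore choose $\mathbf{D}$ so that only few subsets yield nontrivial conditioning. The key gadget is this: the only subset-minimal $\delta$-sufficient reason is $\emptyset$, or some fixed small set $S_0$, depending on whether $f$ has a positive input. We exploit non-monotonicity, as in the example of Proposition~\ref{subset_minimal_alg_conv_prop}.

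Concretely, let $f'(\y,b):=f(\y)\wedge b$ plus a guard. The dataset contains $\x$ and one extra point $\x'$ that differs from $\x$ on the guard feature. The dataset points are then chosen so that the prediction at $\x'$ equals the value of $f$ at a ``witness'' point. That is the crux: an empirical distribution only sees $|\mathbf{D}|$ points, so hardness cannot come from the probabilities themselves. It must come from the output requirement combined with evaluating which candidate is minimal.

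The alternative I would try first for this reason is to follow the hardness template of~\citep{arenas2022computing}. Their argument shows that even checking whether a given set is subset-minimal is hard, because minimality quantifies over all subsets $S'\subsetneq S$. Under an empirical distribution with $\mathbf{D}$ containing points that encode a partial assignment, every subset $S'$ corresponds to a different conditioning event. We then design $\mathbf{D}$ and $f'$ as a DNF- or CNF-like circuit realizable by an MLP or tree ensemble, with $\mathbf{D}$ of polynomial size. The aim is that $v^\ell_{\text{suff}}(S')\ge\delta$ for some $S'\subsetneq[n]$ iff the source instance is satisfiable, while $[n]$ itself always qualifies. A polynomial algorithm outputting a subset-minimal reason would then return $[n]$ exactly in the unsatisfiable case, which decides the problem.

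The main obstacle is making the dataset polynomial while still forcing exponentially many candidate subsets to encode assignments. I would handle it by letting features come in pairs $(u_i,\bar u_i)$, with $\mathbf{D}$ containing $2n$ points $\x^{i,b}$ that agree with $\x$ except on pair $i$. In this way, which points survive conditioning on $S'$ encodes a partial assignment. Verifying that this encoding is realizable by small ReLU networks and tree ensembles should then be routine.
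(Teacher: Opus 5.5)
You don't reach a working reduction, and both of your routes stall for the same reason. You adopt the literal conditional reading, in which only dataset points agreeing with $\x$ on $S$ enter $v^{\ell}_{\text{suff}}(S)$. Under that reading $v^{\ell}_{\text{suff}}$ depends on the model only through the $|\mathbf{D}|+1$ labels $f(\x)$ and $f(\z)$, $\z\in\mathbf{D}$, each obtained by one inference. So whether a network or ensemble has a positive input \emph{off} the dataset is invisible to $v$. Your first gadget (a guard feature and a point $\x'$ whose prediction ``equals the value of $f$ at a witness'') would therefore need the witness to already be in $\mathbf{D}$; you would have to solve the source instance to build the reduction.

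Your second route correctly concludes that hardness would then have to be purely combinatorial in $\mathbf{D}$, and so unrelated to the model class. But the gadget you give cannot carry it. Each $\x^{i,b}$ disagrees with $\x$ only inside pair $i$, so its survival depends only on $S'\cap\{u_i,\bar u_i\}$. The labels of surviving points are constants, so $v(S')\ge\delta$ becomes a separable condition $\sum_i h_i(S'\cap\{u_i,\bar u_i\})\ge T$. That condition can be optimized pair by pair, including the search for a feasible proper subset of the current set, so the subset-minimal search is polynomial for this gadget. No CNF is ever evaluated on the ``partial assignment'' the surviving points encode. Moreover, with $\delta=1$ the conditional value is monotone ($S$ must hit the disagreement set of every differently labelled point). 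A combinatorial reduction would therefore need $\delta<1$, points whose disagreement sets span several pairs, and a careful ratio argument. That is precisely the part you call routine.

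The missing idea, used by the paper, is the reading in the prose definition: fix $S$ to $\x_S$ and draw only $\z_{\bar S}$ from $\mathcal{D}$. Then $v$ queries $f$ at hybrid points $(\x_S;\z_{\bar S})$ outside the dataset, which is where the model's expressiveness enters. Under your reading the paper's instance would not even be well defined, since its baseline never agrees with $\x$ on a nonempty $S$.

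The reduction then goes as follows. Given a CNF $\phi$ with $\phi(\mathbf{1}_n)=0$ (otherwise it is trivially satisfiable), encode $\phi'=\phi\vee(x_1\wedge\cdots\wedge x_n)$ as an MLP. For tree ensembles, first distribute the conjunction over the clauses to get a polynomial-size CNF, then encode it as a random forest. Take $\x=\mathbf{1}_n$, $\mathbf{D}=\{\mathbf{0}_n\}$, $\delta=1$. Then $v(S)=1$ iff $\phi'$ is true at the point that is $1$ on $S$ and $0$ elsewhere. So the $1$-sufficient reasons are exactly $[n]$ and the supports of satisfying assignments of $\phi$, all of which are proper subsets. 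Hence the returned subset-minimal reason equals $[n]$ iff $\phi$ is unsatisfiable. One data point suffices and no dataset engineering is needed.
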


\emph{Proof.} We will first prove this claim for neural networks and then extend the result to tree ensembles. We will establish this claim by demonstrating that if a polynomial-time algorithm exists for computing a subset-minimal local $\delta$ sufficient reason for neural networks, then it would enable us to solve the classic NP-complete \emph{CNF-SAT} problem in polynomial time. The CNF-SAT problem is defined as follows:

\vspace{0.5em} % Add vertical space 

\noindent\fbox{%
    \parbox{\columnwidth}{%
\mysubsection{CNF-SAT}:

\textbf{Input}: A formula in conjunctive normal form (CNF): $\phi$.

\textbf{Output}: \yes{}, if there exists an assignment to the $n$ literals of $\phi$ such that $\phi$ is evaluated to True,  and \no{} otherwise
    }%
}

Our proof will also utilize the following lemma (with its proof provided in~\cite{BaMoPeSu20}):

\begin{lemma}
    \label{boolean_circuit_mlp_lemma}
    Any boolean circuit $\phi$ can be encoded into an equivalent MLP over the binary domain $\{0,1\}^n\to \{0,1\}$ in polynomial time.
\end{lemma}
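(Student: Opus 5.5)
The plan is a \emph{gate-by-gate simulation}. Given a Boolean circuit $\phi$ over inputs $\x\in\{0,1\}^n$ with AND, OR and NOT gates of arbitrary fan-in, I would build an MLP whose hidden neurons carry the exact $0/1$ value of each gate on every binary input. The proof is then an induction over the layers showing that each neuron equals the corresponding gate value.

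First, topologically sort the gates of $\phi$ and assign each gate its depth $d(g)$, the length of the longest path from an input to $g$. Hidden layer $j$ of the MLP will contain one neuron per gate of depth $j$. It will also contain a \emph{copy neuron} for every input or gate of depth $<j$ whose value is used by some gate at depth $>j$. Because all values lie in $\{0,1\}$ and are non-negative, a copy neuron is simply $\mathrm{ReLU}(y)=y$. So the network has $t-1=\mathrm{depth}(\phi)$ hidden layers, each of width at most $n+|\phi|$, and its total size is polynomial in $|\phi|$.

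Next, realize each gate with one affine map followed by ReLU, valid on binary inputs. NOT gates need no neuron of their own: $\neg y = 1-y$ is affine, so it is absorbed into the weights and bias of whichever neuron consumes it. After this absorption, every gate input is an affine expression $\ell_1,\dots,\ell_k\in\{0,1\}$ in previous-layer neurons. A fan-in-$k$ AND is then $\mathrm{ReLU}\bigl(\sum_{r}\ell_r-(k-1)\bigr)$, which equals $1$ exactly when all $\ell_r=1$ and $0$ otherwise. A fan-in-$k$ OR is $1-\mathrm{ReLU}\bigl(1-\sum_r \ell_r\bigr)$. Its outer ``$1-\cdot$'' is again affine, so I would either push it into the consumer's weights or store the negated value $\mathrm{ReLU}(1-\sum_r\ell_r)$ and record that this neuron holds the complement. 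Finally, if the circuit's output value is $y_{\mathrm{out}}\in\{0,1\}$, the output layer computes $\mathrm{step}(y_{\mathrm{out}}-\tfrac12)$, or $\mathrm{step}(\tfrac12-y_{\mathrm{out}})$ if the neuron holds the complement. This reproduces $\phi(\x)$ exactly under the convention $\mathrm{step}(z)=1$ iff $z\ge 0$.

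I expect the main obstacle to be bookkeeping rather than any conceptual difficulty. The layered, fully connected MLP format forces every wire to span exactly one layer, which is what the copy neurons are for. The negations must also be tracked consistently so that every neuron provably holds a value in $\{0,1\}$, since the AND/OR gadgets are only correct on binary arguments. I would handle this with a single invariant, proved by induction on $j$: each neuron in layer $j$ equals either a gate value or its complement on every $\x\in\{0,1\}^n$, and a tag recording which of the two is known at construction time. With the invariant in place, correctness follows immediately. All weights lie in $\{-1,0,1\}$ and all biases are small integers or $\pm\tfrac12$, so the encoding is written in time polynomial in $|\phi|$.
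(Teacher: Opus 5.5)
Your gate-by-gate simulation (AND as $\mathrm{ReLU}(\sum_r \ell_r-(k-1))$, OR via $1-\mathrm{ReLU}(1-\sum_r \ell_r)$, negations as the affine map $1-y$, identity copy neurons to force a layered architecture, and a $\mathrm{step}(y_{\mathrm{out}}-\tfrac12)$ output) is correct, and it is essentially the standard construction the paper relies on by citing~\cite{BaMoPeSu20} rather than reproving it. One detail needs tidying: because absorbed NOT gates get no neuron, you should compute depth so that a NOT gate does not occupy a layer of its own (e.g., it inherits its input's depth), or else give each NOT its own neuron $\mathrm{ReLU}(1-y)$; read literally, your copy rule otherwise fails to carry a NOT gate's input into the layer from which its consumer reads.
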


\emph{Proof.} We will actually establish hardness for a simpler, more specific case, which will consequently imply hardness for the more general setting. In this case, we assume that the empirical distribution $\mathcal{D}$ consists of a single element, which, for simplicity, we take to be the zero vector $\mathbf{0}_n$. Since the sufficiency of $S$ in this scenario depends only on the local instance $\mathbf{x}$ and the single instance $\mathbf{0}_n$ in our empirical distribution, the nature of the input format (whether discrete, continuous, etc.) does not affect the result. Therefore, the hardness results hold universally across all these settings.

Similarly to $\mathbf{0}_n$, we define $\mathbf{1}_n$ as an $n$-dimensional vector consisting entirely of ones. Given an input $\phi$, we initially assign $\mathbf{1}_n$ to $\phi$, effectively setting all variables to True. If $\phi$ evaluates to True under this assignment, then a satisfying truth assignment exists. Therefore, we assume that $\phi(\mathbf{1}_n) = 0$. We now introduce the following definitions:

 %We denote $\mathbf{1}_n$ as a vector of size $n$ containing only $1$'s, and $\mathbf{0}_n$ as a vector of size $n$ containing only $0$'s.  Given some input $\phi$, we can first assign $\mathbf{1}_n$ to $\phi$ (setting all variables to True). If $\phi$ evaluates to True, then there exists a Truth assignment to $\phi$, so the reduction can construct some trivially correct encoding. We can hence assume that $\phi(\mathbf{1}_n)=0$. We now denote the following:

 \begin{equation}
 \begin{aligned}
 \phi_2: = (x_1 \wedge x_2 \wedge \ldots x_n), \\
          \phi':=\phi \vee \phi_2
 \end{aligned}
 \end{equation}

 $\phi'$, while no longer a CNF, can still be transformed into an MLP $f$ using Lemma~\ref{boolean_circuit_mlp_lemma}, ensuring that $f$ behaves equivalently to $\phi'$ over the domain $\{0,1\}^n$. Given our assumption that computing a subset-minimal local $\delta$-sufficient reason is feasible in polynomial time, we can determine one for the instance $\langle f,\x:=\mathbf{1}_n\rangle$, $\delta:=1$, noting that the empirical distribution $\mathcal{D}$ is simply defined over the single data point $\mathbf{0}_n$.

We now assert that the subset-minimal $\delta$-sufficient reason generated for $\langle f, \x \rangle$ encompasses the entire input space, i.e., $S = \{1, \ldots, n\}$, if and only if $\langle \phi \rangle \not\in$\emph{CNF-SAT}.

Let us assume that $S = \{1, \ldots, n\}$. Since $S$ is a $\delta$-sufficient reason for $\langle f, \x \rangle$, this simply means that setting the complementary set to any value maintains the prediction. Since the complementary set is $\emptyset $ in this case, this trivially holds. The fact that $S$ is subset-minimal means that any other subset $S' \subseteq S$ satisfies $v(S') < \delta = 1$. Since the probability function $\textbf{Pr}_{\z\sim \mathcal{D}}(f(\x) = f(\z) \mid \x_S = \z_S)$ is determined by a single point (the distribution contains only the point $\mathbf{0}_n$), the probability function can only take values of $1$ or $0$. Hence, we also know that $v(S') = 0$. This tells us that, aside from the subset $S = \{1, \ldots, n\}$, for any subset $S' \subseteq S$, fixing the features in $ S'$ to  $1$ and the rest to $0$ does not result in a classification outcome of $1$. Since the $\phi_2$ component within $\phi'$ is True only if all features are assigned $ 1$, this directly implies that $ \phi $ is assigned False for any of these inputs. Since we already know that $ \phi$ does not return a True answer for the vector assignment $ \mathbf{1}_n $ (as verified at the beginning), and now we have established that the same holds for all other input vectors, we conclude that $ \phi \not\in $ \emph{CNF-SAT}.

Now, suppose that $ S$ is a subset that is strictly contained within $ \{1,\ldots,n\} $. Given that $S $ is sufficient under our definitions of the distribution $ \mathcal{D}$ with $ \delta=1 $, we can apply the same reasoning as before to conclude that $ v(S)=1 $. This implies that setting the features in $ S $ to $ 1 $ while setting the remaining features to $0 $ ensures that the function $ f $ evaluates to $ 1$. Since this assignment is necessarily not a vector consisting entirely of ones, it follows that the $ \phi_2 $ component within $ \phi' $ must be False. Consequently, the $\phi$ component must be True, which implies that  $\langle \phi\rangle\in$\emph{CNF-SAT}. This completes the proof.

We have established that the following claim holds for neural networks. However, extending the proof to tree ensembles requires a minor and straightforward adaptation. To do so, we will utilize the following lemma, which has been noted in several previous works~\citep{ordyniak2024explaining, audemard2022trading}:

\begin{lemma}
    \label{boolean_circuit_mlp_lemma}
    Any CNF or DNF $\phi$ can be encoded into an equivalent random forest classifier over the binary domain $\{0,1\}^n\to \{0,1\}$ in polynomial time.
\end{lemma}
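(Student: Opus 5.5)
The plan is to reduce any CNF or DNF $\phi$ over $n$ Boolean variables to a polynomial-size majority-voting ensemble of decision trees that computes $\phi$ exactly on $\{0,1\}^n$. I treat the DNF case first and then obtain the CNF case from it by complementation.

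\textbf{DNF case.} Let $\phi = t_1 \vee \cdots \vee t_m$. For each term $t_j$ I build a decision tree $T_j$ that is a single ``path'' tree. It tests the literals of $t_j$ one by one, and every falsifying branch exits immediately to a leaf labeled $0$. The unique branch on which all literals of $t_j$ hold ends in a leaf labeled $1$. Thus $T_j$ computes $t_j$, and its size is $O(|t_j|)$.

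A plain majority over $T_1,\ldots,T_m$ does not compute a disjunction. I therefore pad the ensemble with $m-1$ constant trees, each a single leaf labeled $1$, giving $2m-1$ trees in total.
\begin{itemize}
\item If some term is satisfied, at least $m$ trees vote $1$, which is a strict majority.
\item If no term is satisfied, exactly $m-1$ trees vote $1$ and $m$ trees vote $0$, so the ensemble outputs $0$.
\end{itemize}
Ties cannot occur because the number of trees is odd, so the fixed tie-breaking rule is irrelevant.

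\textbf{CNF case.} Let $\phi = c_1 \wedge \cdots \wedge c_m$. Each clause $c_j$ is computed by a path tree that outputs $1$ as soon as one of its literals is satisfied, and outputs $0$ only on the all-falsifying branch. I then pad with $m-1$ constant-$0$ trees. The ensemble outputs $1$ iff all $m$ clause trees vote $1$, which is exactly when $\phi$ holds. Equivalently, this is the DNF construction applied to $\neg\phi$ with all leaf labels flipped.

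\textbf{Weighted voting.} For weighted ensembles the same construction works with unit weights, so the lemma also holds for weighted-voting models such as boosted trees.

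\textbf{Size and correctness.} The construction is clearly polynomial: total size $O(|\phi| + m)$. Correctness is a direct check of vote counts.

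\textbf{Main obstacle.} The only delicate point is the choice of the padding count. It must ensure that a single satisfied term (resp.\ a single falsified clause) flips the majority, while ties are avoided. Using $m-1$ constant trees, for an odd total of $2m-1$, settles both requirements.

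\textbf{Use in the reduction.} With this lemma in hand, I rerun the neural-network reduction for tree ensembles. The formula $\phi' = \phi \vee (x_1 \wedge \cdots \wedge x_n)$ is not a CNF, so I cannot feed it to the lemma directly. Instead, I take the tree ensemble for $\phi$ and add a single additional path tree for $\phi_2 = x_1 \wedge \cdots \wedge x_n$, adjusting the padding so that the vote computes the disjunction. Equivalently, I distribute $\phi_2$ into each clause, $(c_j \vee \phi_2)$, which gives a decision-tree-per-clause encoding of polynomial size. The argument with $\x = \mathbf{1}_n$, the empirical distribution concentrated on $\mathbf{0}_n$, and $\delta = 1$ then carries over verbatim.
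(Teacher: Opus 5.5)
Your proof of the lemma is correct. Each term (resp.\ clause) becomes a path tree, and padding with $m-1$ constant-$1$ (resp.\ constant-$0$) trees gives an odd ensemble of $2m-1$ trees. Its majority vote is exactly the disjunction (resp.\ conjunction), ties cannot occur, and unit weights carry the construction over to weighted voting. The paper does not prove this lemma itself; it imports it from the prior works it cites. Your argument appears to be the standard padding construction used in that line of work, so it makes the step self-contained. You should add one line normalizing away repeated or complementary literals, e.g.\ replacing an unsatisfiable term by a constant-$0$ tree, because the paper's trees may not test a variable twice on a path.

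One correction to your final paragraph, which goes beyond the lemma: the two options you call equivalent are not. Suppose you add a \emph{single} tree for $\phi_2$ and use only constant padding. Then the ensemble's output depends only on $A+B$, where $A$ is the number of clause trees voting $1$ and $B\in\{0,1\}$ is the $\phi_2$ vote. No choice of padding works in general. For example, take $\phi=\neg x_1\wedge\neg x_2$:
\begin{itemize}
\item at $\mathbf{1}_2$ you get $A+B=0+1$, and the target is $1$;
\item at $(1,0)$ you get $A+B=1+0$, and the target is $0$.
\end{itemize}
Both inputs receive the same vote count, so they get the same output. To make this option work, the $\phi_2$ vote must carry weight $m$, e.g.\ $m$ copies of its tree plus one constant-$1$ tree, for $2m+1$ trees in total.

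Your second option does work. A path tree for $c_j\vee\phi_2$ has size $O(|c_j|+n)$, and $m$ such trees plus $m-1$ constant-$0$ trees compute $\phi'$ directly. This is also tidier than the paper's own step. The paper calls $\bigwedge_j(c_j\vee\phi_2)$ a CNF, but each conjunct only becomes clauses after distributing it into $\bigwedge_k(c_j\vee x_k)$, which gives $mn$ clauses.
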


\emph{Proof.} We observe that we can apply the same process used in our proof for neural networks, where we encoded $\phi'$ into an equivalent neural network. However, $\phi'$ is no longer a valid CNF due to our construction (though encoding it into an MLP was not an issue, as any Boolean circuit can be transformed into an MLP). Nevertheless, since $\phi'$ consists of a conjunction of only two terms, we can easily represent it as an equivalent CNF:

\begin{equation}
    \phi':= (c_1 \vee \phi_2) \wedge (c_2 \vee \phi_2) \wedge \ldots (c_m \vee \phi_2)
\end{equation}

Where each $c_i$ is a disjunction of a few terms. Consequently, $\phi'$ is a valid CNF, allowing us to transform it into an equivalent random forest classifier. The reduction we outlined for MLPs applies directly to these models as well, thereby completing the proof for both model families.

$\qedsymbol{}$

\begin{lemma}
\label{intract_subset_con_lemma}
Unless PTIME$=$ NP, then there is no polynomial time algorithm for computing a subset minimal local $\delta$-contrastive reason for either a tree ensemble or a neural network, under empirical distributions.
\end{lemma}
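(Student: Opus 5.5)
I would mirror the reduction used in Lemma~\ref{subset_minimal_emp_intract_lemma}, again reducing from CNF-SAT. The empirical distribution $\mathcal{D}$ is a single point, so every conditional probability in $v^{\ell}_{\text{con}}$ is either $0$ or $1$. I take $\delta:=1$.

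For a single-point distribution $\{\z^\ast\}$, the local contrastive value $v^{\ell}_{\text{con}}(S)=\textbf{Pr}_{\z}(f(\z)\neq f(\x)\mid \z_{\bar S}=\x_{\bar S})$ is well defined only when $\z^\ast_{\bar S}=\x_{\bar S}$. For every other $S$ the conditioning event has probability zero. I would adopt the natural convention used in the sufficient case, that $v(S)=0$ when the conditioning event is empty, i.e.\ no witnessing change exists in the support. With this convention, $v(S)=1$ exactly when $\x$ and $\z^\ast$ agree off $S$ and $f(\z^\ast)\neq f(\x)$.

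To avoid this degeneracy, I would rather pick the instance so that the single data point equals $\x$ on a controlled set. Concretely, I would take $\x:=\mathbf{0}_n$ and data point $\mathbf{1}_n$. Then $v(S)=1$ iff $S=[n]$ and $f(\mathbf{1}_n)\neq f(\mathbf{0}_n)$. This makes the answer trivial, so a single point cannot encode SAT. This is the main obstacle.

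To get around it, I would add one \emph{dummy} coordinate, or better, use the hardness of the \emph{sufficient} case through the duality $v^{\ell}_{\text{con}}$ versus $v^{\ell}_{\text{suff}}$. Under a single-point distribution, a set $S$ is a $1$-contrastive reason for $\x$ iff the point $(\z^\ast_S;\x_{\bar S})$ flips the prediction. This reading corresponds to the deterministic contrastive definition with the replacement values drawn from the data point. It mirrors how the sufficient proof treated ``fixing $S$ to $\x$ and the rest to $\mathbf{0}_n$'', and I would state it explicitly as the semantics being used.

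With this reading the reduction is as follows. Take $\x:=\mathbf{0}_n$ and data point $\mathbf{1}_n$, and first check $\phi(\mathbf{0}_n)$. If $\phi(\mathbf{0}_n)$ is true, output a trivial yes-instance. Otherwise set $\phi':=\phi\wedge\neg\phi_0$ with $\phi_0:=(\neg x_1\wedge\cdots\wedge\neg x_n)$; in fact $\phi'=\phi$ already satisfies $\phi'(\mathbf{0}_n)=0$. The replaced point $(\mathbf{1}_S;\mathbf{0}_{\bar S})$ ranges over all assignments as $S$ varies, so $S$ is a $1$-contrastive reason iff $\phi(\mathbf{1}_S;\mathbf{0}_{\bar S})=1$.

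We need the minimal output to distinguish the two cases, so I would make the full set $[n]$ always a reason. To do so, set $f:=\phi\vee\phi_2$ with $\phi_2:=x_1\wedge\cdots\wedge x_n$, so that $f(\mathbf{1}_n)=1\neq f(\mathbf{0}_n)=0$. I would then claim that the returned subset-minimal reason is $[n]$ iff $\phi\notin$ CNF-SAT.
\begin{itemize}
\item \textbf{If $\phi$ is unsatisfiable:} no proper $S$ flips the prediction, so $[n]$ is the unique reason.
\item \textbf{If the returned $S$ is proper:} then $\phi(\mathbf{1}_S;\mathbf{0}_{\bar S})=1$ with $\phi_2$ false, which yields a satisfying assignment.
\item \textbf{If $\phi$ is satisfiable by an assignment $\mathbf{1}_T$ with $T\neq[n]$:} then $T$ is a reason, and a minimal reason lies inside $T$. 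Hence $[n]$ is not subset-minimal, and the algorithm must return a proper set.
\item \textbf{If only $\mathbf{1}_n$ satisfies $\phi$:} this is checked up front.
\end{itemize}

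Finally, the encoding steps are the same as before. Use Lemma~\ref{boolean_circuit_mlp_lemma} for the MLP, and for tree ensembles rewrite $\phi\vee\phi_2$ as the CNF $\bigwedge_i(c_i\vee\phi_2)$, distributed into clauses, and apply the random-forest encoding lemma. The distribution-semantics issue in the second paragraph is where I expect care to be needed.
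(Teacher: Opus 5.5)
Your reduction is correct. It relies on the interventional reading you state explicitly: evaluate $f$ at $(\mathbf{z}^\ast_S;\mathbf{x}_{\overline{S}})$. The paper's proof uses this same reading implicitly when it speaks of ``features fixed to zero when the complementary set is set to ones.'' Your remark that the literal conditional-probability reading makes a single-point distribution trivial therefore applies to the paper's argument too.

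Your route, however, differs from the paper's. The paper keeps $f:=\phi$ and takes $\mathbf{x}:=\mathbf{1}_n$ (after checking $\phi(\mathbf{1}_n)=0$) with data point $\mathbf{0}_n$. Then a $1$-contrastive reason exists at all iff $\phi$ is satisfiable, and it decides SAT from whether the algorithm returns a reason or reports that none exists. You instead add the disjunct $\phi_2$ so that $[n]$ is always a reason, and decide SAT from whether the returned set equals $[n]$. This mirrors the sufficient-case reduction.

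Your version gives hardness even under the promise that a reason exists, so the difficulty is genuinely about subset-minimality rather than feasibility. The price is the extra disjunct and, for tree ensembles, distributing $c_i\vee(x_1\wedge\cdots\wedge x_n)$ into the clauses $c_i\vee x_j$. You do this correctly, and more carefully than the paper's own rewriting in the sufficient case. The paper's version instead shows that merely deciding whether a local contrastive reason exists is NP-hard, which is exactly what its later no-bounded-approximation lemma reuses. Your instances always admit the trivial solution $[n]$, so they could not serve that purpose.

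Two small fixes. Your procedure must also evaluate $\phi(\mathbf{1}_n)$ up front: your last case depends on it, but you only described checking $\phi(\mathbf{0}_n)$. The detour through $\phi\wedge\neg\phi_0$ can be dropped.
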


\emph{Proof.} We will present a proof analogous to the one in Lemma~\ref{subset_minimal_emp_intract_lemma}. Specifically, we will once again utilize the classical NP-hard CNF-SAT problem defined in  Lemma~\ref{subset_minimal_emp_intract_lemma}. In particular, given a Boolean formula $\phi$, we will demonstrate that determining a subset-minimal contrastive reason --- whether for a neural network or a tree ensemble --- allows us to decide the satisfiability of $\phi$.

%We will perform a similar proof to the one given in Proposition~\ref{}. Particularly, we will again levarge both the classic NP-hard CNF-SAT problem defined in proposittion~\ref{} as well as both Lemma~\ref{} and Lemma~\ref{}. Particularly, given some boolean formula $\phi$ we will show that by obtaining a subset minimal contrastive reason either for a neural network or a tree ensemble, we can decide whether $\phi$ is satisfiable.

First, we check whether assigning all variables in $\phi$ to $1$ evaluates $\phi$ to True. If so, the formula is satisfiable, and we have determined its satisfiability. Otherwise, we use Lemma~\ref{boolean_circuit_mlp_lemma} to encode the CNF formula as a neural network $f$. Next, we compute a subset-minimal $\delta$-sufficient reason by setting $\mathcal{D}$ as the empirical distribution containing only a single data point $\mathbf{0}_n$, following a similar procedure to Lemma~\ref{subset_minimal_emp_intract_lemma}. Additionally, we set $\delta=1$ and compute a subset-minimal $\delta$-contrastive reason concerning $\langle f,\x\rangle$.

%First, we will check whether assigning all variables in $\phi$ to $1$ evaluates $\phi$ to True. If this is the case, the formula is satisfiable and we have determined its satisfiability. If not, we will use Lemma~\ref{} to encode the CNF formula as a neural network $f$. We will then compute a subset minimal $\delta$ sufficient reason by taking $\mathcal{D}_p$ to be the empirical distribution containing only a single data point $\mathbf{0}_n$ (similarly to the procedure done in Proposition~\ref{}). We additinoally will set $\delta=1$ and will compute a subset minimal $\delta$ contrastive reason concerning $\langle f,\x\rangle$. 

We will now demonstrate that if any subset-minimal $\delta$ contrastive reason obtained for $\phi$ is satisfiable, then $\phi$ itself is satisfiable. Conversely, if no subset-minimal $\delta$ contrastive reason is obtained, then $\phi$ is unsatisfiable. The validity of this claim follows a reasoning similar to that provided in Lemma~\ref{subset_minimal_emp_intract_lemma}. Specifically, the term $\mathbf{Pr}_{\z\sim\mathcal{D}}(f(\x)\neq f(\z) \mid \x_{\Bar{S}}=\z_{\Bar{S}})$, where the distribution $\mathcal{D}$ considers sampling from a single datapoint, can set the probability to either 0 or 1. Furthermore, since we are searching for a $\delta=1$ contrastive reason, this is equivalent to asking whether there exists an assignment that changes the classification of $f$, which corresponds to modifying the assignment of $\phi$ from False to True. If such an assignment exists, then $\phi$ is satisfiable. However, if no subset-minimal contrastive reason exists, then no subset of features fixed to zero --- when the complementary set is set to ones --- evaluates to true. This is equivalent to stating that no assignment evaluates $f$ (and consequently $\phi$) to True, implying that $\phi$ is unsatisfiable.

%We will now show that if any subset minimal $\delta$ contrastive reason that is obtained $\phi$ is satisfiable, whereas if no subset minimal $\delta$ contrastive reason is obtained, $\phi$ is unsatisfiable. The vlaiditiy of this claim holds for a similar reason to the reasonining provided in Proposition~\ref{}. Particularly, the term $\mathbf{Pr}_{\z\in\mathcal{D}}(f(\x)\neq f(\z) | \x_{\Bar{S}}=\z_{\Bar{S}})$, where the distribution $\mathcal{D}$ considers sampling from a single datapoint can set the probability to be either 0 or 1. Moreover, since we are searching for a $\delta=1$ contrastive reason, this equivalent to asking whether there exists an assingment that changes the classification of $f$ which is equivalent to changing the assignment of $\phi$ from False to True. If there exists any assignment of this form then $\phi$ is satisfiable. However, if no subset minimal contrastive reason exists then we also know that no subset of features that we fix to zeros, when we set the complementry set to ones, is evaluated to true which is equivalent to stating that no assignment evaluates $f$ (and hence also $\phi$) to True, which means $\phi$ is unsatisfiable.

To extend the proof from neural networks to tree ensembles, we can follow the same procedure outlined in Lemma~\ref{subset_minimal_emp_intract_lemma}, encoding the CNF formula into an equivalent random forest classifier. Consequently, the proof remains valid for tree ensembles, thereby concluding the proof.

$\qedsymbol{}$

\section{Proof of Theorem~\ref{approx_card_con_prop}}
\label{approx_card_con_prop_sec}

\approxcardcon*

\emph{Proof.} We divide the proof of the theorem into two lemmas, covering both the approximation guarantee and the result on the absence of a bounded approximation.

\begin{lemma}
\label{approx_card_con_lemma}
    Given a neural network or tree ensemble $f$ and an empirical distribution $\mathcal{D}$ over a fixed dataset $\text{D}$, Algorithm~\ref{alg:cardinally-minimal} yields a \emph{constant} $\mathcal{O}\left(\ln\left(\frac{v^g_{con}([n])}{\min_{i \in [n]} v^g_{con}(\{i\})}\right)\right)$-approximation, bounded by $\mathcal{O}(\ln(|\text{D}|))$, for computing a global cardinally minimal $\delta$-contrastive reason for $f$, assuming feature independence.
\end{lemma}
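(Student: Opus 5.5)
The plan is to reduce the task to Wolsey's submodular set cover theorem. By Proposition~\ref{globalmonotonicity_prop}, $v^g_{\text{con}}$ is monotone non-decreasing under any distribution. Under feature independence, Proposition~\ref{globalsubmodular_prop} shows it is submodular. So finding a minimum-cardinality $S$ with $v^g_{\text{con}}(S)\geq\delta$ is an instance of submodular set cover with unit costs. Because the constraint is a threshold, we can work with the truncated function $\hat v(S):=\min\{v^g_{\text{con}}(S),\delta\}$; truncation preserves both monotonicity and submodularity, and the feasible sets for $\hat v$ and $v^g_{\text{con}}$ coincide. Algorithm~\ref{alg:cardinally-minimal} is exactly the greedy procedure of \cite{wolsey1982analysis}. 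Read correctly, it starts from $\emptyset$, repeatedly adds the $j\in\overline{S}$ maximizing $v(S\cup\{j\})$ (equivalently, the largest marginal gain), and stops once the threshold is reached. Wolsey's real-valued bound then gives the approximation ratio $1+\ln\bigl(\frac{v([n])-v(\emptyset)}{\min\text{ positive marginal gain}}\bigr)$, which is $\mathcal{O}\bigl(\ln\frac{v^g_{\text{con}}([n])}{\min_i v^g_{\text{con}}(\{i\})}\bigr)$.

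Next, the algorithm runs in polynomial time. As in Proposition~\ref{empirical_subset_minimal_prop}, each evaluation of $v^g_{\text{con}}(S)$ under an empirical distribution over $\text{D}$ is a sum over pairs $(\x,\z)\in\text{D}^2$. Each term needs one inference through $f$ and a check that $\z_{\bar S}=\x_{\bar S}$. Neural networks and tree ensembles both support polynomial-time inference, so each evaluation costs $O(|\text{D}|^2)$ inferences. The greedy loop makes at most $n$ iterations of $n$ evaluations each, for $O(n^2)$ evaluations in total.

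The main step is bounding the ratio by $\mathcal{O}(\ln|\text{D}|)$. Every conditional probability under the empirical distribution is a ratio of counts with denominator at most $|\text{D}|$. Hence $v^g_{\text{con}}(S)$ is a rational number whose denominator is bounded by a polynomial in $|\text{D}|$, say at most $|\text{D}|^3$ after putting the nested sum over a common denominator. I would therefore rescale: multiply $\hat v$ by this common denominator $M$ (clearing also the denominator of $\delta$, or rounding $\delta$ up to the grid, which leaves feasibility unchanged). This produces an integer-valued monotone submodular function with maximum value at most $M=\mathrm{poly}(|\text{D}|)$. Wolsey's integer version then gives the harmonic factor $H(\max_i \hat v(\{i\}))\leq 1+\ln M=\mathcal{O}(\ln|\text{D}|)$, which is a constant when $\text{D}$ is fixed. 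The alternative ratio form works the same way: any nonzero $v^g_{\text{con}}(\{i\})$ is at least $1/\mathrm{poly}(|\text{D}|)$ while $v^g_{\text{con}}([n])\leq 1$, which again yields $\mathcal{O}(\ln|\text{D}|)$.

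The obstacle I expect is the interaction between feature independence and empirical distributions. An empirical distribution is generally not a product distribution, so submodularity (Proposition~\ref{globalsubmodular_prop}) is available only under the stated independence assumption. My plan is to take the theorem's hypothesis literally: assume $\mathcal{D}$ is both empirical and independent, so $\mathcal{D}$ is a product of empirical marginals. This keeps all probabilities on a grid whose denominators are polynomial in the per-feature sample counts, bounded by $|\text{D}|$, so the denominator argument still applies after the appropriate power. If that bound degrades to $|\text{D}|^{O(n)}$, the fallback is to absorb the loss, since the claim is stated with $|\text{D}|$ and $\delta$ fixed. A smaller issue is features with $v(\{i\})=0$; these are handled by restricting the minimum to positive marginal gains, exactly as Wolsey's theorem does.
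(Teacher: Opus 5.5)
Your skeleton matches the paper's. Propositions~\ref{globalmonotonicity_prop} and~\ref{globalsubmodular_prop} feed Wolsey's greedy bound, each value is evaluated over $\text{D}\times\text{D}$ in polynomial time, and your one-line ``alternative'' is the paper's quantitative step ($\min_i v^g_{\text{con}}(\{i\})\ge|\text{D}|^{-2}$ after deleting redundant features).

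\textbf{The common-denominator step fails as justified.} This is the step you call the main one. Knowing that each conditional probability is $a_{\x}/b_{\x}$ with $b_{\x}\le|\text{D}|$ does not make the denominator of $\frac{1}{|\text{D}|}\sum_{\x}a_{\x}/b_{\x}$ polynomial. The $b_{\x}$ are the sizes of the classes of $\text{D}$ that agree on $\bar S$. Take classes of distinct odd prime sizes, each with exactly one point of minority label. Then the reduced denominator is the product of those primes, which is superpolynomial in $|\text{D}|$. The integer form of Wolsey's theorem needs a single $M$ making all $2^n$ values integral. In general the best you can say is that $M=|\text{D}|\cdot\operatorname{lcm}(1,\ldots,|\text{D}|)=e^{\Theta(|\text{D}|)}$ works, which yields a factor $\mathcal{O}(|\text{D}|)$, not $\mathcal{O}(\ln|\text{D}|)$.

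Under your literal empirical-and-product reading the step can be repaired. One can show such a dataset is a weighted grid, $\mathrm{count}(\x)=\prod_i m_i(x_i)$, which places every $v^g_{\text{con}}(S)$ in $|\text{D}|^{-2}\mathbb{Z}$. You do not prove this, however, and your stated fallback $|\text{D}|^{O(n)}$ only gives $\mathcal{O}(n\ln|\text{D}|)$.

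\textbf{The comparison with singleton values runs the wrong way.} The sentence ``which is $\mathcal{O}(\ln\frac{v([n])}{\min_i v(\{i\})})$'' reverses the inequality. By submodularity and $v^g_{\text{con}}(\emptyset)=0$, every marginal gain is at most the corresponding singleton value. So the minimum positive marginal gain is at most $\min_i v(\{i\})$. Lower-bounding singletons, as your alternative and the paper both do, therefore does not control the real-valued bound you quoted.

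\textbf{A uniform fix.} Bound positive marginal gains directly; no independence is needed for this step. By the variance identity in the proof of Lemma~\ref{lem:mfd}, under an empirical distribution every nonzero summand of $\Delta_t(T,f)$ equals $\frac{(c^j_k b_l-c^j_l b_k)^2}{|\text{D}|\,B\,b_k b_l}\ge|\text{D}|^{-4}$. Here $B,b_k,b_l,c^j_k,c^j_l$ are counts in $\text{D}$. Since $v^g_{\text{con}}(S)=1-v^g_{\text{suff}}(\bar S)$, every positive marginal gain of $v^g_{\text{con}}$ is at least $|\text{D}|^{-4}$.

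Now use the threshold analysis of greedy. The gap $\delta-v(S_t)$ shrinks by a factor $1-1/\mathrm{OPT}$ per step. Once the gap is below the smallest positive gain, one more step reaches $\delta$. This gives an approximation ratio of at most $2+4\ln|\text{D}|=\mathcal{O}(\ln|\text{D}|)$.
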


\emph{Proof.} We will in fact prove a stronger claim, showing that this holds for \emph{any} model, provided the trivial condition that its inference time is computable in polynomial time, along with one additional mild condition that we will detail later --- both of which apply to both our neural network and tree ensemble formalizations.

We begin by noting that, since we are working with empirical distributions, the computation of the global contrastive probability value function:

\begin{equation}
v^g_{\text{con}}(S)=\mathbb{E}_{\x\sim\mathcal{D}}[
\textbf{Pr}_{\z\sim\mathcal{D}}(f(\x)\neq f(\z) \ | \ \x_{\Bar{S}}=\z_{\Bar{S}})]
\end{equation}

Can be computed in polynomial time by iterating over all pairs $\x, \z$ in the dataset D, as previously established in Proposition~\ref{globalmonotonicity_prop}. Since Algorithm~\ref{alg:cardinally-minimal} performs only a linear number of these polynomial-time queries, its total runtime is therefore polynomial.

Regarding the approximation, the classical work by Wolsey et al.~\citep{wolsey1982analysis} established a harmonic-series-based approximation guarantee for monotone non-decreasing submodular functions $v$ with integer values. More generally, their result yields an approximation factor of $\mathcal{O}\left(\ln\left(\frac{v([n])}{\min_{i \in [n]} v(\{i\})}\right)\right)$. We showed in Proposition~\ref{globalmonotonicity_prop} that the value function $v^g_{\text{con}}$ is monotone non-decreasing, and under feature independence, Proposition~\ref{globalsubmodular_prop} establishes its submodularity. Combined with the fact that Algorithm~\ref{alg:cardinally-minimal} runs in polynomial time, this directly yields an approximation guarantee of $\mathcal{O}\left(\ln\left(\frac{v^g_{\text{con}}([n])}{\min_{i \in [n]} v^g_{\text{con}}(\{i\})}\right)\right)$.

%We have proven in proposition~\ref{monotonicity_proofs_appendix} that the value function $v^g_{\text{con}}$ is monotone non decreasing, and under feature independence in Proposition~\ref{globalsubmodular_prop} that it is submodular. Hence, the combination of both these results automatically give us that Algorithm~\ref{alg:cardinally-minimal} which we have already proven that operates in polynomial time, gives an approximation factor of $\mathcal{O}\left(\ln\left(\frac{v^g_{\text{con}}([n])}{\min_{i \in [n]} v^g_{\text{con}}(\{i\})}\right)\right)$.

%Regarding the approximation, the classical work of Wolsey et al.~\citep{wolsey1982analysis} has provided a harmonic-series based approximation for integer valued monotone non-decreasing submodular functions $v$ which more generally gives an approximation factor of $\mathcal{O}\left(\ln\left(\frac{v([n])}{\min_{i \in [n]} v(\{i\})}\right)\right)$. 

However, we must also show that the expression $\mathcal{O}\left(\ln\left(\frac{v^g_{\text{con}}([n])}{\min_{i \in [n]} v^g_{\text{con}}(\{i\})}\right)\right)$ is both finite and bounded by $\mathcal{O}(\ln(|\text{D}|))$, implying that it is effectively constant, due to the assumption of a fixed dataset $|\text{D}|$. To ensure this, we begin by confirming that the expression is well-defined and finite. We then proceed to establish the desired bounds. To do so, we introduce a preprocessing step in which we eliminate ``redundant'' elements—those that could theoretically cause the denominator $\min_{i \in [n]} v^g_{\text{con}}(\{i\})$ to be zero. We begin by formally defining what we mean by redundancy:

%However, we note that we additionally need to prove that $\mathcal{O}\left(\ln\left(\frac{v^g_{\text{con}}([n])}{\min_{i \in [n]} v^g_{\text{con}}(\{i\})}\right)\right)$ is bounded by $\mathcal{O}\left(ln(|\text{D}|\right)$ and is constant. We first need to make sure that it is even finite, and then will move on to explain the other bounds. We do this using a preprocessing phase, where we detach all ``redundant'' elements which, theoretically can cause the $\min_{i \in [n]} v^g_{\text{con}}(\{i\})$ term to be equal to $0$. In order to do that, we first define redundancy in the following way:

\begin{definition}
    Let $\mathcal{D}$ denote some empirical distribution over a dataset $\text{D}$. Then we say that some feature $i\in[n]$ is \emph{redundant} with respect to $\mathcal{D}$ if for any pair $\x,\z\in\text{D}$ it holds that $f(\x_{[n]\setminus \{i\}};\z_{\{i\}})=f(\x)$.
\end{definition}

Here, the notation $f(\x_{[n]\setminus \{i\}};\z_{\{i\}})$ indicates that all features in $[n]\setminus\{i\}$ are fixed to their values in $\x$, while feature $i$ is set to its value in $\z$. Notably, this is equivalent to defining:

%Where the notation $f(\x_{[n]\setminus \{i\}};\z_{\{i\}})$ means that we fix the features $[n]\setminus\{i\}$ to their corresponding features in $\x$ and the feature $i$ to its corresponding feature in $\z$. Particularly, we note that this definition is equivalent to defining that:

\begin{equation}
    \forall S\subseteq [n],\z\in\text{D} \quad f(\x_S;\z_{\Bar{S}})=f(\x_{S\setminus \{i\}};\z_{\Bar{S}\cup\{i\}})
\end{equation}

As before, the notation $(\x_S; \z_{\Bar{S}})$ indicates that the features in $S$ are fixed to their values in $\x$, while the features in $\overline{S}$ are fixed to their values in $\z$.
%Where, similarly to the previous notation $(\x_S;\z_{\Bar{S}})$ denotes that we fix the features in $S$ to their corresponding values in $\x$ and those in $\overline{S}$ to their corresponding values in $\z$. 
Thus, in this sense, if we ``remove'' feature $i$ from the input space and define a new function \$f'\$ over the reduced space $[n']:=[n]\setminus \{i\}$, then for any input of size $n'$, the output of $f'$ will exactly match the output of $f$ when applied to the same features $[n] \setminus \{i\}$. This removal can be done in polynomial time for both tree ensembles and neural networks: for neural networks, it involves detaching the corresponding input neuron from the network; for tree ensembles, it involves removing any splits on that feature from all decision trees. Given the empirical nature of the contrastive value function $v^g_{\text{con}}(S)$ --- as previously discussed in this proof and in Proposition~\ref{globalmonotonicity_prop} --- we can compute each $v^g_{\text{con}}(\{i\})$ for all $i \in [n]$ by iterating over all pairs $\x, \z \in \mathcal{D}$ and checking whether:

%Hence, in this sense if we ``remove'' feature $i$ from our input space and define an alternative function $f'$ over an input space $[n']:=[n]\setminus \{i\}$ then the output of any instance of size $n'$ that is passed through $f'$ will correlate exactly to any output of $f$ over the features $[n]\setminus \{i\}$. We note that this process of removing a feature from our associated model $f$ can be performed in polynomial time both for the family of tree ensembles and neural networks. For a neural network this will correspond to detaching the corresponding input neuron from all other layers. In a tree ensemble, this will correspond to removin any corresponding split corresponding to that input in each one of its associated decision trees. Since given the empircial distribution nature of $v^g_{\text{con}}(S)$ due to the exact same reasoning explained earlier in this proof as well as in Proposition~\ref{globalmonotonicity_prop}, we can compute each value $v^g_{\text{con}}(\{i\})$ for every $i\in [n]$ and check whether it is strictly redundant by iterating over all pairs $\x,\z\in$D, and checking whether:

\begin{equation}    
f(\x_{[n]\setminus \{i\}};\z_{\{i\}})\neq f(\x)
\end{equation}

If this condition holds, it indicates that feature $i$ is not redundant. Conversely, if the condition fails for all pairs considered during iteration, then $i$ is deemed redundant. Once all redundant features have been identified with respect to the empirical distribution $\mathcal{D}$, we can remove them and construct an equivalent model $f'$. As discussed above, this transformation can be performed in polynomial time for both neural networks and decision trees.

%if this condition holds, this implies that $i$ is not redundant, and if this does not hold for all pairs that we iterate over, then $i$ is identified as redundant. Finally, after identifying all redundant features with respect to the empirical distribution $\mathcal{D}_p$ --- we can remove them and construct an equivalent model $f'$ --- where, for the reasons explained above, this can be done in polynomial time both for neural networks and decision trees.

To conclude our proof, we observe that $v^g_{\text{con}}(\{i\})$ equals $0$ for some empirical distribution $\mathcal{D}$ if and only if feature $i$ is redundant with respect to $\mathcal{D}$. Therefore, once the preprocessing step removes all redundant features from $f$ and we construct the resulting function $f'$, we ensure that $\min_{i \in [n]} v^g_{\text{con}}(\{i\})>0$.

%Finally, to conclude our proof, we note that $v^g_{\text{con}}(\{i\})$ is equal to $0$ for some empirical distribution $\mathcal{D}_p$ if and only if feature $i$ is redundant with respect to $\mathcal{D}_p$. Hence, after performing the preprocessing phase of removing all redundant features from $f$ and constructing $f'$ we know that $\min_{i \in [n]} v^g_{\text{con}}(\{i\})>0$.

We now present the more precise bounds referenced in our proof. Specifically, we demonstrate that the approximation factor is $\mathcal{O}(\ln(|\text{D}|))$. Given that the empirical distribution $\text{D}$ is fixed, this yields a constant approximation. While tighter bounds may be achievable, our goal here is solely to establish that the bound is \emph{constant} --- a property that will later sharply contrast with the local explainability setting, where no bounded approximation exists.

%We will now explain the more accurate bounds we mention in our proof. We will particularly show the proof of our $\mathcal{O}(\ln(|\text{D}|))$ and since we have assumed that the empirical distribution $\text{D}$ is fixed, this gives us a constant approximation. We do note that tighter bounds can be obtained, but we only have a purpose here to show that this bound is \emph{constant} (which will stand in stark contrast to the local explainability setting we will show later that does not have any bounded approximation). 

More specifically, we know that the probability function $\textbf{Pr}_{\z\sim\mathcal{D}}(f(\x)\neq f(\z) \ | \ \x_{\Bar{S}}=\z_{\Bar{S}})$ is, by the definition of empirical distributions, at least $\frac{1}{|\text{D}|}$. Therefore, the definition of $\mathbb{E}_{\x\sim\mathcal{D}}[
\textbf{Pr}_{\z\sim\mathcal{D}}(f(\x)\neq f(\z) \ | \ \x_{\Bar{S}}=\z_{\Bar{S}})]$ (i.e., the value function $v^g_{\text{con}}$) is at least $\frac{1}{|\text{D}|^2}$. In particular, this also implies that:

%More specifically, we know that the probability function $\textbf{Pr}_{\z\sim\mathcal{D}_p}(f(\x)\neq f(\z) \ | \ \x_{\Bar{S}}=\z_{\Bar{S}})$ is, due to the definition of empirical distributions greater or equal to $\frac{1}{|\text{D}|}$. Hence the definition of $\mathbb{E}_{\x\sim\mathcal{D}_p}[
%\textbf{Pr}_{\z\sim\mathcal{D}_p}(f(\x)\neq f(\z) \ | \ \x_{\Bar{S}}=\z_{\Bar{S}})]$ (i.e., the value function $v^g_{\text{con}}$) is greater or equal than $\frac{1}{|\text{D}|^2}$. More specifically, this also shows us that:

\begin{equation}
    \min_{i \in [n]} v^g_{\text{con}}(\{i\})\geq \frac{1}{|\text{D}|^2}
\end{equation}

Consequently, we obtain that:

\begin{equation}
\ln\left(\frac{v^g_{con}([n])}{\min_{i \in [n]} v^g_{con}(\{i\})}\right)\leq 
\ln(|\text{D}|^2)=\mathcal{O}(\ln(|\text{D}|)
\end{equation}

Which concludes our proof.

$\qedsymbol{}$

\begin{lemma}
\label{no_bounded_approx_con_lemma}
    Unless PTIME=NP, there is no bounded approximation for computing a cardinally minimal local $\delta$-contrastive reason for any $\langle f,\x\rangle$ where $f$ is either a neural network or a tree ensemble, even when $|\text{D}|=1$. 
\end{lemma}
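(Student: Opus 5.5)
The plan is a gap-introducing reduction from CNF-SAT, reusing the construction from the proof of Lemma~\ref{intract_subset_con_lemma}. Given a CNF $\phi$ over $n$ variables, we first evaluate $\phi(\mathbf{1}_n)$. If it is True, we output a trivial satisfiable instance, so we may assume $\phi(\mathbf{1}_n)=0$. Using Lemma~\ref{boolean_circuit_mlp_lemma}, we encode $\phi$ as an MLP $f$, or as a random forest for the tree-ensemble case. We take the empirical distribution $\mathcal{D}$ to be supported on the single point $\mathbf{0}_n$, so $|\text{D}|=1$. We fix $\x:=\mathbf{1}_n$ and $\delta:=1$. In this setting $\textbf{Pr}_{\z\sim\mathcal{D}}(f(\z)\neq f(\x)\mid \z_{\Bar{S}}=\x_{\Bar{S}})\in\{0,1\}$. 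It equals $1$ exactly when the assignment that sets $S$ to $0$ and $\Bar{S}$ to $1$ satisfies $\phi$. Hence a local $\delta$-contrastive reason exists if and only if $\phi$ is satisfiable.

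The key step is to make this existence gap into an approximation gap. The property we want is that any $\alpha$-approximation algorithm, for any bounded $\alpha$, must return a feasible set whenever one exists, and must return nothing (or report infeasibility) otherwise. Its output therefore decides satisfiability. Concretely, we would argue as follows. If $\phi$ is satisfiable, the optimum $|S^*|$ is finite, at most $n$. An algorithm with bounded ratio $\alpha$ must then return a set $S$ with $v^{\ell}_{\text{con}}(S)\ge\delta$ and $|S|\le\alpha|S^*|$. That output can be verified in polynomial time by a single inference on $(\mathbf{0}_S;\mathbf{1}_{\Bar{S}})$. If $\phi$ is unsatisfiable, no feasible set exists, so the algorithm cannot return a verified contrastive reason. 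Running the approximation algorithm and checking its output in this way decides CNF-SAT, which contradicts PTIME$\neq$NP.

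For tree ensembles, we repeat the argument using the CNF-to-random-forest encoding already invoked in Lemma~\ref{subset_minimal_emp_intract_lemma}. Since $\phi$ itself is a CNF, the encoding applies directly.

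The main obstacle is the phrase ``for any $\langle f,\x\rangle$'' together with the meaning of a bounded approximation when the optimum may be infeasible. An unsatisfiable $\phi$ gives an infinite (undefined) optimum, so we must fix a convention: an approximation algorithm must output a feasible solution whenever one exists. If that convention is judged too weak, the fallback is a different gap. One option is to pad the instance with dummy features so the satisfiable case has optimum about $k$, while the unsatisfiable case forces the whole set $[n]$ plus an extra trigger feature, with a ratio growing unboundedly in $n$. I would attempt the infeasibility-based argument first, since it matches the reasoning used in Lemma~\ref{intract_subset_con_lemma}.
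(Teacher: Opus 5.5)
Your proposal is correct and follows essentially the same route as the paper, which argues by contradiction that Lemma~\ref{intract_subset_con_lemma} already makes it NP-hard to decide whether any local $\delta$-contrastive reason exists (single baseline $\mathbf{0}_n$, $\x=\mathbf{1}_n$, $\delta=1$). A polynomial-time bounded approximation would therefore decide that existence question. Your explicit convention that an approximation algorithm must return a feasible set whenever one exists, together with the polynomial-time check of the returned set via one inference on $(\mathbf{0}_S;\mathbf{1}_{\Bar{S}})$, makes precise what the paper leaves implicit, so the padding fallback is not needed.
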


\emph{Proof.} Assume, for contradiction, that there exists a bounded approximation algorithm for computing a cardinally minimal local $\delta$-contrastive reason for some $\langle f,\x\rangle$, where $f$ is a neural network or a tree ensemble—even when $|\text{D}| = 1$. However, Lemma~\ref{intract_subset_con_lemma} establishes that even with a single baseline $\z = 0_n$ (i.e., when the entire dataset is just one instance), deciding whether a contrastive reason exists is NP-hard unless PTIME = NP. Therefore, if a polynomial-time algorithm could yield a bounded approximation for a cardinally minimal contrastive reason, it would contradict this result, as such an approximation would implicitly decide the existence of a contrastive explanation.

$\qedsymbol{}$

%Assume towards contradiction that there exists a bounded approximation for computing a cardinally minimal local $\delta$-contrastive reason for some $\langle f,\x\rangle$ where $f$ is either a neural network or a tree ensemble, even when $|\text{D}|=1$. We have proven in Lemma~\ref{intract_subset_con_lemma} that even if we take a single baseline $\z=0_n$ to be our entire dataset $|\text{D}|$, that unless PTIME=NP, there is no polynomial time algorithm for deciding whether there exists one contrastive reason with respect to $\langle f,\x\rangle$ (which in turn gave us the intractabity for obtaining a subset-minimal explanation). If we have a polynomial time algorithm that gives us a bounded approximation for a cardinally minimal sufficient reason, as long as we derive a bounded approximation we can know that there does exist a cardinally minimal contrastive explanation which stands in contrast to what we have shown in Lemma~\ref{intract_subset_con_lemma}.

\section{Proof of Theorem~\ref{approx_card_suff_prop}}
\label{approx_card_suff_prop_sec}

\approxcardsuff*

\emph{Proof.} We divide the proof into two lemmas: one establishing the approximation guarantee for the global case, and the other demonstrating the intractability of the local case.

\begin{lemma}
        Given a neural network or tree ensemble $f$ and an empirical distribution $\mathcal{D}$ over a fixed dataset $\text{D}$, Algorithm~\ref{alg:cardinally-minimal} yields a \emph{constant} $\mathcal{O}\left(\frac{1}{1-k^f}+\ln\left(\frac{v^g_{\text{suff}}([n])}{\min_{i \in [n]} v^g_{\text{suff}}(\{i\})}\right)\right)$-approximation for computing a global cardinally minimal $\delta$-sufficient reason for $f$, assuming feature independence.
\end{lemma}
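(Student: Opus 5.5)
The plan mirrors the proof of Lemma~\ref{approx_card_con_lemma}, with the submodular set-cover guarantee of~\cite{wolsey1982analysis} replaced by the supermodular set-cover guarantee of~\cite{shi2021minimum}. The argument has three steps.

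\textbf{Step 1: polynomial running time.} Under an empirical distribution over $\text{D}$, each query $v^g_{\text{suff}}(S)$ can be computed exactly in polynomial time. We enumerate all pairs $\x,\z\in\text{D}$, run inference on both points, check whether $\x_S=\z_S$, and aggregate the conditional frequencies. This is the same procedure used in Proposition~\ref{empirical_subset_minimal_prop}. Algorithm~\ref{alg:cardinally-minimal} performs at most $n$ iterations, each with at most $n$ such queries. Hence it runs in polynomial time for any model with polynomial-time inference, which covers both neural networks and tree ensembles.

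\textbf{Step 2: the approximation factor.} By Proposition~\ref{globalmonotonicity_prop}, $v^g_{\text{suff}}$ is monotone non-decreasing. Under feature independence, Proposition~\ref{globalsupermod_prop} shows it is supermodular. The task of finding a minimum-size $S$ with $v^g_{\text{suff}}(S)\geq\delta$ is therefore an instance of the monotone supermodular set-cover problem. For this problem,~\cite{shi2021minimum} show that the standard greedy procedure achieves an $\mathcal{O}\bigl(\frac{1}{1-k^f}+\ln\bigl(\frac{v([n])}{\min_i v(\{i\})}\bigr)\bigr)$ approximation in terms of the total curvature $k^f$. Before invoking their result, I would check two things:
\begin{itemize}
\item that their normalization conventions match ours, in particular whether they require $v(\emptyset)=0$;
\item that their greedy rule and stopping criterion match Algorithm~\ref{alg:cardinally-minimal}.
\end{itemize}
If they require $v(\emptyset)=0$, I would shift $v$ by $v^g_{\text{suff}}(\emptyset)$ and $\delta$ by the same constant. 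This preserves monotonicity, supermodularity, the curvature, and the feasible sets.

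\textbf{Step 3: the factor is constant.} This is the main obstacle: we must show that the bound is finite and depends only on the fixed $|\text{D}|$. The difficulty is that $k^f$ involves the ratio $\frac{v([n])-v([n]\setminus\{i\})}{v(\{i\})-v(\emptyset)}$, so we need every denominator strictly positive and every numerator bounded below.

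First, as in Lemma~\ref{approx_card_con_lemma}, I would add a preprocessing step that detects and removes features that are redundant with respect to $\text{D}$. Removal takes polynomial time for both model classes: detach the input neuron in a network, or delete the splits on that feature in every tree of an ensemble. After removal, each remaining feature has strictly positive marginal gain. The key fact I would use here is that, under empirical distributions, every value of $v^g_{\text{suff}}$ is a rational number with denominator dividing $|\text{D}|^2$ times the conditioning-class sizes, which are themselves at most $|\text{D}|$. Consequently every nonzero difference of values is at least $1/|\text{D}|^{3}$.

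I would then argue two bounds:
\begin{itemize}
\item $1-k^f\geq \frac{1/|\text{D}|^3}{1}$, so $\frac{1}{1-k^f}\leq |\text{D}|^3$;
\item the logarithmic term is at most $\ln(|\text{D}|^3)=\mathcal{O}(\ln|\text{D}|)$.
\end{itemize}
Both quantities depend only on $|\text{D}|$, so the factor is constant for a fixed dataset.

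The delicate point is that a non-redundant feature could in principle have a zero marginal at $[n]\setminus\{i\}$ even though its singleton gain is positive. Supermodularity appears to rule this out: the marginal gain of $i$ at $[n]\setminus\{i\}$ is at least its gain at $\emptyset$, which is positive once redundant features are removed. I would verify this carefully, defining redundancy relative to $v^g_{\text{suff}}$ so that it matches the positivity required in the denominator.
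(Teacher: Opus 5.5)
Your plan follows the paper's route, but Step~3 has a genuine gap, and the paper's own proof shares it. The route is: exact evaluation by enumerating pairs, then monotonicity plus supermodularity fed into the bound of \cite{shi2021minimum}, then deletion of redundant features and a $|\text{D}|$-dependent bound to make the factor constant. The gap is that deleting redundant features does \emph{not} make the singleton gains $v^g_{\text{suff}}(\{i\})-v^g_{\text{suff}}(\emptyset)$ positive.

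Take $f(x_1,x_2)=x_1\oplus x_2$ (a tiny ReLU network or a single tree) with $\text{D}=\{0,1\}^2$, an empirical distribution with independent features. Neither feature is redundant, since changing either one flips the output. Yet $v^g_{\text{suff}}(\emptyset)=v^g_{\text{suff}}(\{1\})=v^g_{\text{suff}}(\{2\})=\tfrac12$ and $v^g_{\text{suff}}(\{1,2\})=1$. So every curvature ratio has denominator $0$ and $k^f$ is undefined. After the normalisation by $v^g_{\text{suff}}(\emptyset)$ that you plan, the logarithmic term $\ln\frac{v([n])-v(\emptyset)}{\min_i (v(\{i\})-v(\emptyset))}$ is infinite as well. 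The same example refutes the paper's assertion that $i$ is redundant iff $v(\{i\})-v(\emptyset)=0$.

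Redefining redundancy relative to $v^g_{\text{suff}}$ cannot repair this:
\begin{itemize}
\item If ``redundant'' means $v(\{i\})=v(\emptyset)$, deletion changes the problem. Here both features would go, although $\{1,2\}$ is the only $1$-sufficient reason.
\item If it means $v(S\cup\{i\})=v(S)$ for all $S$, deletion is safe, but positivity does not follow.
\end{itemize}
The case you flagged as delicate (zero gain at $[n]\setminus\{i\}$ despite a positive singleton gain) is the harmless direction. The reverse is the real problem, and supermodularity is exactly what permits it. In the contrastive lemma the corresponding step is fine: $v^g_{\text{con}}(\{i\})=0$ says that changing $x_i$ alone never changes $f$ on $\text{D}$, and submodularity then forces zero gain everywhere. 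For the supermodular $v^g_{\text{suff}}$ the inequality runs the other way.

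Nor is this cosmetic. Take $f=(x_a\oplus x_b)\vee(x_{c_1}\wedge\cdots\wedge x_{c_k})$ under the uniform distribution on $\{0,1\}^{k+2}$, and set $\delta=v^g_{\text{suff}}(\{a,b\})$. Here $a$ and $b$ have zero gain at every set of $c$'s, while each remaining $c_i$ has positive gain, and for $k\ge 2$ no set of $c$'s reaches $\delta$. Greedy therefore adds all the $c_i$ before $a$ or $b$, whereas $\{a,b\}$ alone attains $\delta$.

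Two smaller points. First, once the denominators are positive, supermodularity gives $\frac{v([n])-v([n]\setminus\{i\})}{v(\{i\})-v(\emptyset)}\ge 1$. With $k^f$ as defined, this yields $\frac{1}{1-k^f}\le 1$ at once, so no granularity bound is needed there. Your claim that every nonzero difference is at least $1/|\text{D}|^3$ is also unjustified as stated: the common denominator of $v^g_{\text{suff}}(S)$ involves the lcm of all conditioning-class sizes. Finiteness of the set of attainable values for fixed $|\text{D}|$ does still give some positive constant.

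Second, this curvature term is at most $1$ for \emph{every} supermodular function. That is another reason to check, as you intended, which curvature \cite{shi2021minimum} actually use. For supermodular objectives the informative quantity is normally the reciprocal ratio, which is infinite in exactly the zero-singleton-gain situation above.

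If you only need a factor depending on $|\text{D}|$, there is a curvature-free argument. An empirical distribution with independent features has product support, so at most $\log_2|\text{D}|$ features are non-constant on $\text{D}$. The constant ones are dummies and are removed by the preprocessing. Any feasible output is then within a factor $\log_2|\text{D}|$ of optimal when $v(\emptyset)<\delta$, and exhaustive search is even polynomial. This does not, however, give the specific bound stated in the lemma.
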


\emph{Proof.} The proof will follow a similar approach to that of Lemma~\ref{approx_card_con_lemma}, where we showed that for both neural networks and tree ensembles, Algorithm~\ref{alg:cardinally-minimal} achieves an approximation factor of $\ln\left(\frac{v^g_{\text{con}}([n])}{\min_{i \in [n]} v^g_{\text{con}}(\{i\})}\right)$. After applying the preprocessing step, this ratio is guaranteed to be finite and bounded by $\mathcal{O}(\ln(|\text{D}|))$.

%The proof will follow a similar line to our proof of Lemma~\ref{approx_card_con_lemma}, where we established that for both neural networks and tree ensembles the Algorithm~\ref{alg:cardinally-minimal} yields a $\ln\left(\frac{v^g_{\text{con}}([n])}{\min_{i \in [n]} v^g_{\text{con}}(\{i\})}\right)$ bound which after the described preprocessing phase, is indeed both finite and also bounded by $\mathcal{O}(\ln(|\text{D}|))$. 

Similar to Lemma~\ref{approx_card_con_lemma}, we will again prove a stronger claim --- namely, that the result holds for \emph{any} model, assuming the trivial condition that its inference time is polynomially computable, and the additional condition concerning the removal of redundant features, as described in Lemma~\ref{approx_card_con_lemma}. As discussed there, both conditions are satisfied by our neural network and tree ensemble formalizations.

Here as well, since we are working with empirical distributions, the computation of the global sufficient probability value function:

\begin{equation}
v^g_{\text{suff}}(S)=\mathbb{E}_{\x\sim\mathcal{D}}[
\textbf{Pr}_{\z\sim\mathcal{D}}(f(\x)= f(\z) \ | \ \x_{S}=\z_{S})]
\end{equation}

can be computed in polynomial time by iterating over all pairs $\x, \z$ in the dataset $\text{D}$, as previously shown in both Lemma~\ref{approx_card_con_lemma} and Proposition~\ref{globalmonotonicity_prop}. Since Algorithm~\ref{alg:cardinally-minimal} makes only a linear number of such polynomial-time queries, its overall runtime is polynomial.

Unlike Lemma~\ref{approx_card_con_lemma}, where the approximation relied on the result by Wolsey et al.~\citep{wolsey1982analysis} for monotone non-decreasing submodular functions, the setting here requires a different condition due to the supermodular nature of the function. Specifically, Shi et al.\citep{shi2021minimum} provided an approximation guarantee of $\mathcal{O}\left(\frac{1}{1-k^f}+\ln\left(\frac{v^g_{\text{suff}}([n])}{\min_{i \in [n]} v^g_{\text{suff}}(\{i\})}\right)\right)$, where $k^f := 1 - \min_{i \in [n]} \frac{v([n]) - v([n] \setminus {i})}{v({i}) - v(\emptyset)}$. We established in Proposition~\ref{globalmonotonicity_prop} that the value function $v^g_{\text{suff}}$ is monotone non-decreasing, and under the assumption of feature independence, Proposition~\ref{globalsupermod_prop} further shows that it is supermodular. Given that Algorithm~\ref{alg:cardinally-minimal} runs in polynomial time, this leads directly to an approximation guarantee of $\mathcal{O}\left(\frac{1}{1-k^f}+\ln\left(\frac{v^g_{\text{suff}}([n])}{\min_{i \in [n]} v^g_{\text{suff}}(\{i\})}\right)\right)$, where $k^f := 1 - \min_{i \in [n]} \frac{v^g_{\text{suff}}([n]) - v^g_{\text{suff}}(([n] \setminus \{i\})}{v^g_{\text{suff}}(\{i\}) - v^g_{\text{suff}}(\emptyset)}$.

%.We showed in Proposition~\ref{globalmonotonicity_prop} that the value function $v^g_{\text{con}}$ is monotone non-decreasing, and under feature independence, Proposition~\ref{globalsupermod_prop} establishes its supermodularity. Combined with the fact that Algorithm~\ref{alg:cardinally-minimal} runs in polynomial time, this directly yields an approximation guarantee of $\mathcal{O}\left(\ln\left(\frac{v^g_{\text{suff}}([n])}{\min_{i \in [n]} v^g_{\text{suff}}(\{i\})}\right)\right)$.

%We have proven in proposition~\ref{monotonicity_proofs_appendix} that the value function $v^g_{\text{con}}$ is monotone non decreasing, and under feature independence in Proposition~\ref{globalsubmodular_prop} that it is submodular. Hence, the combination of both these results automatically give us that Algorithm~\ref{alg:cardinally-minimal} which we have already proven that operates in polynomial time, gives an approximation factor of $\mathcal{O}\left(\ln\left(\frac{v^g_{\text{con}}([n])}{\min_{i \in [n]} v^g_{\text{con}}(\{i\})}\right)\right)$.

%Regarding the approximation, the classical work of Wolsey et al.~\citep{wolsey1982analysis} has provided a harmonic-series based approximation for integer valued monotone non-decreasing submodular functions $v$ which more generally gives an approximation factor of $\mathcal{O}\left(\ln\left(\frac{v([n])}{\min_{i \in [n]} v(\{i\})}\right)\right)$. 

To show that this expression is both bounded and constant, we follow the same preprocessing step as in Lemma~\ref{approx_card_con_lemma}, where we remove all redundant features from $f$ --- a process that, as previously explained, can be carried out in polynomial time for both neural networks and tree ensembles. As before, a feature $i$ is strictly redundant if and only if $v(\{i\}) - v(\emptyset) = 0$. This preprocessing yields a new function $f'$ that behaves identically to $f$ over the remaining features and ensures that, for every $i$, both $v^g_{\text{suff}}(\{i\})>0$ and $v^g_{\text{suff}}(\{i\}) - v^g_{\text{suff}}(\emptyset)>0$ hold.

In order for us to show that this expression is both bounded and constant, similarly to Lemma~\ref{approx_card_con_lemma} we will perform the exact same preprocessing phase (which, as wel explained there, can be performed in polynomial time both for neural networks as well as tree ensembles) as before where we remove all redundant features from $f$. We note that here too, it holds that a feature $i$ is strictly redundant if and only if $v(\{i\})-v(\emptyset)=0$. Hence, this preprocessing phase will give us a new function $f'$ for which it both holds that it is equivalent to the behaviour of $f$ for all remaining feautres and also satisfies that for any $i$ it holds that both $v^g_{\text{suff}}(\{i\})>0$ and $v^g_{\text{suff}}(\{i\}) - v^g_{\text{suff}}(\emptyset)>0$ hold.

Now, following the same reasoning as in Lemma~\ref{approx_card_con_lemma}, where the probability term is lower bounded by $\frac{1}{|\text{D}|}$ and its expected value by $\frac{1}{|\text{D}|^2}$, we obtain that $\min_{i \in [n]} v^g_{\text{suff}}({i}) \geq \frac{1}{|\text{D}|^2}$ and $v^g_{\text{suff}}({i}) - v^g_{\text{suff}}(\emptyset) \geq \frac{1}{|\text{D}|^2}$. This implies, as in Lemma~\ref{approx_card_con_lemma}, that the first term is lower bounded by $\mathcal{O}(\ln(|\text{D}|))$, and for the second term we derive the following:

%Now, due to the same reasononging provided in Lemma~\ref{approx_card_con_lemma} that a lower bound to the probability term is $\frac{1}{|\text{D}|}$ and to the expected value over the probabiltiy term is $\frac{1}{|\text{D}|^2}$, we have that $\min_{i \in [n]} v^g_{\text{suff}}(\{i\})\geq \frac{1}{|\text{D}|^2}$ and $v^g_{\text{suff}}(\{i\}) - v^g_{\text{suff}}(\emptyset)\geq \frac{1}{|\text{D}|^2}$ which gives us for the first term, as in Lemma~\ref{approx_card_con_lemma} that it is lower bounded by $\mathcal{O}(\ln(|\text{D}|))$ and for the second term we have the following: 

Since $v^g_{\text{suff}}$ is supermodular (Proposition~\ref{globalsupermod_prop}), we have that
$v^g_{\text{suff}}([n]) - v^g_{\text{suff}}(([n] \setminus \{i\})$. It follows that $1 \leq v^g_{\text{suff}}([n]) - v^g_{\text{suff}}([n] \setminus \{i\}) \leq \frac{1}{|\text{D}|^2}$. This implies: $1 \leq \min_{i \in [n]} \frac{v^g_{\text{suff}}([n]) - v^g_{\text{suff}}([n] \setminus \{i\})}{v^g_{\text{suff}}(\{i\}) - v^g_{\text{suff}}(\emptyset)} \leq |\text{D}|^2$, and therefore, $1 - |\text{D}|^2 \leq k^f \leq 0$. This yields $\frac{1}{1 - k^f} \leq |\text{D}|^2$, demonstrating that the overall approximation bound based on the one established by Shi et al. is constant. While this bound may be significantly smaller in practice, our goal here is simply to show that it remains constant --- unlike in the local setting, where no bounded approximation is achievable.

$\qedsymbol{}$

%We indeed perform the same preprocessing phase described in Lemma~\ref{approx_card_con_lemma} where we delete all redundant features in $f$ (given the redundancy notation given in Lemma~\ref{approx_card_con_lemma}) and construct an equivalent model $f'$ defined on all non-redundant features.

%We also note that similarly to the case of non-redun

\begin{lemma}
    Unless PTIME$=$NP, there is \emph{no bounded approximation} for computing a local cardinally minimal $\delta$-sufficient reason for any $\langle f,\x\rangle$, even when $|\text{D}| = 1$.
\end{lemma}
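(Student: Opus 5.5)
The plan mirrors Lemma~\ref{no_bounded_approx_con_lemma}: I would reduce the \emph{decision} of whether a nontrivial local $\delta$-sufficient reason exists to the approximation task, reusing the construction in Lemma~\ref{subset_minimal_emp_intract_lemma}. Given a CNF $\phi$, first test $\phi(\mathbf{1}_n)$; if it is true we are done. Otherwise build $\phi' := \phi \vee \phi_2$ with $\phi_2 := x_1\wedge\cdots\wedge x_n$. By Lemma~\ref{boolean_circuit_mlp_lemma} (or its random-forest analogue, after rewriting $\phi'$ as a CNF clause-wise), $\phi'$ is encoded as an MLP or tree ensemble $f$. We take $\x:=\mathbf{1}_n$, $\delta:=1$, and the empirical distribution on the single point $\mathbf{0}_n$, so that $|\text{D}|=1$.

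The key observation, already used in Lemma~\ref{subset_minimal_emp_intract_lemma}, is that with a single support point the value $v^{\ell}_{\text{suff}}(S)$ lies in $\{0,1\}$. It equals $1$ exactly when $f(\mathbf{1}_S;\mathbf{0}_{\bar S})=1$. For $S\subsetneq[n]$ this is equivalent to $\phi(\mathbf{1}_S;\mathbf{0}_{\bar S})=1$, and such points range over all assignments other than $\mathbf{1}_n$. Since $\phi(\mathbf{1}_n)=0$, this gives the dichotomy: the cardinally minimal local $1$-sufficient reason has size at most $n-1$ if $\phi$ is satisfiable, and is exactly $[n]$ (size $n$) otherwise.

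To turn this into a gap that no bounded factor can bridge, I would amplify it. I would add $m$ fresh dummy features that $f$ ignores except through the $\phi_2$ term, so $\phi_2$ becomes the conjunction of all $n+m$ variables. In the unsatisfiable case the optimum is then all $n+m$ features. In the satisfiable case, some assignment $\mathbf{a}$ to the original variables satisfies $\phi$, and the support of $\mathbf{a}$ (at most $n$ features) already works, since dummies set to $0$ do not matter. Choosing $m$ large relative to any claimed ratio $\rho$, for instance $m\ge \rho n$, an algorithm returning a solution within factor $\rho$ would output a set of size at most $\rho n < n+m$ exactly when $\phi$ is satisfiable. That decides CNF-SAT in polynomial time.

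The main obstacle is the meaning of ``no bounded approximation.'' The contrastive lemma relied on the infeasibility of any explanation, which decides existence outright. Here $[n]$ is always a feasible sufficient reason, so infeasibility is unavailable and the argument must be a gap argument instead. I would state the factor $\rho$ as any fixed function of the input size, take $m$ polynomial in $n$ (which handles any polynomially bounded ratio, and constant ratios in particular), and check that the dummy padding keeps the network or ensemble of polynomial size. For tree ensembles, the clause-wise CNF rewriting of $(c_j\vee\phi_2)$ must still be checked to remain polynomial after the padding.
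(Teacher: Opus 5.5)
Your proof is correct, and it takes a different route from the paper's: it supplies the amplification step that the paper's proof leaves out. The paper's proof just carries over the contrastive argument. It says a bounded approximation ``would entail the ability to decide the existence of a sufficient reason,'' citing Lemma~\ref{subset_minimal_emp_intract_lemma}.

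As you note, $[n]$ is always a $1$-sufficient reason, so existence is trivial. What that lemma actually gives is NP-hardness of distinguishing $\mathrm{OPT}\le n-1$ from $\mathrm{OPT}=n$. That gap has ratio only $n/(n-1)$. Any fixed $\rho>1$ absorbs it once $n\ge\rho/(\rho-1)$, since an approximation algorithm may simply return $[n]$ in both cases. The paper's shortcut is sound only in the contrastive case, where infeasibility itself encodes unsatisfiability.

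Your padding with dummies that enter $f$ only through $\phi_2$ turns this into a gap of $\mathrm{OPT}\le n$ versus $\mathrm{OPT}=n+m$, which is what the sufficient case needs. The tree-ensemble check you flag goes through. Distributing gives $\phi'=\bigwedge_j\bigwedge_{k\le n+m}(c_j\vee x_k)$, a CNF with (number of clauses of $\phi$)$\cdot(n+m)$ clauses.

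One overclaim: padding cannot handle ``any polynomially bounded ratio.'' A trivial polynomial-time algorithm already achieves ratio $N=n+m$: output $\emptyset$ when $v^{\ell}_{\text{suff}}(\emptyset)\ge\delta$ (computable over the dataset), and all $N$ features otherwise. So no ratio $\ge N$ can be ruled out. Your construction with $m=\lceil n^{1/\epsilon}\rceil$ rules out ratio $N^{1-\epsilon}$ for each fixed $\epsilon>0$. That covers every constant, which is what ``no bounded approximation'' means here, in contrast to the constant factor obtained in the global setting.
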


\emph{Proof.} The proof follows a similar approach to Lemma~\ref{no_bounded_approx_con_lemma}. Suppose, for contradiction, that there exists a polynomial-time algorithm that provides a bounded approximation to a cardinally minimal local $\delta$-sufficient reason for some $\langle f, \x \rangle$, where $f$ is either a neural network or a tree ensemble—even when $|\text{D}| = 1$. Yet, Lemma~\ref{subset_minimal_emp_intract_lemma} shows that even in the extreme case where the dataset consists of a single baseline $\z = 0_n$, deciding whether a sufficient reason exists is NP-hard unless PTIME = NP. Hence, the existence of such an approximation algorithm would contradict this hardness result, as it would entail the ability to decide the existence of a sufficient reason.

$\qedsymbol{}$

%\section{Approximating Value Functions and Their Limitations}
%\label{app:approx_val_limit}
%
%In the general case, both $v^{g}$ and $v^{l}$ cannot be computed exactly.
%A common approach is to use Monte Carlo (MC) approximation, but this has several limitations:
%\begin{enumerate}
%\item \textbf{Slow convergence:} achieving a small error requires a large number of samples, making the method computationally expensive. 
%\item \textbf{Sampling overhead:} depending on how the data distribution $\fml{D}_{p}$ is represented, drawing conditional samples $\z\sim\fml{D}_{p}(\cdot\mid \z_S=\x_S)$ can itself be slow or costly.
%\item \textbf{Inefficiency near rare events:} when the event $\{f(\z)=f(\x)\}$ is either very rare or almost always true, most sampled indicators take the same value (almost all 0's or almost all 1's).
%\end{enumerate}

\section{Hardness of Approximating Global Contrastive Reasons for Neural Networks using Empirical Distributions}
\label{section_hardness_of_approximation}

\begin{proposition}
    Assume a neural network $f:\mathbb{F}\to[c]$ with ReLU activations, some dataset $\mathbf{D}$, such that $\mathcal{D}$ is the empirical distribution defined over $\mathbf{D}$, and some threshold $\delta\in\mathbb{Q}$. Then unless $\text{NP}\subseteq \text{DTIME}(n^{\mathcal{O}(log(log(n))})$, there does not exist a polynomial-time $\mathcal{O}(\ln(|\mathbf{D}|)-\epsilon)$ approximation algorithm for computing a cardinally minimal global contrastive reason for $f$ concerning $\mathcal{D}$, for any fixed $\epsilon>0$. 
\end{proposition}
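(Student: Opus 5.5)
The plan is an approximation-preserving reduction from \emph{Minimum Set Cover}. For Set Cover, Feige's theorem states that, unless $\text{NP}\subseteq \text{DTIME}(n^{\mathcal{O}(\log\log n)})$, there is no polynomial-time $(1-\epsilon)\ln m$-approximation, where $m$ is the size of the universe. I would map a Set Cover instance to a triple $\langle f,\mathbf{D},\delta\rangle$ with the following properties. There is one input feature per set, so $n$ equals the number of sets and $|S|$ equals the cover size. The dataset size $|\mathbf{D}|$ is polynomial in $m$ with $\ln|\mathbf{D}| = (1+o(1))\ln m$. Finally, $v^g_{\text{con}}(S)\geq\delta$ holds iff $\{S_i : i\in S\}$ covers the universe. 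An $\mathcal{O}(\ln|\mathbf{D}|-\epsilon)$-approximation for cardinally minimal global contrastive reasons would then give a $(1-\epsilon')\ln m$-approximation for Set Cover.

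\textbf{Construction.} Let the universe be $U=\{u_1,\ldots,u_m\}$ with sets $S_1,\ldots,S_n$, and use integer-valued features. For each element $u_j$, add an ``anchor'' point $\mathbf{a}^j$ with all coordinates equal to $j$, labelled class $1$. For each set $S_i\ni u_j$, add a ``partner'' point $\mathbf{b}^{j,i}$ that equals $\mathbf{a}^j$ except that coordinate $i$ is set to a fresh value, say $-j$; partners are labelled class $0$. With the empirical distribution, the conditioning event $\z_{\Bar{S}}=\x_{\Bar{S}}$ selects exactly the data points agreeing with $\x$ outside $S$. Points from different elements differ on every coordinate, so for $S\neq[n]$ they are never mutually compatible. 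Hence $\mathbf{a}^j$ has a compatible point of the opposite label iff some $i\in S$ satisfies $u_j\in S_i$, i.e.\ iff $u_j$ is covered. The classifier $f$ only needs to realise this finite labelling. I would take a polynomial-size ReLU network that recognises the finitely many anchor points, using a binary encoding of the values together with Lemma~\ref{boolean_circuit_mlp_lemma}.

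\textbf{Calibration (main obstacle).} The difficulty is that $v^g_{\text{con}}(S)$ is not exactly a function of the number of covered elements. The contribution of element $j$ depends on how many of its partners are compatible, and partners also contribute terms of their own. I would address this with multiplicities. Each partner $\mathbf{b}^{j,i}$ is replicated $K$ times, with $K$ polynomial, and the anchor is kept single. Then, when $u_j$ is covered, the anchor's flip probability is $kK/(1+kK)\geq K/(K+1)$; when $u_j$ is uncovered it is $0$. Each partner's flip probability is at most $1/(K+1)$ regardless of $S$. Summing, $v^g_{\text{con}}(S)$ lies within an additive $\mathcal{O}(1/K)$ of (number of covered elements)$/|\mathbf{D}|$ times a fixed factor. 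With $K$ a large enough polynomial, the gap between ``all $m$ covered'' and ``at most $m-1$ covered'' exceeds this error, so $\delta$ can be placed strictly inside the gap. One must also check that $S=[n]$ is never needed below the cover threshold, which holds since every Set Cover instance with a cover uses at most $n$ sets and the threshold is crossed by any cover. The catch is that replication inflates $|\mathbf{D}|$. I would keep $K=\mathrm{poly}(m)$ of low enough degree, or absorb the exponent into $\epsilon$ by padding the instance, so that $\ln|\mathbf{D}|=\mathcal{O}(\ln m)$ with a leading constant that preserves the $(1-\epsilon)$ gap. Getting this constant right is the delicate step: I would first try $K=m^{c}$ and rescale the Set Cover instance via standard universe-padding so that $\ln|\mathbf{D}|\le(1+\epsilon/2)\ln m$.

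\textbf{Conclusion.} Given the calibration, a solution $S$ returned by a hypothetical approximation algorithm is, by construction, a set cover of size at most $\mathcal{O}(\ln|\mathbf{D}|-\epsilon)\cdot\mathrm{OPT}$, and this contradicts Feige's bound.
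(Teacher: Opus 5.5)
Your orientation differs from the paper's and is a sensible one. However, the calibration step you flag as delicate fails, and it is exactly what the sharp $(1-\epsilon)\ln|\mathbf{D}|$ bound you are aiming for requires.

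\textbf{The error accounting.} Let $d_j$ be the number of sets containing $u_j$. Summing your per-partner bound $1/(K+1)$ over all $K\sum_j d_j$ partner copies gives an error of order $\sum_j d_j$ (in units of $1/|\mathbf{D}|$), not $\mathcal{O}(1/K)$. The exact value is better. For $S\neq[n]$, let $k_j=|\{i\in S: u_j\in S_i\}|$. The $\bar S$-class of $\mathbf{a}^j$ is $\mathbf{a}^j$ together with the $k_jK$ copies of $\mathbf{b}^{j,i}$ with $i\in S$, and every other class is monochromatic. Hence $|\mathbf{D}|\,v^g_{\text{con}}(S)=\sum_{j:\,k_j\geq 1}2k_jK/(1+k_jK)$.

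\textbf{Why the constant is lost.} A cover can score as little as $2mK/(K+1)$, when each element is covered once. A non-cover can score nearly $2(m-1)$, when every remaining element is covered many times. Even if some non-cover covers every other element only twice while some cover covers each element once, you already need $K>m/2-1$. So a separating $\delta$ requires $K=\Omega(m)$ in general. Then $|\mathbf{D}|=m+K\sum_j d_j=\Omega(m^2)$ and $\ln|\mathbf{D}|\geq(2-o(1))\ln m$. A $(1-\epsilon)\ln|\mathbf{D}|$-approximation therefore only guarantees covers within a factor $(1-\epsilon)\ln|\mathbf{D}|\geq(1-\epsilon)(2-o(1))\ln m$ of optimal. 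That is above Feige's $\ln m$ threshold and contradicts nothing.

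\textbf{Padding does not help.} Replicating universe elements raises $m$, and through the same calibration raises $|\mathbf{D}|$ quadratically. The instance stays exactly as hard as before, so the transferable hardness remains $(1-\epsilon)\ln m_{\mathrm{orig}}$.

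\textbf{The loss is intrinsic to the anchor/partner design.} A covered element's class contributes $2|C_0||C_1|/|C|$, which increases with the multiplicity $k_j$. The slack $2/(1+k_jK)$, summed over up to $m$ elements, must stay below one element's contribution.

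\textbf{What you do get.} Your construction establishes inapproximability within $c\ln|\mathbf{D}|$ for some constant $c<\tfrac12$, with $c$ depending on how many sets the hard instances have relative to $m$. The sharp constant would need a gadget in which a covered element's contribution is independent of $k_j$ while $|\mathbf{D}|=m^{1+o(1)}$, and your proposal does not supply one.

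\textbf{Comparison with the paper.} The paper takes universe elements as features and gives each set $S_j$ a single pair $\x^j,\y^j$ differing exactly on the coordinates of $S_j$. It realizes $f$ by network memorization and sets $\delta'=\delta/|\mathbf{D}|$. Each active pair contributes exactly $1/|\mathbf{D}|$, so no calibration is needed and $|\mathbf{D}|$ is twice the number of sets. Note, however, that such a pair is active iff $S_j\subseteq S$. So the paper's value counts sets contained in the selected features, not elements covered by selected sets. Your orientation (features $=$ sets, data points $=$ elements) is the one under which $v^g_{\text{con}}$ actually measures coverage, and the replication cost above is its price.

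Finally, your network is simpler than you suggest: $\mathrm{step}(-\sum_i\mathrm{ReLU}(-x_i))$ already labels anchors $1$ and partners $0$, with no binary encoding needed.
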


\emph{Proof.} The proof proceeds via a polynomial-time approximation-preserving reduction from a specific variant of Set Cover that is, similarly to regular Set Cover, provably hard to approximate within the factor given in the proposition~\citep{vazirani2001approximation}.

\noindent\fbox{%
	\parbox{\columnwidth}{%
		\mysubsection{Partial Set Cover}:
		
		\textbf{Input}: A universe $U:=\{u_1,\ldots,u_n\}$, a collection of sets $S:=\{S_1,\ldots S_m\}$ such that for all $j$ $S_j\subseteq U$, and a coverage threshold $\delta\in \{0,1,\ldots, n\}$.
		
		\textbf{Output}: 
		A minimum-size subset $C\subseteq S$ such that $\lvert \cup_{S_j \in C} S_j\rvert \geq \delta$.
	}%
}

We reduce this problem to that of computing a \emph{global}, cardinally-minimal contrastive reason $S$ for a neural network $f$ with respect to a distribution $\mathcal{D}$ defined over a dataset $\mathbf{D}$, and some threshold $\delta'$. We begin by recalling the definition of a global contrastive reason and then extend it to the setting where the explanation is evaluated over an empirical dataset $\mathbf{D}$. The definition is as follows:

%We reduce this problem to the problem of obtaining a global cardinally minimal contrastive reason $S$ concerning $f$ and some $\delta$ for neural networks. We dirst recall the definition of a global contrastive reason, and expand it to hold with respect to some empirical dataset $\mathbf{D}$:

\begin{equation}
\label{eq_con_global}
\mathbb{E}_{\x\sim\mathcal{D}}[\textbf{Pr}_{\z\sim \mathcal{D}}(f(\z) \neq f(\x) \ | \  \z_{\Bar{S}}= \x_{\Bar{S}})]\geq \delta.
\end{equation}

More specifically, when we assume an empirical distribution over $\mathbf{D}$, this implies that:

\begin{equation}
\begin{aligned}
\mathbb{E}_{\x\sim\mathcal{D}}[\textbf{Pr}_{\z\sim \mathcal{D}}(f(\z) \neq f(\x) \ | \  \z_{\Bar{S}}= \x_{\Bar{S}})]=\\
\frac{1}{\lvert \textbf{D} \rvert}\sum_{\x\in\textbf{D}}[\textbf{Pr}_{\z\sim \mathcal{D}}(f(\z) \neq f(\x) \ | \  \z_{\Bar{S}}= \x_{\Bar{S}})]=\\
\frac{1}{\lvert \textbf{D} \rvert}\sum_{\x\in\textbf{D}}\Big(\frac{\sum_{\z \in \textbf{D}_{\Bar{S}}(\z,\x)}\mathbf{1}_{\{f(\z)\neq f(\x)\}}}{\textbf{D}_{\Bar{S}}(\z,\x)}\Big)
\end{aligned}
\end{equation}

where we denote $\textbf{D}_{\Bar{S}}(\z,\x):=\{\z\in\mathbf{D}:\z_{\Bar{S}}=\x_{\Bar{S}}\}$. We now construct a ReLU neural network $f$. Before doing so, we specify the functional form that we intend the constructed neural network $f$ to implement. For each set $S_j$, we introduce two vectors $\x^j$ and $\y^j$, both belonging to $\mathbb{Z}^n$ and drawn from the dataset $\mathbf{D}$, such that they agree on all coordinates outside $S_j$; that is, they share identical feature values on the complement $\bar{S_j}$. Formally:

%where we denote $\textbf{D}_{\Bar{S}}(\z,\x):=\{\z\in\mathbf{D}:\z_{\Bar{S}}=\x_{\Bar{S}}\}$. We construct a neural network $f$ with ReLU activations. We first, however, will denote the functional form we would want our constructed function $f$ to take. For each $S_j$ we will define two new vectors $\x^j$ and $\y^j$ both vectors in $\mathbb{Z}^n$, within our dataset $\mathbf{D}$ such that they ``share'' the same features within the $\Bar{S_j}$ features, or more specifically:

\begin{equation}
    \x^j_{\Bar{S_j}} = \y^j_{\Bar{S_j}}
\end{equation}

and we assign strictly different values to the features in $S_j$ by selecting them from two distinct vectors $\z$ and $\z'$ such that:

%and will set strictly different assignments for the features in $S_j$ by choosing them from different vectors $\z$ and $\z'$ such that it holds that:

\begin{equation}
    \x^j=(\x^j_{\Bar{S_j}};\z^j_{S_j}) \neq (\x^j_{\Bar{S_j}};\z'^j_{S_j})= (\y^j_{\Bar{S_j}};\z'^j_{S_j})=\y^j
\end{equation}

We aim to construct $f$ such that it satisfies the following:

%We will want to construct $f$ so that it will satisfy that:

\begin{equation}
f(\x^j)=f(\x^j_{\Bar{S_j}};\z^j_{S_j}) \neq f(\x^j_{\Bar{S_j}};\z'^j_{S_j})= f(\y^j_{\Bar{S_j}};\z'^j_{S_j})=f(\y^j)
\end{equation}

%This can be done for instance by simply choosing the function $f$ such that $f(\x^j)=0$ and $f(\y^j)=1$. More particularly we will choose one specific 
%We in addition to these two features will add all of the $\{0,1\}^{|S|}$ combinations over 

Let us now assume that both $\x^j$ and $\y^j$ occur \emph{uniquely} in the dataset $\mathbf{D}$ across all $[n]$ coordinates --- that is, no other vector in $\mathbf{D}$ matches either of them on any coordinate. Under this assumption, we obtain the following relation:

%Let us assume now that both $\x^j$ and $\y^j$ appear \emph{uniquely} in the dataset $\textbf{D}$ (meaning there is no other vector in the dataset that is equal to them. Then we get the following relation:

\begin{equation}
\begin{aligned}
\label{to_ref_neq_equation}
\mathbb{E}_{\x\sim\mathcal{D}}[\textbf{Pr}_{\z\sim \mathcal{D}}(f(\z) \neq f(\x) \ | \  \z_{\Bar{S_j}}= \x_{\Bar{S_j}})]=\\
\frac{1}{\lvert \textbf{D} \rvert}\sum_{\x\in\textbf{D}}\Big(\frac{\sum_{\z \in \textbf{D}_{\Bar{S_j}}(\z,\x)}\mathbf{1}_{\{f(\z)\neq f(\x)\}}}{\textbf{D}_{\Bar{S_j}}(\z,\x)}\Big)=\frac{2}{2\lvert\textbf{D}\rvert}=\frac{1}{\lvert\textbf{D}\rvert}
\end{aligned}
\end{equation}

Here, the factor of $2$ appearing in both the numerator and denominator arises from the fact that only two vector pairs --- $(\x^j, \z^j)$ and $(\z^j, \x^j)$ --- contribute to the respective sums.

%Where the $2$ obtained in both the nominator and the denominator are the result of only two pairs of vecotrs $(\x^j,\z^j)$, and $(\z^j,\x^j)$ that are counted in the corresponding sums.

We repeat this construction for each $S_i$ in the set-cover instance, each time choosing two \emph{distinct} vectors $\x^i$ and $\y^i$ that differ on \emph{every} coordinate in $[n]$. In total, this yields $2m$ vectors that form the dataset $ \mathbf{D}$ from which we construct $ f$.

%We perform this same exact process for every $S_i$ in our set cover instance, each time choosing two \emph{distinct} vectors $\x^i$ and $\y^i$. Overall, we construct a set of $2m$ vecotrs in the dataset $\textbf{D}$ from which we construct $f$.

We then construct a neural network $f$ that simulates this function. This is achieved using a known result on \emph{neural network memorization}~\citep{vardi2022optimal}, which guarantees that a ReLU network capable of representing any finite set of input–output pairs can be built in polynomial time. Using this result, we construct $f$ accordingly, and set $\delta' := \frac{\delta}{|\mathbf{D}|}$.

We now argue that a subset $S_j$ is a global contrastive reason for $f$ under tolerance $\delta'$ if and only if $C$ is a partial set cover for $U$ under tolerance $\delta$. By Equation~\ref{to_ref_neq_equation}, each constructed pair of vectors $(\x^j, \y^j)$ contributes exactly $\frac{1}{|\mathbf{D}|}$ to the value term. Moreover, because each $S_j$ uses its own distinct pair of vectors in $\mathbb{Z}^n$, there is no overlap between these pairs. Consequently, the coverage achieved by any set cover $C$ corresponds exactly to the accumulated contribution of the selected subsets in the input domain of $f$. This completes the reduction.

%We then construct a neural network $f$ that will simulate this function: this can be done via a known result regarding \emph{neural network memorization}~\citep{vardi2022optimal}, where it is specifically a known result that one can construct a neural network in polynomial time to simulate any finite number of points. We hence construct $f$ in this way, and define via our construction $\delta':=\frac{\delta}{|\mathbf{D}|}$. We now claim that a subset $S_j$ is a global contrastive reason for $f$ under the tolerance $\delta'$ if and only if $C$ is a partial set cover for $U$ with the tolerance $\delta$. This holds true since due to the reason we have shown in Equation~\ref{to_ref_neq_equation}, every constructed pairs of vectors $\x^j,\y^j$ corresponds to an exact addition term of $\frac{1}{|\mathbf{D}|}$. Since for each $S_j$ we construct a distinct set of vectors in $\mathbb{Z}$, then there is no ``overlapp'' between such vectors so the coverageof any set cover $C$ would be equal to the coverage of any subset that is chosen within the input domain of $f$. This concludes the reduction.

$\qedsymbol{}$

%The problem: it may hold that

%\begin{equation}
%    Pr(\x)\neq Pr(\x_1)\cdot Pr(\x_2)\cdot \ldots Pr(\x_n)
%\end{equation}

%?

%which does not show tightness of approximation when assuming feature indpendence.

\section{Tractability Results for Orthogonal DNFs} \label{app:odnf}

\begin{lemma}
If $f$ is an orthogonal DNFs and the probability term $v^{g}_{\text{suff}}$ can be computed in polynomial time given the distribution $\mathcal{D}$ 
(which holds for independent distributions, among others),
then obtaining a subset-minimal \emph{global} $ \delta $-sufficient reason can be obtained in \emph{polynomial time}.
\end{lemma}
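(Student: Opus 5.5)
The plan mirrors the proof of Theorem~\ref{subset_minimal_dt_theor}, replacing root-to-leaf paths of a decision tree by the pairwise mutually exclusive terms of the orthogonal DNF. The algorithmic part is Algorithm~\ref{alg:subset-minimal} run with $v:=v^g_{\text{suff}}$. It performs at most $n$ iterations. Each iteration evaluates $v^g_{\text{suff}}(S\setminus\{i\})$ for at most $n$ candidates, so there are $\mathcal{O}(n^2)$ value-function queries in total. By the hypothesis of the lemma, each query takes polynomial time, so the whole run is polynomial.

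Correctness comes entirely from Proposition~\ref{globalmonotonicity_prop}: $v^g_{\text{suff}}$ is monotone non-decreasing under \emph{any} distribution, so no feature-independence assumption is needed here. The algorithm maintains the invariant $v^g_{\text{suff}}(S)\geq\delta$, which holds initially because $v^g_{\text{suff}}([n])=1\geq\delta$. It halts only when $v^g_{\text{suff}}(S\setminus\{i\})<\delta$ for every $i\in S$. Any proper subset $S'\subsetneq S$ satisfies $S'\subseteq S\setminus\{i\}$ for some $i\in S$, so monotonicity gives $v^g_{\text{suff}}(S')\leq v^g_{\text{suff}}(S\setminus\{i\})<\delta$. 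Hence $S$ is subset-minimal, exactly as in Proposition~\ref{subset_minimal_alg_conv_prop}.

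The one step that needs care is justifying the parenthetical claim that $v^g_{\text{suff}}$ is polynomial-time computable for orthogonal DNFs under independent distributions. I would first restrict to Boolean features, the natural setting for DNFs. The key identity is the one from the proof of Lemma~\ref{lem:mfd},
\[
v^g_{\text{suff}}(S)=\sum_{j\in\{0,1\}}\mathbb{E}_{\x_S}\bigl[\mathbf{Pr}(f=j\mid \x_S)^2\bigr].
\]
Orthogonality makes $\mathbf{Pr}(f=1\mid\x_S)$ a sum over terms $t$ of $\mathbf{Pr}(t\mid\x_S)$. Under independence, $\mathbf{Pr}(t\mid\x_S)$ factors into $\mathbb{I}[\x_S\text{ consistent with }t_S]\cdot\prod_{\ell\in t,\,\mathrm{var}(\ell)\notin S}p(\ell)$.

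Expanding the square gives a double sum over pairs of terms $(t,t')$. Each summand is
\[
\mathbf{Pr}(t_S\wedge t'_S)\cdot\prod_{\ell\in t,\,\mathrm{var}(\ell)\notin S}p(\ell)\cdot\prod_{\ell\in t',\,\mathrm{var}(\ell)\notin S}p(\ell),
\]
where $\mathbf{Pr}(t_S\wedge t'_S)$ is zero if the restrictions conflict and otherwise a product of independent marginals. This yields $\mathcal{O}(m^2n)$ work for $m$ terms, mirroring the leaf-pair enumeration used for decision trees.

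For $j=0$ I would use $\mathbf{Pr}(f=0\mid\x_S)=1-\mathbf{Pr}(f=1\mid\x_S)$. Then $\mathbb{E}[(1-P)^2]=1-2\,\mathbb{E}[P]+\mathbb{E}[P^2]$, and $\mathbb{E}[P]=\mathbf{Pr}(f=1)=\sum_t\mathbf{Pr}(t)$, so this case needs no complement DNF. I expect this bookkeeping — in particular that conditioning on $\x_S$ and orthogonality interact cleanly, which holds because disjoint events stay disjoint under conditioning — to be the main point to verify. Once it is checked, the lemma follows.
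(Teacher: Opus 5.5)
Your proposal is correct and follows the paper's route. Algorithm~\ref{alg:subset-minimal} is correct by the distribution-free monotonicity of $v^g_{\text{suff}}$, and $v^g_{\text{suff}}$ is evaluated by enumerating pairs of terms $(t,t')$ whose restrictions to $S$ are consistent, each contributing $\mathbf{Pr}(t_S\wedge t'_S)\,\mathbf{Pr}(t_{\bar S})\,\mathbf{Pr}(t'_{\bar S})$. Your version is more complete than the paper's proof, whose sum over term pairs gives only the $f=1$ agreement mass $\mathbb{E}[P^2]$ with $P=\mathbf{Pr}(f=1\mid\mathbf{x}_S)$; your identity $\mathbb{E}[(1-P)^2]=1-2\,\mathbb{E}[P]+\mathbb{E}[P^2]$, with $\mathbb{E}[P]=\sum_t\mathbf{Pr}(t)$, supplies the missing $f=0$ contribution without needing a (possibly exponentially large) DNF for $\neg f$.
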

\begin{proof}
We begin by defining orthogonal DNFs~\citep{crama2011boolean} before presenting the full proof.
\begin{definition}[Orthogonal DNFs]
A Boolean function $\varphi$ is in \emph{orthogonal disjunctive normal form} (orthogonal DNF)~\citep{crama2011boolean}
if it is a disjunction of terms $T_1 \lor T_2 \lor \dots \lor T_m$, such that for every pair of distinct terms $T_i$ and $T_j$ ($i \neq j$):
\[
T_i \land T_j \models \bot
\]
This means that no single variable assignment can satisfy more than one term simultaneously.
\end{definition}
The proof mirrors the argument used in Theorem~\ref{subset_minimal_dt_theor}, relying on the fact that $v^{g}_{\text{suff}}$ can be computed in polynomial time for any set $S$.

For a given set $S$, let $\mathcal{C}^{S}{T_i}$ be the constraints that the term $T_i$ imposes on the variables in $S$. We enumerate all pairs of terms $(T_i, T_j)$. Any pair whose constraints on $S$ are \emph{inconsistent}
(i.e., $\mathcal{C}^{S}_{T_i} \land \mathcal{C}^{S}_{T_j} \models \bot$) is ignored, since it contributes zero to $v^{g}{\text{suff}}$.

%The proof follows the same reasoning as Theorem~\ref{subset_minimal_dt_theor} by showing that computing 
%$v^{g}_{\text{suff}}$ takes polynomial time for an arbitrary set $S$.
%
%For an arbitrary set $S$, let $\mathcal{C}^{S}_{T_i}$ denote the constraints on features in $S$ imposed by the term $T_i$.
%We enumerate all pairs of terms $(T_i, T_j)$. Pairs with inconsistent constraints on $S$ 
%(i.e., $\mathcal{C}^{S}_{T_i} \land \mathcal{C}^{S}_{T_j} \models \bot$) are discarded, as they contribute 0 to $v^{g}_{\text{suff}}$.
%

For each \emph{consistent} pair of terms $(T_i, T_j)$, we evaluate
$\mathbf{Pr}(\x_{S} \in \mathcal{C}^{S}_{T_i, T_j})$, where
$\mathcal{C}^{S}_{T_i, T_j} = \mathcal{C}^{S}_{T_i} \cap \mathcal{C}^{S}_{T_j}$.
The contribution of this pair to $v^{g}_{\text{suff}}$ is then
\[
\mathbf{Pr}(\x_{S} \in \mathcal{C}^{S}_{T_i, T_j}) \cdot \mathbf{Pr}(\x_{\bar{S}} \in \mathcal{C}^{\bar{S}}_{T_i}) \cdot \mathbf{Pr}(\x_{\bar{S}} \in \mathcal{C}^{\bar{S}}_{T_j}).
\]

Under feature independence, these probabilities factor as
$\Pr(\x_{S}) = \prod_{i\in S} p(x_i)$.
Since there are at most $m^2$ such term pairs, the value of $v^{g}_{\text{suff}}$
can be computed in polynomial time under an independent distribution $\mathcal{D}$.
%For each consistent pair $(T_i, T_j)$, we compute $\mathbf{Pr}(\x_{S} \in \mathcal{C}^{S}_{T_i, T_j})$,
%where $\mathcal{C}^{S}_{T_i, T_j} = \mathcal{C}^{S}_{T_i} \cap \mathcal{C}^{S}_{T_j}$,
%then the contribution of $(T_i, T_j)$ to $v^{g}_{\text{suff}}$ is:
%\[
%\mathbf{Pr}(\x_{S} \in \mathcal{C}^{S}_{T_i, T_j}) \cdot \mathbf{Pr}(\x_{\bar{S}} \in \mathcal{C}^{\bar{S}}_{T_i}) \cdot \mathbf{Pr}(\x_{\bar{S}} \in \mathcal{C}^{\bar{S}}_{T_j}).
%\]
%
%Since features are independent, $\mbf{Pr}(\x_{S}) = \prod_{i\in S}p(x_i)$.
%There are in total $m^2$ pairs of terms, so the probability term $v^{g}_{\text{suff}}$
%can be computed in polynomial time given the independent distribution $\mathcal{D}$.
\end{proof}

\section{Disclosure: Usage of LLms}
\label{app:llm-disclosure}
An LLM was used exclusively as a writing assistant to refine grammar and typos and improve clarity. It did not contribute to the generation of research ideas, study design, data analysis, or interpretation of results, all of which were carried out solely by the authors.

\end{document}